\pdfoutput=1
\documentclass{new_tlp}

\usepackage{amsfonts,amsmath}
\usepackage{microtype}
\usepackage[arxiv]{preamble}
\bibliographystyle{acmtrans}

\title[A Logical Characterization of the Preferred Models of LPODs]{%
  A Logical Characterization of the Preferred Models of
  Logic Programs with Ordered Disjunction}

\author[A. Charalambidis, P. Rondogiannis and A. Troumpoukis]{%
  A. CHARALAMBIDIS, P. RONDOGIANNIS, and A. TROUMPOUKIS\\
    Department of Informatics and Telecommunications, \\
    National and Kapodistrian University of Athens \\
    \email{\{a.charalambidis, prondo, antru\}@di.uoa.gr}
}

\pagerange{\pageref{firstpage}--\pageref{lastpage}}

\begin{document}

\maketitle

\begin{abstract}
Logic Programs with Ordered Disjunction (LPODs) extend classical logic
programs with the capability of expressing alternatives with decreasing degrees
of preference in the heads of program rules. Despite the fact
that the operational meaning of ordered disjunction is clear, there exists an
important open issue regarding its semantics. In particular, there does not exist
a \emph{purely model-theoretic} approach for determining the \emph{most preferred models}
of an LPOD. At present, the selection of the most preferred models is performed
using a technique that is not based exclusively on the models of the program and in certain
cases produces counterintuitive results. We provide a novel, model-theoretic semantics
for LPODs, which uses an additional truth value in order to identify the most preferred
models of a program. We demonstrate that the proposed approach overcomes the shortcomings of
the traditional semantics of LPODs. Moreover, the new approach can be used to define
the semantics of a natural class of logic programs that can have both ordered and
classical disjunctions in the heads of clauses. This allows programs that can
express not only strict levels of preferences but also alternatives that are
equally preferred.
\ifarxiv%
\\
\noindent
This work is under consideration for acceptance in TPLP.%
\fi
\end{abstract}

\begin{keywords}
Ordered Disjunction, Answer Sets, Logic of Here-and-There, Preferences.
\end{keywords}

\section{Introduction}\label{intro}
Logic Programs with Ordered Disjunction (LPODs) extend classical logic programs
with the capability of expressing ordered alternatives in the heads of program
rules. In particular, LPODs allow the head of a program rule to be a formula
$C_1 \times \cdots \times C_n$, where ``$\times$'' is a propositional
connective called ``ordered disjunction'' and the $C_i$'s are literals. The
intuitive explanation of $C_1 \times \cdots \times C_n$ is \qemph{I prefer $C_1$;
however, if $C_1$ is impossible, I can accept $C_2$; $\cdots$; if all of
$C_1,\ldots,C_{n-1}$ are impossible, I can accept $C_n$}. Due to their simplicity
and expressiveness, LPODs are a widely accepted formalism for preferential reasoning,
both in logic programming and in artificial intelligence.

At present, the semantics of LPODs is defined~\cite{lpod-brewka,lpod-BNS04} based on
the answer set semantics, using a two-phase procedure. In the first phase, the answer
sets of the LPOD are produced. This requires a modification of the standard definition
of answer sets. In the second phase, the answer sets are
\quotes{filtered}, and we obtain the set of {\qemph{most preferred}} answer
sets, which are taken as the meaning of the initial program. Notice that both
phases are not purely model-theoretic: the first one requires the construction of the
reduct of the program and the second one is performed using the so-called
\qemph{degree of satisfaction of rules}, a concept that relies on
examining the rules of the program to justify the selection of the most
preferred answer sets. Apart from its logical status, the current semantics of
LPODs produces in certain cases counterintuitive most preferred answer sets. This
discussion leads naturally to the question: \qemph{Is it possible to specify the
semantics of LPODs in a purely model-theoretic way?}.

An important first step in this direction was performed by \citeN{lpod-cabalar},
who used Equilibrium Logic~\cite{Pearce96} to logically characterize the answer
sets produced in the first phase described above. However, to our knowledge, the
second phase (namely the selection of the most preferred answer sets), has never
been justified model-theoretically. We consider this as an important shortcoming
in the theory of LPODs. Apart from its theoretical interest, this question also
carries practical significance, because, as we are going to see, the present
formalization of the second phase produces in certain cases counterintuitive
(and in our opinion undesirable) results. The main contribution of the present
paper is to \emph{provide a purely model-theoretic characterization of the
semantics of LPODs}. The more specific contributions are the following:
\begin{itemize}

  \item We propose a new semantics for LPODs which uses an additional truth
        value in order to select as most preferred models those in which a top
        preference fails only if it is \emph{impossible} to be satisfied. We
        demonstrate that the proposed approach overcomes the shortcomings of the
        traditional semantics of LPODs. In this way, the most preferred models
        of an LPOD can be characterized by a preferential ordering of its
        models.

  \item We demonstrate that our approach can be seamlessly extended to programs
        that allow both ordered and classical disjunctions in the heads of
        clauses. In particular, we define a natural class of such programs and
        demonstrate that all our results about LPODs transfer, with minimal
        modifications, to this new class. In this way we provide a clean
        semantics for a class of programs that can express not only strict
        levels of preference but also alternatives that are equally preferred.

\end{itemize}
Section~\ref{lpods-basic-definitions} introduces LPODs and gives relevant
background. Sections~\ref{lpods-shortcomings} and~\ref{intuitive-overview}
describe the shortcomings of LPOD semantics and give an intuitive presentation
of the proposed approach for overcoming these issues. The remaining sections
give a technical exposition of our results. The proofs of all results have been
moved to corresponding appendices.

\section{Background on LPODs}\label{lpods-basic-definitions}
Logic programs with ordered disjunction are an extension of the logic programs
introduced by \citeN{GL91}, called \emph{extended logic programs}, which support
two types of negation: default (denoted by $\mathit{not}$) and strong (denoted
by $\neg$). Strong negation is useful in applications but it is not very essential
from a semantics point of view: a literal $\neg A$ is semantically treated as an
atom $A'$. For the basic notions regarding extended logic programs, we assume
some familiarity with the work of \citeN{GL91}.
\begin{definition}\label{lpod}
A (propositional) LPOD is a set of rules of the form:
\[
  C_1 \times \cdots \times C_n \leftarrow A_1,\ldots,A_m,{\pnot B_1},\ldots,{\pnot B_k}
\]
where the $C_i, A_j$, and $B_l$ are ground literals.
\end{definition}
The intuitive explanation of a formula $C_1 \times \cdots \times C_n$ is
\qemph{I prefer $C_1$; however, if $C_1$ is impossible, I can accept $C_2$;
$\cdots$; if all of $C_1,\ldots,C_{n-1}$ are impossible, I can accept $C_n$}.

An interpretation of an LPOD is a set of literals. An interpretation is called
\emph{consistent} if there does not exist any atom $A$ such that both $A$ and $\neg A$
belong to $I$. The notion of model of an LPOD is defined as follows.
\begin{definition}\label{brewka-model}
Let $P$ be an LPOD and $M$ an interpretation. Then, $M$ is a model
of $P$ iff for every rule
\[
  C_1 \times \cdots \times C_n \leftarrow A_1, \ldots, A_m, \pnot B_1, \ldots, \pnot B_k
\]
if $\{ A_1, \ldots, A_m \} \subseteq M$ and $\{ B_1, \ldots, B_k \}\cap M = \emptyset$
then there exists $C_i \in M$.
\end{definition}

To obtain the preferred answer sets of an LPOD, a two-phase procedure was
introduced by \citeN{lpod-brewka}. In the first phase, the answer sets
of the LPOD are produced. This requires a modification of the standard
definition of answer sets for extended logic programs. In the second phase, the
answer sets are \quotes{filtered}, and we obtain the set of \quotes{most
preferred} ones. The first phase is formally defined as follows.
\begin{definition}\label{brewka-reduct}
Let $P$ be an LPOD. The $\times$-reduct of a rule $R$ of $P$ of the
form:
\[
  C_1 \times \cdots \times C_n \leftarrow A_1,\ldots,A_m,{\pnot B_1},\ldots,{\pnot B_k}
\]
with respect to a set of literals $I$, is denoted by $R_{\times}^I$ and is
defined as follows:
\[
  R_{\times}^I=\{ C_i \leftarrow A_1,\ldots,A_m \mid C_i \in I \mbox{ and }
                  I \cap \{C_1,\ldots,C_{i-1},B_1,\ldots,B_k\} = \emptyset \}
\]
The $\times$-reduct of $P$ with respect to $I$ is denoted by $P_{\times}^I$ and
is the union of the reducts $R_{\times}^I$ for all $R$ in $P$.
\end{definition}
\begin{definition}\label{brewka-answerset}
A set $M$ of literals is an answer set of an LPOD $P$ if $M$ is a consistent
model of $P$ and $M$ is the least model of $P_{\times}^M$.
\end{definition}

The second phase produces the \quotes{most preferred} answer sets using the
notion of the \emph{degree of satisfaction of a rule}. Formally:
\begin{definition}\label{degree-of-satisfaction}
Let $M$ be an answer set of an LPOD $P$. Then, $M$ satisfies the rule:
\[
  C_1 \times \cdots \times C_n \leftarrow A_1,\ldots,A_m,{\pnot B_1},\ldots,{\pnot B_k}
\]
\begin{itemize}
  \item to degree $1$ if $A_j \not\in M$, for some $j$, or $B_i\in M$, for some
        $i$,

  \item to degree $l$, $1\leq l \leq n$, if all $A_j \in M$, no $B_i \in M$, and
        $l=\min\{r \mid C_r \in M\}$
\end{itemize}
The degree of a rule $R$ in the answer set $M$ is denoted by $\mathit{deg}_M(R)$.
\end{definition}

The satisfaction degrees of rules are then used to define a preference relation
on the answer sets of a program. Given a set of literals $M$, let
$M^i(P) = \{R \in P \mid \mathit{deg}_M(R) = i\}$. The preference relation is
defined as follows.
\begin{definition}\label{inclusion-preferred}
  Let $M_1,M_2$ be answer sets of an LPOD $P$. Then, $M_1$ is
  \emph{inclusion-preferred} to $M_2$ iff there exists $k\geq 1$ such that
  $M_2^k(P) \subset M_1^k(P)$, and for all $j < k$, $M_2^j(P) = M_1^j(P)$.
\end{definition}
\begin{example}\label{first-example}
Consider the program:
\[
\mbox{\tt wine $\times$ beer.}
\]
The above program has the answer sets $\{\mbox{\tt wine}\}$ and $\{\mbox{\tt
beer}\}$. The most preferred one is $\{\mbox{\tt wine}\}$ because it satisfies
the unique fact of the program with degree 1 (while the answer set $\{\mbox{\tt
beer}\}$ satisfies the fact with degree 2). Consider now the program:
\[
\begin{array}{l}
\mbox{\tt wine $\times$ beer.} \\
\mbox{\tt $\pneg$wine.}
\end{array}
\]
This has the unique answer set $\{\mbox{\tt beer}\}$ (the set $\{\mbox{\tt
wine,$\neg$wine}\}$ is rejected due to its inconsistency). Therefore,
$\{\mbox{\tt beer}\}$ is also the most preferred answer set.
\end{example}

Notice that \citeN{lpod-brewka} originally defined only the preference relation of
Definition~\ref{inclusion-preferred}. In the follow-up
paper~\cite{lpod-BNS04} two more preference relations were introduced, namely
the \emph{cardinality} and the \emph{Pareto}, in order to treat cases for which
the inclusion preference did not return the expected results. All these
relations do not rely exclusively on the models of the source program, and
are therefore subject to similar criticism. For this reason, in this paper
we focus attention on the inclusion preference relation.

\section{Some Issues with the Semantics of LPODs}\label{lpods-shortcomings}
From a foundational point of view, the main issue with the semantics of LPODs is
that, in its present form, it is not purely model-theoretic. Despite the simplicity and
expressiveness of ordered disjunction, one can not characterize the meaning of a
program by just looking at its set of models. Recall that this principle is
one of the cornerstones of logic programming since its inception: the meaning of
positive logic programs is captured by their minimum Herbrand model~\cite{EmdenK76}
and the meaning of extended logic programs is captured by their equilibrium
models~\cite{Pearce96}. How can the most preferred models of LPODs be captured
model-theoretically? The existing issues regarding the semantics of LPODs are
illustrated by the following examples.

\subsection{The Logical Status of LPODs}\label{first-problem}
Consider the following two programs:
\[\mbox{\tt a $\times$ b.}\]
and:
\[\mbox{\tt b $\times$ a.}\]
According to Definition~\ref{brewka-model}, both programs have exactly the same models,
namely $\{\mbox{\tt a}\}$, $\{\mbox{\tt b}\}$, and $\{\mbox{\tt a$,$b}\}$. Moreover,
both have the same answer sets, namely $\{\mbox{\tt a}\}$ and $\{\mbox{\tt b}\}$.
However, there is no model-theoretic explanation (namely one based on just the
sets of models of the programs) of why the most preferred model of the first program is
$\{\mbox{\tt a}\}$ while the most preferred model of the second program is
$\{\mbox{\tt b}\}$. As a conclusion, in order for the semantics of LPODs to be properly
specified, a model-based approach needs to be devised.

\subsection{Inaccurate Preferential Ordering of Answer Sets}\label{second-problem}
Apart from the fact that Definition~\ref{degree-of-satisfaction} is not purely
model-theoretic, in many cases it also gives inaccurate preferential orderings
of answer sets. Such inaccurate orderings have already been reported in the
literature~\cite{lpod-crprolog}. The following program illustrates one such
simple case.
\begin{example}\label{mercedes-vs-bmw}
Consider the following program\footnote{Our example is identical (up to variable
renaming) to an example given by \citeN{lpod-crprolog}. This work was brought to
our attention by one of the reviewers of the present paper.} whose declarative
reading is \qemph{I prefer to buy a Mercedes than a BMW. In case a Mercedes is
available, I prefer a gas model to a diesel one. A gas model (of Mercedes) is
not available}.
\[
  \begin{array}{l}
  \mbox{\tt mercedes $\times$ bmw.}\\
  \mbox{\tt gas\_mercedes $\times$ diesel\_mercedes $\leftarrow$ mercedes.}\\
  \mbox{\tt $\pneg$gas\_mercedes.}
  \end{array}
\]
The program has two answers sets:
$M_1 =\{\mbox{\tt mercedes}, \mbox{\tt diesel\_mercedes}, \mbox{\tt $\pneg$gas\_mercedes}\}$
and $M_2 = \{\mbox{\tt bmw}, \mbox{\tt $\pneg$gas\_mercedes}\}$. $M_1$ satisfies the first
rule with degree~1, the second rule with degree~2, and the third rule with
degree~1. $M_2$ satisfies the first rule with degree~2, the second rule with
degree~1 (because the body of the rule evaluates to false), and the third rule
with degree~1. According to Definition~\ref{inclusion-preferred}, the two answer
sets are incomparable. However, it seems reasonable that the most preferred
model is $M_1$: the first rule, which is a fact, specifies unconditionally a
preference; the preferences of the second rule seem to be secondary,
because they depend on the choice that will be made in the first rule.
\end{example}

The problems in the above example appear to be related to
Definition~\ref{degree-of-satisfaction}: it assigns degree~1 in two cases that
are apparently different: a rule that has a false body gets the same degree of
satisfaction as a rule with a true body in whose head the first choice is
satisfied. These are two different cases which, however, it is not obvious how
to handle if we follow the satisfaction degree approach of
Definition~\ref{degree-of-satisfaction}.

\subsection{Unsatisfiable Better Options}\label{third-problem}
It has been remarked \cite[discussion in page~342]{lpod-BNS04} that the
inclusion-preference is sensitive to the existence of \emph{unsatisfiable better
options}. The following example, taken from the paper by \citeN{lpod-BNS04},
motivates this problem.
\begin{example}\label{hotels-example}
Assume we want to book accommodation for a conference (in the post-COVID era).
We prefer a 3-star hotel from a 2-star hotel. Moreover, we prefer to be in
walking distance from the conference venue. This can be modeled by the program:
\[
  \begin{array}{l}
    \mbox{\tt walking $\times$ $\neg$walking.}\\
    \mbox{\tt 3-stars $\times$ 2-stars.}
  \end{array}
\]
Consider now the scenario where the only available 3-star hotel (say
$\mathit{hotel}_1$), is not in walking distance. Moreover, assume that the only
available 2-star hotel (say $\mathit{hotel}_2$), happens to be in walking
distance. According to Definition~\ref{inclusion-preferred}, these two options
are incomparable because $\mathit{hotel}_1$ satisfies the first rule to degree~2
and the second rule to degree~1, while $\mathit{hotel}_2$ satisfies the first
rule to degree~1 and the second rule to degree~2.

Assume now that the above program is revised after learning that there also
exists a 4-star hotel, which however is not an option for us (due to
restrictions imposed by our funding agencies). The new program is:
\[
  \begin{array}{l}
    \mbox{\tt walking $\times$ $\neg$walking.}\\
    \mbox{\tt 4-stars $\times$ 3-stars $\times$ 2-stars.}\\
    \mbox{\tt $\pneg$4-stars.}
  \end{array}
\]
In the new program, $\mathit{hotel}_1$ satisfies the first rule to degree~2 and
the second rule to degree~2, while $\mathit{hotel}_2$ satisfies the first rule
to degree~1 and the second rule to degree 3. According to
Definition~\ref{inclusion-preferred}, $\mathit{hotel}_2$ is our preferred
option.
\end{example}

The above example illustrates that under the \quotes{degree of satisfaction of
rules} semantics, a small (and seemingly innocent) change in the program, can
cause a radical change in the final preferred model. This sensitivity to changes
is another undesirable consequence that stems from the fact that the second
phase of the semantics of LPODs is not purely model-theoretic.

\section{An Intuitive Overview of the Proposed Approach}\label{intuitive-overview}
The main purpose of this paper is to define a model-theoretic semantics for
LPODs. In other words, we would like to be able to choose the most preferred
answer sets of a program using preferential reasoning on the answer sets
themselves. Actually, such an approach should also be applicable directly on the
models of the source program, without the need to first construct the answer
sets of the program. We would expect that such an approach would also provide
solutions to the shortcomings of the previous section.

But on what grounds can we compare the answer sets (or, even better, the models)
of a program and decide that some of them satisfy in a better way our
preferences than the others? This seems like an impossible task because the
answer sets (or the models) do not contain any information related to the
ordered disjunction preferences of the program.

It turns out that we can introduce preferential information inside the answer sets of
a program by slightly tweaking the underlying logic. The answer sets of extended logic
programs are two-valued, ie., a literal is either $T$ (true) or $F$ (false).
We argue that in order to properly define the semantics of LPODs, we need a third
truth value, which we denote by $F^*$. The intuitive reading of $F^*$ is
\qemph{impossible to make true}.

To understand the need for $F^*$, consider again the intuitive meaning of
$C_1 \times C_2$: we prefer $C_2$ \emph{only if it is impossible for us to get}
$C_1$. Impossible here means that if we try to make $C_1$ true, then the
interpretation will become inconsistent. Therefore, we seem to need two types of
false, namely $F$ and $F^*$: $F$ means \qemph{false by default} while $F^*$
means \qemph{impossible to make true}. The following example demonstrates these
issues.
\begin{example}
Consider the program:
\[
  \begin{array}{l}
    \mbox{\tt wine $\times$ beer.} \\
    \mbox{\tt $\pneg$wine.}
  \end{array}
\]
As we are going to see in the coming sections, the most preferred answer set
according to our approach is $\{({\tt wine},F^*),({\tt beer},T), (\pneg {\tt
wine},T)\}$. Notice that {\tt wine} receives the value $F^*$ because if we tried
to make {\tt wine} equal to $T$, the interpretation would become inconsistent
(because $\pneg${\tt wine} is $T$). Notice also that, as we are going to see, the
interpretation $\{({\tt wine},F),({\tt beer},T),(\pneg {\tt wine},T)\}$ is not a
model of the program.
\end{example}

The above discussion suggests the following semantics for ``$\times$'' in the
proposed logic. Let $u,v \in \{F,F^*,T\}$. Then:
\[
    u \times v = \left\{ \begin{array}{ll}
                            v, & \mbox{if $u = F^*$}\\
                            u, & \mbox{otherwise}
                          \end{array}\right.
\]
The intuition of the above definition is that we return the value $v$ \emph{only
if it is impossible to satisfy} $u$; in all other cases, we return $u$.

We now get to the issue of how we can use the value $F^*$ to distinguish the
most preferred answer sets: we simply identify those answer sets that are
minimal with respect to their sets of atoms that have the value $F^*$. As we
have mentioned, the value $F^*$ means \qemph{impossible to make true}. By
minimizing with respect to the $F^*$ values, \emph{we only keep those answer sets
in which a top preference fails only if it is impossible to be satisfied}.

The above discussion gives a description of how we can select the \qemph{most
preferred (three-valued) answer sets} of an LPOD. However, it is natural to
wonder whether it is possible to also characterize the answer sets
model-theoretically, completely circumventing the construction of the reduct. A
similar question was considered by \citeN{lpod-cabalar}, who demonstrated that
the (two-valued) answer sets of an LPOD coincide with the equilibrium models of
the program. We adapt \citeANP{lpod-cabalar}'s characterization to fit in our
setting. More specifically, we extend our three-valued logic to a four-valued
one by adding a new truth value $T^*$, whose intuitive meaning is \qemph{not
false but its truth can not be established}. We then demonstrate that the
three-valued answer sets of an LPOD $P$ are those models of $P$ that are minimal
with respect to a simple ordering relation and do not contain any $T^*$ values.
In this way we get a two-step, purely model-theoretic characterization of the
most-preferred models of LPODs: in the first step we use the $T^*$ values as a
yardstick to identify those models that correspond to answer-sets, and in the
second step we select the most preferred ones, by minimizing with respect to the
$F^*$ values.

Finally, we consider the problem of characterizing the semantics of logic
programs that contain both disjunctions and ordered disjunctions in the heads of
rules. This is especially useful in cases where some of our preferences are
equally important. For example:
\[
  \begin{array}{l}
    \mbox{\tt (wine $\vee$ beer) $\times$ (soda $\vee$ juice).}
  \end{array}
\]
states that {\tt wine} and {\tt beer} are our top preferences (but we have
no preference among them), and {\tt soda} and {\tt juice} are our secondary
preferences. We consider the class of programs in which the heads of rules
consist of ordered disjunctions where each ordered disjunct is an ordinary
disjunction (as in the above program). We demonstrate that the theory of
these programs is very similar to that of LPODs. All our results for
LPODs transfer with minimal modifications to this extended class of programs. This
suggests that this is a natural class of programs that possibly deserves further
investigation both in theory and in practice.

\section{Redefining the Answer Sets of LPODs}\label{lpod-revised-answersets}
In this section we provide a new definition of the answer sets of LPODs. The new
definition is based on a three-valued logic which allows us to discriminate the
most preferred answer sets using a purely model-theoretic approach. In
Section~\ref{lpod-logical-characterization} we demonstrate that by extending the
logic to a four-valued one, we can identify directly the most preferred models
of a program (without first producing the answer sets).

\begin{definition}\label{sigma-def}
Let $\Sigma$ be a nonempty set of propositional literals. The set of well-formed
formulas is inductively defined as follows:
\begin{itemize}
\item Every element of $\Sigma$ is a well-formed formula,

\item The 0-place connective $F^*$ is a well-formed formula,

\item If $\phi_1$ and $\phi_2$ are well-formed formulas, then $(\phi_1 \wedge
      \phi_2)$, $(\phi_1 \vee \phi_2)$, $(\pnot \phi_1)$, $(\phi_1 \leftarrow \phi_2)$,
      and $(\phi_1 \times \phi_2)$, are well-formed formulas.
\end{itemize}
\end{definition}
The meaning of formulas is defined over the set of truth values $\{F,F^*,T\}$
which are ordered as $F < F^* < T$. Given two truth values $v_1,v_2$, we write
$v_1 \leq v_2$ iff either $v_1 < v_2$ or $v_1 = v_2$.
\begin{definition}\label{interpretation-and-semantics}
A (three-valued) interpretation $I$ is a function from $\Sigma$ to the set
$\{F,F^*,T\}$. We can extend $I$ to apply to formulas, as follows:
\[
\begin{array}{lll}
    I(F^*)        & = & F^*\\
    I(\pnot\phi)  & = &
      \left\{ \begin{array}{ll}
                T, & \mbox{if $I(\phi)\leq F^*$}\\
                F, & \mbox{otherwise}
              \end{array}\right. \\
    I(\phi \leftarrow \psi)  & = &
      \left\{ \begin{array}{ll}
                T, & \mbox{if $I(\phi) \geq I(\psi)$}\\
                F, & \mbox{otherwise}
              \end{array}\right. \\
I(\phi_1 \wedge \phi_2) & = & \min\{I(\phi_1),I(\phi_2)\}\\
I(\phi_1 \vee   \phi_2) & = & \max\{I(\phi_1),I(\phi_2)\}\\
I(\phi_1 \times \phi_2) & = &
      \left\{ \begin{array}{ll}
                I(\phi_2), & \mbox{if $I(\phi_1) = F^*$}\\
                I(\phi_1), & \mbox{otherwise}
              \end{array}\right. \\
\end{array}
\]
%
%
\end{definition}

It is straightforward to see that the meanings of ``$\vee$'', ``$\wedge$'', and
``$\times$'' are associative and therefore we can write $I(\phi_1 \vee \cdots
\vee \phi_n)$, $I(\phi_1 \wedge \cdots \wedge \phi_n)$, and $I(\phi_1 \times
\cdots \times \phi_n)$ unambiguously (without the need of extra parentheses).
Moreover, given literals $C_1,\ldots,C_n$ we will often write
$I(C_1,\ldots,C_n)$ instead of $I(C_1\wedge \cdots \wedge C_n)$.

The ordering $<$ (respectively, $\leq$) on truth values extends in the standard
way on interpretations: given interpretations $I_1,I_2$ we write $I_1 < I_2$
(respectively, $I_1 \leq I_2$), if for all literals
$L \in \Sigma$, $I_1(L) < I_2(L)$ (respectively, $I_1(L) \leq I_2(L)$).

%
%
%
%
%

When we consider interpretations of an LPOD program, we assume that the
underlying set $\Sigma$ is the set of literals of the program. The following
definition will be needed.
\begin{definition}\label{three-valued-model}
An interpretation $I$ is a \emph{model} of an LPOD $P$ if every rule of $P$
evaluates to $T$ under $I$. An interpretation $I$ of $P$ is called
\emph{consistent} if there do not exist literals $A$ and $\neg A$ in $P$ such
that $I(A) = I(\neg A) = T$.
\end{definition}
We can now give the new definitions for \emph{reduct} and \emph{answer sets} for
LPODs.
\begin{definition}\label{lpod-reduct}
Let $P$ be an LPOD. The $\times$-reduct of a rule $R$ of $P$ of the
form:
\[
  C_1 \times \cdots \times C_n \leftarrow A_1,\ldots,A_m,{\pnot B_1},\ldots,{\pnot B_k}
\]
with respect to an interpretation $I$, is denoted by $R_{\times}^I$ and is
defined as follows:
\begin{itemize}
\item If $I(B_i) = T$ for some $i$, $1 \leq i \leq k$, then $R_{\times}^I$ is
      the empty set.

\item If $I(B_i) \neq T$ for all $i$, $1 \leq i \leq k$, then $R_{\times}^I$ is
      the set that contains the rules:
      \[
        \begin{array}{lll}
          C_1     & \leftarrow & F^*,A_1,\ldots,A_m \\
                  & \cdots     &    \\
          C_{r-1} & \leftarrow & F^*,A_1,\ldots,A_m \\
          C_r     & \leftarrow & A_1,\ldots,A_m
        \end{array}
      \]
      where $r$ is the least index such that $I(C_1) = \cdots = I(C_{r-1})= F^*$
      and either $r=n$ or $I(C_r) \neq F^*$.
\end{itemize}
The $\times$-reduct of $P$ with respect to $I$ is denoted by $P_{\times}^I$ and
is the union of the reducts $R_{\times}^I$ for all $R$ in $P$.
\end{definition}

The major difference of the above definition from that of
Definition~\ref{brewka-reduct}, are the clauses of the form
$C_i  \leftarrow F^*,A_1,\ldots,A_m$. These clauses are included so as that the
value $F^*$ can be produced for $C_i$ when $I(A_1)= \cdots = I(A_m) =T$. Notice
that if these clauses did not exist, there would be no way for the value $F^*$
to be produced by the reduct.
\begin{definition}\label{answer-set}
Let $P$ be an LPOD and $M$ an interpretation of $P$. We say that $M$ is
a (three-valued) \emph{answer set} of $P$ if $M$ is consistent and it is
the $\leq$-least model of $P_{\times}^M$.
\end{definition}
Notice that the least model of $P_{\times}^M$ in the above definition, can be
constructed using the following immediate consequence operator $T_{P_{\times}^M}:
(\Sigma \rightarrow \{F,F^*,T\})\rightarrow (\Sigma \rightarrow \{F,F^*,T\})$:
\[
  T_{P_{\times}^M}(I)(C) = \max\{I(B_1,\ldots,B_n) \mid
                                 (C \leftarrow B_1,\ldots,B_n)  \in P_{\times}^M \}
\]
Notice that since the set $\{F,F^*,T\}$ is a complete lattice under the ordering
$\leq$, it is easy to see that the set of interpretations is also a complete
lattice under the ordering $\leq$. Moreover, the operator $T_{P_{\times}^M}$ is
monotonic over the complete lattice of interpretations; this follows from the
fact that the meanings of conjunction (namely, $\min$) and that of
disjunction (namely, $\max$), are monotonic. Then, by Tarski's
fixed-point theorem, $T_{P_{\times}^M}$ has a least fixed-point, which can be
easily shown to be the $\leq$-least model of $P_{\times}^M$.

The following lemma guarantees that our definition is a generalization of the
well-known one for extended logic programs~\cite{GL91}.
\begin{lemma}\label{new-answer-sets-coincide}
Let $P$ be a consistent extended logic program. Then the three-valued answer
sets of $P$ coincide with the standard answer sets of $P$.
\end{lemma}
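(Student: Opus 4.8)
The plan is to show that, for a consistent extended logic program $P$, the truth value $F^*$ plays no role whatsoever, so that the three-valued answer sets of $P$ are exactly the $\{F,T\}$-valued interpretations whose $T$-part is a standard answer set of $P$. The whole argument is an unwinding of the definitions, the one real content being that $F^*$ can never be \emph{produced}.

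First I would observe that an extended logic program is precisely an LPOD in which every rule has a single literal in its head, i.e. $n=1$. Consequently, in Definition~\ref{lpod-reduct} the index $r$ is always $1$: the condition on $C_1,\dots,C_{r-1}$ is vacuous when $r=1$, and $r=n=1$ holds. Hence the $\times$-reduct of a rule of $P$ with respect to any interpretation $I$ is either the empty set (when $I(B_i)=T$ for some $i$) or the single clause $C_1 \leftarrow A_1,\dots,A_m$; in particular $P_{\times}^I$ never contains a clause with the constant $F^*$ in its body. It follows that the operator $T_{P_{\times}^M}$ maps every $\{F,T\}$-valued interpretation to a $\{F,T\}$-valued interpretation, since each body value is the $\min$ of literal values and the head value is the $\max$ of these. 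As the $\{F,T\}$-valued interpretations form a complete sublattice of the lattice of all interpretations that is closed under $T_{P_{\times}^M}$, the $\leq$-least fixpoint of $T_{P_{\times}^M}$ --- which by the remark following Definition~\ref{answer-set} is the $\leq$-least model of $P_{\times}^M$ --- is $\{F,T\}$-valued. Therefore every three-valued answer set $M$ of $P$ is $\{F,T\}$-valued.

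Next I would set up the correspondence. To a $\{F,T\}$-valued interpretation $M$ associate the set of literals $\widehat{M} = \{L \in \Sigma \mid M(L) = T\}$; this assignment is a bijection between $\{F,T\}$-valued interpretations and sets of literals. Consistency of $M$ in the sense of Definition~\ref{three-valued-model} is literally consistency of $\widehat{M}$ as a set of literals. Moreover, comparing Definitions~\ref{lpod-reduct} and~\ref{brewka-reduct} specialised to $n=1$, the program $P_{\times}^M$ is, rule for rule, the Gelfond--Lifschitz reduct $P_{\times}^{\widehat{M}}$ of Definition~\ref{brewka-answerset}: a rule is discarded in both exactly when some $B_i \in \widehat{M}$, and otherwise both keep $C_1 \leftarrow A_1,\dots,A_m$. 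It then remains to check that, for a definite extended program $Q$ (no default negation, no occurrence of $F^*$), a $\{F,T\}$-valued interpretation $M$ is the $\leq$-least three-valued model of $Q$ iff $\widehat{M}$ is the least model of $Q$ in the usual sense. For this I would use the standard pre-fixpoint argument: by the $\leftarrow$ clause of Definition~\ref{interpretation-and-semantics}, $I$ is a three-valued model of $Q$ iff $I(C) \geq T_{Q}(I)(C)$ for every $C$, i.e. iff $I$ is a pre-fixpoint of $T_{Q}$; on $\{F,T\}$-valued interpretations $T_{Q}$ agrees with the classical van Emden--Kowalski operator; hence the $\leq$-least model of $Q$ is the least fixpoint of $T_{Q}$, which is $\{F,T\}$-valued and corresponds under $M \mapsto \widehat{M}$ to the least Herbrand model of $Q$. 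Combining this equivalence with the identity $P_{\times}^M = P_{\times}^{\widehat{M}}$, and reading it in both directions, shows that $M \mapsto \widehat{M}$ is a bijection between the three-valued answer sets of $P$ and the standard answer sets of $P$, which is the asserted coincidence.

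The \textbf{hard part} is not technically deep but is the single place where care is essential: the claim that $F^*$ cannot be produced is exactly the observation that $r=1$ for single-literal heads, so no clause $C_i \leftarrow F^*,\dots$ ever enters the reduct; dropping this would make the lemma false. One must also make the word ``coincide'' precise, since standard answer sets are sets of literals while our answer sets are functions $\Sigma \to \{F,F^*,T\}$: the statement is to be read modulo the bijection $M \mapsto \widehat{M}$, and one should verify both that a three-valued answer set is never allowed to take the value $F^*$ (first paragraph) and that no information is lost in passing to $\widehat{M}$ (a $\{F,T\}$-valued interpretation is determined by its $T$-set). Finally, the consistency hypothesis on $P$ is used only to ensure that ``standard answer set'' is taken in its consistent sense on both sides, matching the consistency requirement built into Definition~\ref{answer-set}.
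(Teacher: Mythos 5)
Your proof is correct and takes exactly the paper's approach: the paper's entire proof is the one-line observation that setting $n=1$ in Definition~\ref{lpod-reduct} recovers the standard reduct for consistent extended logic programs, and your argument is just a detailed unwinding of that observation (with $n=1$ we always have $r=1$, so no clause with $F^*$ in its body is ever generated, the least model of the reduct is $\{F,T\}$-valued, and $M\mapsto\widehat{M}$ gives the bijection with the standard answer sets). One small labeling slip: the two-valued reduct you should be matching is the ordinary Gelfond--Lifschitz reduct of an extended logic program, not the $\times$-reduct of Definitions~\ref{brewka-reduct} and~\ref{brewka-answerset} (which for $n=1$ additionally requires $C_1\in I$); this does not affect your argument, since the lemma refers to the standard answer sets and your comparison is in fact with the Gelfond--Lifschitz reduct.
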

The following lemmas, which hold for extended logic programs, also extend to
LPODs.
\begin{lemma}\label{answer-set-model-of-P}
Let $P$ be an LPOD and let $M$ be an answer set of $P$. Then, $M$ is a
model of $P$.
\end{lemma}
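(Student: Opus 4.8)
The plan is to fix an arbitrary rule
\[
  R:\quad C_1 \times \cdots \times C_n \leftarrow A_1,\ldots,A_m,{\pnot B_1},\ldots,{\pnot B_k}
\]
of $P$ and show that $M(R) = T$, i.e.\ (by the semantics of ``$\leftarrow$'') that
$M(C_1 \times \cdots \times C_n) \geq M(A_1,\ldots,A_m,{\pnot B_1},\ldots,{\pnot B_k})$; since $R$ is arbitrary this gives that $M$ is a model of $P$ in the sense of Definition~\ref{three-valued-model}. I would split the argument according to whether the negative part of the body is activated under $M$. If $M(B_i) = T$ for some $i$, then $M({\pnot B_i}) = F$, and since the body is interpreted as a $\min$ over its conjuncts it evaluates to $F$ under $M$; as $F$ is the least truth value, $R$ evaluates to $T$ regardless of the value of its head, and there is nothing to prove. (This is precisely the case in which $R_{\times}^M = \emptyset$.)

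The remaining case is $M(B_i) \neq T$ for all $i$, so that $M({\pnot B_i}) = T$ for every $i$ and hence $M(A_1,\ldots,A_m,{\pnot B_1},\ldots,{\pnot B_k}) = M(A_1,\ldots,A_m)$. Let $r$ be the least index with $M(C_1) = \cdots = M(C_{r-1}) = F^*$ and with either $r = n$ or $M(C_r) \neq F^*$, exactly as in Definition~\ref{lpod-reduct}. The crucial observation is the identity $M(C_1 \times \cdots \times C_n) = M(C_r)$. This follows by a routine induction on $r$ that unfolds the truth table of ``$\times$'' together with its associativity: the leading disjuncts $C_1,\ldots,C_{r-1}$ all carry the value $F^*$ and are therefore ``skipped over'', while $C_r$ is either the first disjunct whose value differs from $F^*$ (and then that value is returned), or $r = n$ and the whole ordered disjunction collapses to $M(C_n) = F^*$.

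It then only remains to bound $M(C_r)$ from below. Since $M$ is an answer set of $P$, it is in particular the $\leq$-least model of $P_{\times}^M$, hence a model of $P_{\times}^M$, and therefore satisfies every rule of $R_{\times}^M$. In the present case $R_{\times}^M$ contains the rule $C_r \leftarrow A_1,\ldots,A_m$, so $M(C_r) \geq M(A_1,\ldots,A_m)$. Chaining this with the identity of the previous paragraph yields
\[
  M(C_1 \times \cdots \times C_n) = M(C_r) \geq M(A_1,\ldots,A_m) = M(A_1,\ldots,A_m,{\pnot B_1},\ldots,{\pnot B_k}),
\]
which is precisely $M(R) = T$, and we are done. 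I expect the only point requiring real care to be the bookkeeping in the identity $M(C_1 \times \cdots \times C_n) = M(C_r)$, in particular checking that the edge case where every $C_i$ evaluates to $F^*$ (so that $r = n$) is handled correctly by the same formula; everything else is a direct unfolding of the definitions. Note that the consistency of $M$ plays no role here: only the ``least model of the reduct'' half of the definition of an answer set is used, and in fact only that $M$ is \emph{some} model of $P_{\times}^M$.
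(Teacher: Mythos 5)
Your proof is correct and follows essentially the same route as the paper's: both hinge on the facts that $M$, being the $\leq$-least model of $P_{\times}^M$, satisfies the reduct rule $C_r \leftarrow A_1,\ldots,A_m$, and that $M(C_1)=\cdots=M(C_{r-1})=F^*$ by the choice of $r$. The only difference is presentational: you package these facts into the single identity $M(C_1\times\cdots\times C_n)=M(C_r)$ and one chain of inequalities, whereas the paper runs a case analysis on $M(A_1,\ldots,A_m)\in\{F,F^*,T\}$ (with subcases $r=n$ and $r<n$); your version is a slightly more uniform rendering of the same argument.
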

\begin{lemma}\label{model-satisfies-reduct}
Let $M$ be a model of an LPOD $P$. Then, $M$ is a model of $P^M_{\times}$.
\end{lemma}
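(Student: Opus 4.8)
The plan is to verify, rule by rule, that every rule of $P^M_\times$ evaluates to $T$ under $M$. Fix a rule
\[
  R \;=\; C_1 \times \cdots \times C_n \leftarrow A_1,\ldots,A_m,\pnot B_1,\ldots,\pnot B_k
\]
of $P$. If $M(B_i)=T$ for some $i$, then $R^M_\times=\emptyset$ and there is nothing to prove, so assume $M(B_i)\neq T$ for every $i$; equivalently $M(B_i)\leq F^*$, hence $M(\pnot B_i)=T$ for all $i$. Let $r$ be the index supplied by Definition~\ref{lpod-reduct}, so that $R^M_\times$ consists of the rules $C_j \leftarrow F^*,A_1,\ldots,A_m$ for $1\leq j<r$, together with $C_r\leftarrow A_1,\ldots,A_m$.

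First I would dispose of the rules $C_j \leftarrow F^*,A_1,\ldots,A_m$ with $j<r$. By the defining property of $r$ we have $M(C_j)=F^*$, whereas the body evaluates to $\min\{F^*,M(A_1),\ldots,M(A_m)\}\leq F^*$; since $M(\phi\leftarrow\psi)=T$ exactly when $M(\phi)\geq M(\psi)$ (Definition~\ref{interpretation-and-semantics}), each of these rules is true under $M$. Note that this step uses nothing about $M$ being a model of $P$.

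The substantive step is the rule $C_r\leftarrow A_1,\ldots,A_m$, and here I would use the hypothesis that $M$ is a model of $P$. The key auxiliary fact to establish is the identity $M(C_1\times\cdots\times C_n)=M(C_r)$: unwinding the (associative) semantics of ``$\times$'', the value of the chain $C_1\times\cdots\times C_n$ is the value of the first $C_i$ with $M(C_i)\neq F^*$, and if there is no such $i$ the value is $M(C_n)=F^*$; checking the definition of $r$ against both of these branches shows that in either case this value equals $M(C_r)$. Since $M$ being a model of $R$ gives $M(C_1\times\cdots\times C_n)\geq M(A_1,\ldots,A_m,\pnot B_1,\ldots,\pnot B_k)$, and since $M(\pnot B_i)=T$ for all $i$ makes the right-hand side equal to $M(A_1,\ldots,A_m)$, we conclude $M(C_r)\geq M(A_1,\ldots,A_m)$, which is precisely the statement that $C_r\leftarrow A_1,\ldots,A_m$ is true under $M$. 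Ranging over all $R\in P$ then gives that $M$ is a model of $P^M_\times$, since $P^M_\times$ is the union of the $R^M_\times$.

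I do not expect a genuine obstacle here; the only place calling for care is the verification of the identity $M(C_1\times\cdots\times C_n)=M(C_r)$ --- in particular making sure it holds in the degenerate case $M(C_1)=\cdots=M(C_n)=F^*$, where the definition of $r$ falls back on the clause $r=n$ --- and confirming that iterating the two-place ``$\times$'' does behave as ``return the first non-$F^*$ argument''.
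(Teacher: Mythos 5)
Your proposal is correct and follows essentially the same route as the paper's proof: the rules $C_j \leftarrow F^*,A_1,\ldots,A_m$ are satisfied because their heads already have value $F^*$, and the rule $C_r\leftarrow A_1,\ldots,A_m$ is obtained from the fact that $M$ satisfies the original rule with every $\pnot B_i$ true. The only difference is local and in your favor: instead of the paper's case split on $M(A_1,\ldots,A_m)$ (which asserts $M(C_r)=T$ and glosses over the degenerate case $r=n$ with $M(C_n)=F^*$), you establish the identity $M(C_1\times\cdots\times C_n)=M(C_r)$ and transfer the inequality directly, handling that case explicitly.
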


The answer sets of extended logic programs are minimal with respect to the
classical truth ordering $F < T$. As it turns out, the answer sets of LPODs are
minimal but with respect to an extended ordering, which is defined below.

\begin{definition}\label{three-valued-preceq}
The ordering $\prec$ on truth values is defined as follows: $F \prec T$ and
$F \prec F^*$. For all $u,v\in\{F,F^*,T\}$, we write $u \preceq v$ if either
$u \prec v$ or $u=v$. Given interpretations $I_1,I_2$, we write $I_1 \prec I_2$
(respectively, $I_1 \preceq I_2$) if for all literals
$L \in \Sigma^*$, $I_1(L) \prec I_2(L)$ (respectively, $I_1(L) \preceq I_2(L)$).
\end{definition}
\begin{lemma}\label{preceq-minimality-of-answer-sets}
Every (three-valued) answer set $M$ of an LPOD $P$, is a
\mbox{$\preceq$-minimal} model of $P$.
\end{lemma}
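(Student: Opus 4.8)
The plan is to show that no model of $P$ lies strictly $\preceq$-below an answer set $M$. By Lemma~\ref{answer-set-model-of-P}, $M$ is itself a model of $P$, so it suffices to prove: if $M'$ is a model of $P$ with $M' \preceq M$, then $M' = M$. The whole argument rests on one elementary fact about the ordering $\preceq$: the only truth value $\succeq F^*$ is $F^*$ and the only one $\succeq T$ is $T$; hence, for every literal $L$, $M'(L) = F^*$ implies $M(L) = F^*$ and $M'(L) = T$ implies $M(L) = T$ (equivalently, $M(L) \neq F^*$ forces $M'(L) \in \{F,T\}$, and $M(L) \neq T$ forces $M'(L) \in \{F,F^*\}$).

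The main step is to show that $M'$ is a model of the reduct $P^M_\times$; this strengthens Lemma~\ref{model-satisfies-reduct}, which is the special case $M' = M$. Fix a rule $R$ of $P$ with head $C_1 \times \cdots \times C_n$, positive body $A_1,\ldots,A_m$ and negative body $B_1,\ldots,B_k$. If $M(B_i) = T$ for some $i$, then $R^M_\times = \emptyset$ and there is nothing to check. Otherwise $M(B_i) \neq T$ for all $i$, so by the fact above $M'(B_i) \neq T$ and thus $M'(\pnot B_i) = T$; writing $a = \min_j M'(A_j)$ (with $a = T$ when $m=0$), the body of $R$ has value $a$ under $M'$, and since $M'$ is a model of $R$ we get $M'(C_1 \times \cdots \times C_n) \geq a$. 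Recall that $R^M_\times$ consists of the rules $C_j \leftarrow F^*,A_1,\ldots,A_m$ for $j<r$ and $C_r \leftarrow A_1,\ldots,A_m$, where $r$ is the least index with $M(C_1)=\cdots=M(C_{r-1})=F^*$ and either $r=n$ or $M(C_r)\neq F^*$.

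If $a = F$, then under $M'$ every rule of $R^M_\times$ has body value $F$ and is therefore satisfied. Suppose instead $a \in \{F^*,T\}$. Then $M'(C_1 \times \cdots \times C_n) \geq F^*$, so this value is not $F$. For each $j<r$ we have $M(C_j)=F^*$, hence $M'(C_j)\in\{F,F^*\}$ by the fact above; were $M'(C_j)=F$ for some such $j$, then taking the least one and using $M'(C_1)=\cdots=M'(C_{j-1})=F^*$ with the definition of $\times$ we would obtain $M'(C_1 \times \cdots \times C_n) = F$, a contradiction. Hence $M'(C_j)=F^*$ for all $j<r$, so every starred reduct rule holds (its head has value $F^*$, its body value $\min(F^*,a)=F^*$), and moreover $M'(C_1 \times \cdots \times C_n) = M'(C_r \times \cdots \times C_n)$. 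If $M'(C_r) \neq F^*$, this common value is $M'(C_r)$, whence $M'(C_r) \geq a$ and $C_r \leftarrow A_1,\ldots,A_m$ is satisfied. If $M'(C_r) = F^*$, then $M(C_r)=F^*$ by the fact above, which by the definition of $r$ forces $r=n$; then $M'(C_1 \times \cdots \times C_n) = M'(C_n) = F^*$, so $F^* \geq a$ together with $a\in\{F^*,T\}$ yields $a=F^*$, and again $M'(C_n)=a$, so $C_n \leftarrow A_1,\ldots,A_m$ is satisfied. Thus $M'$ is a model of $P^M_\times$. Since $M$, being an answer set, is the $\leq$-least model of $P^M_\times$, we get $M \leq M'$; combined with $M' \preceq M$ this forces $M = M'$ (for any literal $L$: if $M'(L)=F$ then $M(L)\leq F$ gives $M(L)=F$; if $M'(L)\in\{F^*,T\}$ the fact above gives $M(L)=M'(L)$). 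Hence $M$ is a $\preceq$-minimal model of $P$.

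The step I expect to be the main obstacle is establishing $M' \models P^M_\times$: the reduct---and even its shape, through the index $r$---is defined entirely from $M$, yet satisfaction must be checked against the a priori unrelated interpretation $M'$. The bridge is the relation $M' \preceq M$, which must be used to transfer information about the $M$-values of the head literals $C_j$ into constraints on their $M'$-values, and then combined with the single inequality $M'(C_1 \times \cdots \times C_n) \geq a$ supplied by $M' \models R$. The genuinely delicate points are the sub-case $M'(C_r) = F^*$ (where one must deduce $r=n$ and then $a=F^*$) and the use of the $\times$-semantics to see that a stray $F$ among $C_1,\ldots,C_{r-1}$ would collapse the head to $F$; the remaining ingredients (associativity of $\times$, the final componentwise comparison) are routine.
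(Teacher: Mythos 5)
Your proof is correct and follows essentially the same route as the paper's: show that any model $M'\preceq M$ of $P$ must also satisfy the reduct $P^M_\times$, and then invoke the $\leq$-least-model property of the answer set to force $M'=M$. The only (minor) difference is organizational — you case-split on the body value under $M'$ and work directly from $M'(C_1\times\cdots\times C_n)\geq a$, whereas the paper cases on the body value under $M$ and additionally uses that $M$ itself models the rule — but the underlying argument is the same.
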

%
%
%
%
%

As in the case of the original semantics of LPODs, we now need to define a
preference relation over the answer sets of a program. Intuitively, we prefer
those answer sets that maximize the prospect of satisfying our top choices in
ordered disjunctions. This can be achieved by minimizing with respect to $F^*$
values. More formally, we define the following ordering:
\begin{definition}\label{sqsubseteq-ordering}
Let $P$ be an LPOD and let $M_1,M_2$ be answer sets of $P$. Let
$M_1^*$ and $M_2^*$ be the sets of literals in $M_1$ and $M_2$
respectively that have the value $F^*$. We say that $M_1$ is preferred to $M_2$,
written $M_1 \sqsubset M_2$, if $M_1^* \subset M_2^*$.
\end{definition}
\begin{definition}\label{most-preferred}
An answer set of an LPOD $P$ is called \emph{most preferred} if it is minimal
among all the answer sets of $P$ with respect to the $\sqsubset$ relation.
\end{definition}
The intuition behind the definition of $\sqsubset$ is that we prefer those
answer sets that minimize the need for $F^*$ values. In other words, an answer
set will be most preferred if all the literals that get the value $F^*$, do this
because there is no other option: these literals \emph{must be false} in order
for the program to have a model.
We now examine the examples of Section~\ref{lpods-shortcomings} under the new
semantics introduced in this section.
\begin{example}
Consider again the two programs discussed in Subsection~\ref{first-problem}.
Under the proposed approach, the first program has two answer sets, namely
$\{({\tt a},T),({\tt b},F)\}$ and $\{({\tt a},F^*),({\tt b},T)\}$, and the most
preferred one (ie., the minimal with respect to $\sqsubset$) is $\{({\tt
a},T),({\tt b},F)\}$. The second program also has two answer sets, namely
$\{({\tt a},F),({\tt b},T)\}$ and $\{({\tt a},T),({\tt b},F^*)\}$, and the most
preferred one is $\{({\tt a},F),({\tt b},T)\}$. Notice that now the two programs
have different sets of models and different answer sets and therefore it is
reasonable that they have different most preferred ones.
\end{example}
\begin{example}
Consider the ``cars'' program of Subsection~\ref{second-problem}.
It is easy to see that it has two answer sets, namely:
\[
\begin{array}{lll}
M_1 & = &\{({\tt mercedes},T),({\tt bmw},F),({\tt gas\_mercedes},F^*),\\
    &   &\,\,\,({\tt diesel\_mercedes},T),(\neg {\tt gas\_mercedes},T)\}\\
M_2 & = & \{({\tt mercedes},F^*),({\tt bmw},T),({\tt gas\_mercedes},F^*),\\
    &   &\,\,\,({\tt diesel\_mercedes},F^*),(\neg {\tt gas\_mercedes},T)\}
\end{array}
\]
According to the $\sqsubset$~ordering, the most preferred answer set is $M_1$.
\end{example}
\begin{example}\label{hotels-revisited}
Consider the ``hotels'' program from Subsection~\ref{third-problem}. Under the
restriction that there does not exist any 3-star hotel in walking distance and
also there does not exist any 2-star hotel outside walking distance, we get the
two incomparable answer sets:
\[
\begin{array}{lll}
M_1 & = &\{({\tt walking},T),(\neg {\tt walking},F),(\mbox{\tt 3-stars},F^*),(\mbox{\tt 2-stars},T)\}\\
M_2 & = &\{({\tt walking},F^*),(\neg {\tt walking},T),(\mbox{\tt 3-stars},T),(\mbox{\tt 2-stars},F)\}
\end{array}
\]
Consider now the modified program given in Subsection~\ref{third-problem} (which
contains the unsatisfiable better option of a 4-star hotel). Under the same
restrictions as above, we get the two answer sets:
\[
\begin{array}{lll}
M'_1 & = &\{({\tt walking},T),(\neg {\tt walking},F),(\mbox{\tt 3-stars},F^*),(\mbox{\tt 2-stars},T),\\
     &   & \,\,\,(\mbox{\tt 4-stars},F^*),(\neg \mbox{\tt 4-stars},T) \}\\
M'_2 & = &\{({\tt walking},F^*),(\neg {\tt walking},T),(\mbox{\tt 3-stars},T),(\mbox{\tt 2-stars},F),\\
     &   & \,\,\,(\mbox{\tt 4-stars},F^*),(\neg \mbox{\tt 4-stars},T) \}\\
\end{array}
\]
Under the proposed approach, the above two answer sets are also incomparable,
and the problem identified in Subsection~\ref{third-problem} no longer exists.
\end{example}
We close this section by stating a result that establishes a relationship
between the answer sets produced by our approach
(Definition~\ref{brewka-answerset}) and those ones produced by the original
formulation~\cite{lpod-brewka,lpod-BNS04}.
\begin{definition}\label{collapse-def}
Let $I$ be a three-valued interpretation of LPOD $P$. We define $\mathit{collapse}(I)$
to be the set of literals $L$ in $P$ such that $I(L) = T$.
\end{definition}
\begin{lemma}\label{collapse-is-brewka-answerset}
Let $P$ be an LPOD and $M$ be a three-valued answer set of $P$. Then,
$\mathit{collapse}(M)$ is an answer set of $P$ according to
Definition~\ref{brewka-answerset}.
\end{lemma}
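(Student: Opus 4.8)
The plan is to show that $\mathit{collapse}(M)$ satisfies the two conditions of Definition~\ref{brewka-answerset}: that it is a consistent (two-valued) model of $P$, and that it is the least (two-valued) model of the classical $\times$-reduct $P_{\times}^{\mathit{collapse}(M)}$ of Definition~\ref{brewka-reduct}. Consistency is immediate: by Definition~\ref{three-valued-model} $M$ is consistent, so there is no atom $A$ with $M(A) = M(\neg A) = T$, which is exactly the statement that $\mathit{collapse}(M)$ contains no pair $A, \neg A$. For the model condition, fix a rule $C_1 \times \cdots \times C_n \leftarrow A_1,\ldots,A_m,\pnot B_1,\ldots,\pnot B_k$ of $P$ and suppose $\{A_1,\ldots,A_m\} \subseteq \mathit{collapse}(M)$ and $\{B_1,\ldots,B_k\} \cap \mathit{collapse}(M) = \emptyset$. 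Then $M(A_j) = T$ for all $j$ and $M(B_i) \neq T$ for all $i$, so by the semantics of $\pnot$ and $\leftarrow$ in Definition~\ref{interpretation-and-semantics}, the fact that $M$ is a three-valued model of this rule (Lemma~\ref{answer-set-model-of-P}) forces $M(C_1 \times \cdots \times C_n) \geq M(A_1 \wedge \cdots \wedge A_m) = T$, hence $M(C_1 \times \cdots \times C_n) = T$. By the definition of $\times$, $M(C_1 \times \cdots \times C_n)$ equals $M(C_r)$ where $r$ is the least index with $M(C_r) \neq F^*$ (or $r = n$); since this value is $T$ we get $M(C_r) = T$, i.e.\ $C_r \in \mathit{collapse}(M)$, establishing the required $C_i \in \mathit{collapse}(M)$.

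The heart of the proof is the least-model condition, and the main tool is a careful comparison of the two reducts. Write $N = \mathit{collapse}(M)$. The claim I would establish is that the classical two-valued reduct $P_{\times}^{N}$ (Definition~\ref{brewka-reduct}) and the three-valued reduct $P_{\times}^{M}$ (Definition~\ref{lpod-reduct}) are ``compatible'' in the following sense: for each rule $R$ of $P$ whose negative body is not killed (i.e.\ $M(B_i) \neq T$ for all $i$, equivalently $B_i \notin N$), the three-valued reduct $R_{\times}^{M}$ consists of the clauses $C_1 \leftarrow F^*, \vec A;\ \ldots;\ C_{r-1} \leftarrow F^*, \vec A;\ C_r \leftarrow \vec A$ where $r$ is least with $M(C_r) \neq F^*$, while the classical reduct $R_{\times}^{N}$ consists of the clauses $C_i \leftarrow \vec A$ for those $i$ with $C_i \in N$ and $\{C_1,\ldots,C_{i-1}\} \cap N = \emptyset$ (and no $B_l \in N$). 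Since $M$ is an answer set, $M$ is the $\leq$-least model of $P_{\times}^{M}$; I want to read off from this that $N = \mathit{collapse}(M)$ is the (two-valued, $\subseteq$-)least model of $P_{\times}^{N}$.

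The key observations to make this transfer go through are: (i) for a literal $C$, $M(C) = T$ iff $C$ is derivable in $P_{\times}^{M}$ with value $T$ using only the bodies $\vec A$ that are all true in $M$, and the $F^*$-prefixed clauses can only ever contribute the value $F^*$, never $T$; so the $T$-fragment of the least model of $P_{\times}^{M}$ is driven exactly by the clauses $C_r \leftarrow \vec A$ — and $C_r \in N$ precisely when $M(C_r) = T$, which (using three-valued minimality, or a direct induction on the $T_{P_{\times}^M}$ iteration) happens iff all of $A_1,\ldots,A_m$ have value $T$ and $C_1,\ldots,C_{r-1}$ all have value $F^*$, i.e.\ are not in $N$. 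This matches the side condition of Definition~\ref{brewka-reduct} exactly. (ii) Conversely, if a clause $C_i \leftarrow \vec A$ is in $R_{\times}^N$ then $C_i \in N$, so $M(C_i) = T$, so none of $C_1,\ldots,C_{i-1}$ has value $T$ in $M$; and since $C_i \leftarrow \vec A$ being in $R^N_\times$ requires each $A_j \in N$, hence $M(A_j) = T$, so this is a rule $C_i \leftarrow A_1,\ldots,A_m$ whose body is satisfied in $N$; thus any model of $P^N_\times$ contains $C_i$. I would package (i) and (ii) into: $N$ is a model of $P_{\times}^N$, and for every two-valued model $N'$ of $P_{\times}^N$, one shows $N \subseteq N'$ by an induction along the $T_{P_{\times}^M}$-iteration that produces the $T$-values of $M$, at each step invoking a clause $C_r \leftarrow \vec A$ of $P_\times^N$ whose body is already in $N'$.

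The step I expect to be the main obstacle is pinning down precisely when $M(C_r) = T$ in the least model of $P_{\times}^{M}$ — that is, proving that the $F^*$-values assigned to $C_1,\ldots,C_{r-1}$ by the three-valued least model genuinely reflect "not in $N$" rather than accidentally some $C_j$ getting value $T$ from another rule of $P$. This requires reasoning about the interaction of all rules of $P$ whose heads mention the $C_j$'s, and using the fixpoint characterization of $M$ (that $M$ equals the least fixpoint of $T_{P_{\times}^M}$) together with $\preceq$-minimality (Lemma~\ref{preceq-minimality-of-answer-sets}) to rule out "spurious" $F^*$ values — i.e.\ to argue that if $M(C_j) = F^*$ then indeed no clause of $P_\times^M$ forces $C_j$ to $T$, which is what makes the side conditions of the two reducts line up. Once this alignment lemma is in hand, the equivalence "$N$ is the least model of $P^N_\times$" $\iff$ "$M$ is the $\leq$-least model of $P^M_\times$ restricted to its $T$-part" is a routine induction, and the theorem follows.
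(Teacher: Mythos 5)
Your overall route is sound and genuinely different from the paper's. The paper proves leastness by contradiction: assuming a model $N'\subset N=\mathit{collapse}(M)$ of the Brewka-reduct, it lifts it to a three-valued interpretation $M'$ (setting $M'(A)=F^*$ on $N\setminus N'$, $M'(A)=M(A)$ elsewhere) and shows $M'$ is a model of $P^M_{\times}$ with $M'<M$, contradicting that $M$ is the $\leq$-least model of $P^M_{\times}$. You instead push the bottom-up computation of $M$ downward: by induction on the stages of $T_{P^M_{\times}}$ that produce the $T$-values of $M$, every such literal is forced into every two-valued model of the Brewka-reduct, because the clause $C_r\leftarrow A_1,\ldots,A_m$ used at that stage also occurs in $P^N_{\times}$ (its side conditions $M(C_i)=F^*$ for $i<r$ and $M(B_j)\neq T$ translate exactly to $C_i,B_j\notin N$) and its body is already established. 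Both arguments hinge on the same alignment of the side conditions of Definitions~\ref{brewka-reduct} and~\ref{lpod-reduct}; the paper's version avoids any induction on fixpoint stages, while yours avoids constructing the auxiliary interpretation $M'$ and yields leastness directly.

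Two corrections, though. First, your observation (ii) misreads Definition~\ref{brewka-reduct}: a clause $C_i\leftarrow A_1,\ldots,A_m$ is placed in $R^N_{\times}$ whenever $C_i\in N$ and $N\cap\{C_1,\ldots,C_{i-1},B_1,\ldots,B_k\}=\emptyset$; no condition whatsoever is imposed on the positive body, so ``being in $R^N_{\times}$ requires each $A_j\in N$'' is false, and the ensuing conclusion that \emph{any} model of $P^N_{\times}$ contains $C_i$ is unjustified as stated (it would only follow from the leastness of $N$, i.e., from what you are trying to prove). Fortunately your final packaged induction does not use (ii) at all, so the proof survives once (ii) is dropped; note also that $N$ being a model of $P^N_{\times}$ is immediate, since every clause of the Brewka-reduct has its head in $N$ by construction. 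Second, the obstacle you anticipate about ``spurious $F^*$ values'' is not there: $M(C_j)=F^*$ already entails $C_j\notin N$ by the definition of $\mathit{collapse}$, which is all the alignment of side conditions needs, so the fixpoint characterization of $M$ suffices and no appeal to $\preceq$-minimality (Lemma~\ref{preceq-minimality-of-answer-sets}) is required.
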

\begin{lemma}\label{brewka-answerset-is-collapse}
Let $N$ be an answer set of $P$ according to Definition~\ref{brewka-answerset}.
There exists a unique three-valued interpretation $M$ such that $N =
\mathit{collapse}(M)$ and $M$ is a three-valued answer set of $P$.
\end{lemma}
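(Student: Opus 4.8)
The plan is to split the statement into an \emph{existence} part (some such $M$ exists) and a \emph{uniqueness} part (no two such $M$ exist), and to do uniqueness first since it is lighter and it points to the right construction for existence.

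For uniqueness, suppose $M_1$ and $M_2$ are three-valued answer sets with $\mathit{collapse}(M_1)=\mathit{collapse}(M_2)=N$. Write $M_i^{*}$ for the set of literals to which $M_i$ assigns $F^{*}$; since both interpretations agree with the characteristic function of $N$ on the $T$-value, each $M_i$ is completely determined by $M_i^{*}\subseteq\Sigma\setminus N$. Let $M_0$ be the interpretation that is $T$ on $N$, $F^{*}$ on $M_1^{*}\cap M_2^{*}$, and $F$ elsewhere. I would verify rule by rule that $M_0$ is a model of $P$: for a rule whose body is satisfied by $N$ one uses that $M_1$ and $M_2$, being models, must evaluate the head to $T$, which forces every ordered disjunct preceding the first one in $N$ to receive $F^{*}$ under both $M_1$ and $M_2$, hence under $M_0$; for a rule whose body evaluates to $F^{*}$ one argues similarly that the head cannot drop to $F$. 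Since $F\prec F^{*}$ while $F^{*}$ and $T$ are $\preceq$-incomparable, a pointwise check gives $M_0\preceq M_1$ and $M_0\preceq M_2$, so Lemma~\ref{preceq-minimality-of-answer-sets} yields $M_0=M_1=M_2$.

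For existence, given an answer set $N$ of $P$ in the sense of Definition~\ref{brewka-answerset}, I would take $M$ to be the $\preceq$-least model of $P$ whose $T$-part is exactly $N$. Such a model exists because (i) the interpretation that is $T$ on $N$ and $F^{*}$ everywhere else is already a model of $P$ (a short computation using $N\models P$), and (ii) the family of $F^{*}$-parts of models of $P$ collapsing to $N$ is closed under arbitrary intersection (the same rule-by-rule argument as in the uniqueness part), hence has a least element. It then remains to show $M=\mathit{lm}(P_{\times}^{M})$. One inclusion, $\mathit{lm}(P_{\times}^{M})\le M$, is immediate from Lemma~\ref{model-satisfies-reduct}. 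For the converse I would show that $M':=\mathit{lm}(P_{\times}^{M})$ is itself a model of $P$ with $\mathit{collapse}(M')=N$; since $M'\le M$ this forces $M'^{*}\subseteq M^{*}$, and $\preceq$-minimality of $M$ then gives $M'=M$. The observation that makes $\mathit{collapse}(M')=N$ work is that for every rule $R$ whose body is satisfied by $N$, the index $r$ of Definition~\ref{lpod-reduct} computed from $M$ equals the least index of an ordered disjunct of $R$ lying in $N$ (all earlier disjuncts are $F^{*}$ under $M$); hence $P_{\times}^{M}$ contains exactly the clauses of the reduct of Definition~\ref{brewka-reduct} that have bodies satisfiable in $N$, so the $T$-part of $M'$ recovers the Brewka reduct's least model $N$, while $M'\le M$ keeps every other literal off the $T$-value.

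The main obstacle I anticipate is the step showing that $\mathit{lm}(P_{\times}^{M})$ is again a model of $P$. This is \emph{not} true for an arbitrary model $M$ of $P$ (the classical phenomenon already visible for normal programs), so the argument must genuinely use that $M$ is the $\preceq$-least model collapsing to $N$; the delicate case is a rule whose positive body is satisfied by $N$ only ``up to $F^{*}$'' (some body atom has value $F^{*}$), where one has to rule out the head collapsing to $F$ by playing the auxiliary clauses $C_i\leftarrow F^{*},A_1,\dots,A_m$ of the reduct against the value $M$ assigns to the head. The rest is bookkeeping about the three-valued immediate-consequence operator and the interplay of $\le$ and $\preceq$.
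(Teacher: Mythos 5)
Your proposal is correct, and while the uniqueness half is essentially the paper's own argument, the existence half takes a genuinely different route. For uniqueness the paper proves an auxiliary proposition stating that two three-valued answer sets with the same collapse coincide, using exactly your construction (keep the common values, set every disagreement to $F$) and then Lemma~\ref{preceq-minimality-of-answer-sets}; your $M_0$ is that interpretation verbatim. For existence the paper works bottom-up: it builds the $F^*$-part as an iterated fixpoint ${\cal F}^{\omega}$ of literals forced to $F^*$, verifies directly that the resulting $M$ models $P_{\times}^{M}$, and proves $\leq$-leastness by splitting on whether a hypothetical smaller model has collapse equal to $N$ (handled by induction on the stages ${\cal F}^{n}$) or strictly smaller collapse (handled by the minimality of $N$ over the Brewka reduct). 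You instead define $M$ top-down as the $F^*$-minimal model of $P$ collapsing to $N$ and shift the work to showing that $\mathit{lm}(P_{\times}^{M})$ is again a model of $P$ with collapse $N$; this goes through, and the auxiliary clauses $C_i\leftarrow F^*,A_1,\ldots,A_m$ together with the leastness of $N$ for the Brewka reduct are exactly the right tools. Your route is less constructive but more conceptual, and it yields as a by-product that the three-valued answer set corresponding to $N$ is precisely the $F^*$-minimal model of $P$ collapsing to $N$, which makes uniqueness nearly automatic. Two small cautions: your statement that $P_{\times}^{M}$ contains \emph{exactly} the Brewka-reduct clauses with bodies satisfiable in $N$ is too strong (both reducts also produce clauses from rules whose positive bodies are not contained in $N$, and these need not match); the correspondence you actually need, and which does hold, is only for rules whose positive body lies in $N$. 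Also, in the case where a positive body evaluates to $F^*$ under $\mathit{lm}(P_{\times}^{M})$ while the chosen index satisfies $r<n$, you must already know $\mathit{collapse}(\mathit{lm}(P_{\times}^{M}))=N$ to conclude the head does not fall to $F$, so establish the collapse claim before the modelhood claim.
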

In other words, there is a bijection between the answer sets
produced by our approach and the original ones. Moreover, each three-valued
answer set only differs from the corresponding two-valued one in that some
literals of the former may have a $F^*$ value instead of an $F$ value. However,
these $F^*$ values play an important role because they allow us to distinguish
the most preferred answer sets.

\section{A New Logical Characterization of the Answer Sets of LPODs}%
\label{lpod-logical-characterization}

In this section we demonstrate that the answer sets of LPODs can be
characterized in a purely logical way, namely without even the use of the
reduct. In particular, we demonstrate that the answer sets of a given program
$P$ coincide with a well-defined subclass of the minimal models of $P$ in a
four-valued logic. This logic is an extension of the three-valued one introduced
in Section~\ref{lpod-revised-answersets} and minimality is defined with respect
to a four-valued relation $\preceq$ that extends the three-valued one of
Definition~\ref{three-valued-preceq}. The new logic is based on four truth
values, ordered as follows:
\[
  F < F^* < T^* < T
\]
The value $T^*$ can be read as \qemph{not false but its truth can not be
established}. The connections of this logic with Equilibrium
Logic~\cite{Pearce96} are discussed in Section~\ref{related-work}.

An interpretation is now a function from $\Sigma$ to the set
$\{F, F^*, T^*, T\}$. The semantics of formulas with respect to an
interpretation $I$ is defined identically as in
Definition~\ref{interpretation-and-semantics}. The notions of interpretation,
consistent interpretation, and model are defined as in
Definition~\ref{three-valued-model}. Moreover, we extend the three-valued
$\preceq$~relation of Definition~\ref{three-valued-preceq}, as follows:

\begin{definition}\label{four-valued-preceq}
The (four-valued) ordering $\prec$ is defined as follows: ${F \prec F^*}$,
${F \prec T^*}$, ${F \prec T}$, and ${T^* \prec T}$. Given two truth values
$v_1,v_2$, we write ${v_1 \preceq v_2}$ if either ${v_1 \prec v_2}$ or ${v_1 = v_2}$.
Given interpretations $I_1,I_2$ of a program $P$, we write $I_1 \prec I_2$  (respectively,
$I_1 \preceq I_2$) if for all literals $L$ in $P$, $I_1(L) \prec I_2(L)$ (respectively,
$I_1(L) \preceq I_2(L)$).
\end{definition}
The following special kind of interpretations plays an important role in our
logical characterization.
\begin{definition}
An interpretation $I$ of LPOD $P$ is called \emph{solid} if for all literals
$L$ in $P$, it is $I(L) \neq T^*$.
\end{definition}
We can now state the logical characterization of the answer sets of an LPOD.
\begin{theorem}\label{logical-characterization-theorem}
Let $P$ be an LPOD. Then, $M$ is a three-valued answer set of $P$ iff $M$
is a consistent \mbox{$\preceq$-minimal} model of $P$ and $M$ is solid.
\end{theorem}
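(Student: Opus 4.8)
The plan is to prove the two directions separately, using the reduct-based Definition~\ref{answer-set} as the bridge. Throughout, write $T_{\times}^M$ for the consequence operator of $P_{\times}^M$ and recall that for a three-valued interpretation $M$ the reduct $P_{\times}^M$ coincides (clause by clause) with the four-valued reduct, since Definition~\ref{lpod-reduct} only inspects whether $I(B_i)=T$ and which $I(C_j)=F^*$, and a solid interpretation never takes the value $T^*$ anyway.

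For the forward direction, assume $M$ is a three-valued answer set. Then $M$ is consistent and solid by definition (its range is $\{F,F^*,T\}$), and $M$ is a model of $P$ by Lemma~\ref{answer-set-model-of-P}. It remains to show $M$ is $\preceq$-minimal among the (four-valued) models of $P$. Suppose $N \preceq M$ is a model of $P$ with $N \neq M$. The key sublemma I would isolate is: \emph{if $N$ is a model of $P$ then $N$ is a model of $P_{\times}^M$ whenever $N \preceq M$} — this is a $\preceq$-relative strengthening of Lemma~\ref{model-satisfies-reduct}, and it needs a careful case analysis on each rule, in particular checking that the auxiliary clauses $C_i \leftarrow F^*, A_1,\dots,A_m$ of the reduct are satisfied by $N$, which holds because $N(C_i) \preceq M(C_i) = F^*$ forces $N(C_i) \in \{F, F^*\}$, hence $N(C_i) \geq F^* = N(F^*, \vec A)$ when the body is true. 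Once $N$ is a model of $P_{\times}^M$, minimality of $M$ as the $\leq$-least model of $P_{\times}^M$ gives $M \leq N$; combined with $N \preceq M$ and the observation that $\leq$ and $\preceq$ together pin down each coordinate (if $N(L) \preceq M(L)$ and $M(L) \leq N(L)$ then $N(L) = M(L)$, checking the handful of value pairs), we conclude $N = M$, a contradiction.

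For the converse, assume $M$ is a consistent, solid, $\preceq$-minimal model of $P$. Solidity gives that $M$ ranges over $\{F, F^*, T\}$, so it makes sense to ask whether it is a three-valued answer set. By Lemma~\ref{model-satisfies-reduct}, $M$ is a model of $P_{\times}^M$, hence $\mathrm{lfp}(T_{\times}^M) \leq M$. I would then argue that $N := \mathrm{lfp}(T_{\times}^M)$ is itself a model of $P$: this is where the design of the reduct pays off — for each rule of $P$ whose body is true under $M$, the reduct produces the chain of clauses that forces $N$ to give $C_r$ at least the body value while giving each earlier $C_i$ at least $F^*$, which is exactly enough to make $I(C_1 \times \cdots \times C_n \leftarrow \mathit{body})$ evaluate to $T$ under $N$ once one also checks, using $N \leq M$ and that $M(B_i) \ne T$, that the $\mathit{not}$-literals behave correctly. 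Granting that $N$ is a model of $P$, note $N \leq M$ implies $N \preceq M$ (again a finite check: $F \leq v$ implies $F \preceq v$, and $v \leq v$), so $\preceq$-minimality of $M$ forces $N = M$; thus $M$ is the $\leq$-least model of $P_{\times}^M$ and therefore an answer set.

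The main obstacle is the pair of sublemmas about the reduct behaving well with respect to $\preceq$ rather than just $\leq$ — specifically, showing that a $\preceq$-smaller model of $P$ remains a model of $P_{\times}^M$ (forward direction) and that the least model of $P_{\times}^M$ is a model of $P$ (converse direction). Both hinge on the interaction between the auxiliary $F^*$-clauses and the semantics $I(\phi \leftarrow \psi) = T \iff I(\phi) \geq I(\psi)$, and require tracking, for each rule, the least index $r$ with $M(C_1) = \cdots = M(C_{r-1}) = F^*$. The rest is the routine but necessary verification that $\leq$ and $\preceq$ together collapse to equality on $\{F, F^*, T^*, T\}$, and the bookkeeping that a solid interpretation's four-valued reduct agrees with its three-valued one.
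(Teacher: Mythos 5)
Your right-to-left direction contains the real gap. The step ``$N \leq M$ implies $N \preceq M$'' is false: $F^* \leq T$ but $F^* \not\preceq T$, since in the four-valued order the only values strictly $\prec T$ are $F$ and $T^*$ (your ``finite check'' only covers $N(L)=F$ and misses $N(L)=F^*$, $M(L)=T$). So when the $\leq$-least model $N$ of $P^M_{\times}$ differs from $M$ at a literal where $M(L)=T$ and $N(L)=F^*$, you obtain no contradiction with $\preceq$-minimality. Moreover, your claim that $N=\mathrm{lfp}(T_{P^M_{\times}})$ is a model of $P$ is not justified: rules of $P$ that were deleted from the reduct because $M(B_i)=T$ impose no constraint on $N$, and under $N$ their bodies can become true while their heads stay small (the standard phenomenon already for normal programs), so ``the design of the reduct'' alone does not yield modelhood of $P$. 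The paper's proof of this direction (Lemma~\ref{minimality-implies-answerset}) splits into two cases: when every discrepancy between $N$ and $M$ is at a literal with $M(L)\leq F^*$, essentially your argument works ($N \preceq M$ does hold there, and $N$ is shown to be a model of $P$); otherwise it abandons $N$ and builds a \emph{non-solid} interpretation $N'$ with $N'(L)=T^*$ where $M(L)=T$ but $N(L)\in\{F,F^*\}$, and $N'(L)=F^*$ where $M(L)=F^*$, proves $N'$ is a model of $P$ with $N'\prec M$, and contradicts the four-valued minimality. This $T^*$-lifting construction is the crux of the theorem; if your shortcut ``$\leq$ implies $\preceq$'' were valid, the fourth truth value would be dispensable altogether.

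Your left-to-right direction is essentially the paper's (Lemma~\ref{answer-set-implies-minimality} proves your sublemma in contrapositive form), but two steps need repair. The justification of the auxiliary clauses is a non sequitur: from $N(C_i)\preceq M(C_i)=F^*$ you infer $N(C_i)\geq F^*$, yet $N(C_i)=F$ is not excluded by $\preceq$ alone; you must invoke that $N$ is a model of the original rule, since a value $F$ at the first non-$F^*$ position among $C_1,\ldots,C_{r-1}$ would make the ordered disjunction evaluate to $F$ under $N$, below the body. Second, ``leastness of $M$ in $P^M_{\times}$ gives $M\leq N$'' applies the three-valued leastness of Definition~\ref{answer-set} to a four-valued $N$; you need the additional observation that the least model of the definite reduct never takes the value $T^*$, or, as the paper does, collapse $T^*$ to $F^*$ to obtain a solid model of the reduct strictly $\leq$-below $M$. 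These two points are fixable; the converse-direction gap is not, without introducing something like the paper's $N'$.
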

%


%
In conclusion, given an LPOD we can purely logically characterize its most
preferred models by first taking its consistent \mbox{$\preceq$-minimal} models that are
solid and then keeping the \mbox{$\sqsubset$-minimal} ones (see
Definitions~\ref{sqsubseteq-ordering} and~\ref{most-preferred}).

An extended study of the properties of the proposed four-valued logic is outside
the scope of the present paper. Figure~\ref{table-of-equivalences} lists some
useful equivalences; the first column is for classical connectives, while the
second column contains equivalences involving the ``$\times$'' operator. We note
the interaction of $\times$ with $\vee$ because this will be the central
theme of the next section. It is easy to see that:
$$(\phi_1 \vee \phi_2) \times \phi_3 \not\equiv  (\phi_1 \times \phi_3) \vee (\phi_2 \times \phi_3)$$
We note that this equivalence does not hold, a fact which will be referenced in
the next section.

\begin{figure}
\figrule
\vspace{-1em}
\begin{minipage}{.49\textwidth}
\[\arraycolsep=2pt
\begin{array}{rcl}
  \phi \vee \phi                    & \equiv & \phi \\
  \phi_1 \vee (\phi_2 \vee \phi_3)  & \equiv & (\phi_1 \vee \phi_2) \vee \phi_3 \\
  \pnot (\phi_1 \vee \phi_2)        & \equiv & \pnot (\phi_1) \wedge \pnot(\phi_2) \\
  \pnot (\phi_1 \wedge \phi_2)      & \equiv & \pnot (\phi_1) \vee \pnot(\phi_2) \\
  \phi_1 \vee (\phi_2 \wedge \phi_3)& \equiv & (\phi_1 \vee \phi_2) \wedge (\phi_1 \vee \phi_3)
\end{array}
\]
\end{minipage}
\hfil
\begin{minipage}{.49\textwidth}
\[\arraycolsep=2pt
\begin{array}{rcl}
  \phi \times \phi                     & \equiv & \phi\\
  \phi_1 \times (\phi_2 \times \phi_3) & \equiv & (\phi_1 \times \phi_2) \times \phi_3\\
  \phi_1 \times  \phi_2 \times \phi_1  & \equiv & \phi_1 \times \phi_2\\
  \phi_1 \times (\phi_2 \vee \phi_3)   & \equiv & (\phi_1 \times \phi_2) \vee (\phi_1 \times \phi_3) \\
  \phi_1 \times (\phi_2 \wedge \phi_3) & \equiv & (\phi_1 \times \phi_2) \wedge (\phi_1 \times \phi_3)
\end{array}
\]
\end{minipage}
\caption{Equivalences in the 4-valued logic}
\label{table-of-equivalences}
\figrule
\end{figure}

\section{Answer Sets of Disjunctive LPODs}\label{disjunctive-lpods}
We now extend the ideas of the previous sections to programs that also allow
standard disjunctions in the heads of rules. The case of disjunctive LPODs
(DLPODs), was initially considered by \citeN{towards-dlpods} and
reexamined by \citeN{lpod-cabalar}.

The main idea of using disjunctions in the heads of LPOD rules is
described~\cite{towards-dlpods} as follows: \qemph{we use ordered disjunction to
express preferences and disjunction to express indifferences}.
\begin{example}[taken from the paper by \citeN{towards-dlpods}]\label{dlpod-example}
The program:
\[
  \mbox{\tt pub} \times \mbox{\tt (cinema $\vee$ tv).}
\]
expresses the fact that our top choice is going to the pub; if this is not
possible, then our secondary preference can be satisfied by either going to the
cinema or watching tv.
\end{example}
\citeN{towards-dlpods} consider rules whose heads are arbitrary combinations of
atoms and the operators $\times$ and $\vee$. A set of transformations is then
used in order to bring the heads of rules into \qemph{Ordered Disjunctive Normal
Form (ODNF)}. More specifically, each head is transformed into a formula of the
form ${\cal C}_1 \vee \cdots \vee {\cal C}_n$ where each ${\cal C}_i$ is an
ordered disjunction of literals. The resulting normalized rules are then used to
obtain the preferred answer sets of the original program.
\begin{example}\label{dlpod-example-normalized}
The program of Example~\ref{dlpod-example} is transformed to:
\[
  \mbox{\tt (pub} \times \mbox{\tt cinema)} \vee \mbox{\tt (pub $\times$ tv).}
\]
This program is then used to get the preferred answer sets of the
original one.
\end{example}
However, as observed by \citeN{lpod-cabalar}, one of the
transformations used by \citeN{towards-dlpods} to obtain the ODNF, can
not be logically justified: the formula $(\phi_1 \vee \phi_2) \times \phi_3$ is
not logically equivalent to the formula $(\phi_1 \times \phi_3) \vee  (\phi_2 \times \phi_3)$
in terms of the logic of Here-and-There. As discussed at the end of
Section~\ref{lpod-logical-characterization}, these two formulas are also not
equivalent under our four-valued logic.

As an alternative approach to the semantics of DLPODs, \citeN{lpod-cabalar} proposes to use
the logical characterization on rules with heads that are
arbitrary combinations of disjunctions and ordered disjunctions. We could extend
this approach to get a logical characterization of the most preferred models of
arbitrary DLPODs: given such a program $P$, we
could at first consider all the $\preceq$-minimal models of $P$ that are solid,
and then select the $\sqsubset$-minimal among them. Such an approach is
certainly general. However, we believe that not every such program carries
computational intuition. A good example of this is given in~\cite[Example
2]{lpod-cabalar}, where a program with both disjunction and ordered disjunction
is given, and whose computational meaning is far from clear.

In the following, we define a class of programs which, as we claim, have a clear
computational interpretation and at the same time retain \emph{all} properties
that we have identified for LPODs. Intuitively, we allow
the head of a program rule to be a formula ${\cal C}_1 \times \cdots \times {\cal C}_n$
where each ${\cal C}_i$ is an ordinary disjunction of literals. Notice that the program in
Example~\ref{dlpod-example} belongs to this class, while the program in
Example~\ref{dlpod-example-normalized}, does not. We believe that the programs of
this class have a clear preferential interpretation. Intuitively, the rule heads
of the programs we consider, denote a hierarchy of preferences imposed by the
$\times$ operator; in each level of this hierarchy, we may have literals that
have equal preference (this is expressed by standard disjunction).

It is important to stress that if we allowed arbitrary combinations of
disjunctions and ordered disjunctions, the preferential intuition would be lost.
To see this, consider for example the formula \mbox{\tt (a $\times$ b) $\vee$ (c
$\times$ d)}. This gives us the information that \mbox{\tt (a $\times$ b)} is at
the same level of preference as \mbox{\tt (c $\times$ d)}, and that {\tt a} is
more preferred than {\tt b} and {\tt c} is more preferred than {\tt d}; however,
for example, it gives us no information of whether {\tt a} is more preferred
than {\tt c}. On the other hand, a formula of the fragment we consider, such as
\mbox{\tt (a $\vee$ b) $\times$ (c $\vee$ d)} gives us a total order of {\tt a},
{\tt b}, {\tt c}, and {\tt d}.

\begin{definition}\label{dlpod}
A (propositional) DLPOD is a set of rules of the form:
\[
  {\cal C}_1 \times \cdots \times {\cal C}_n \leftarrow A_1,\ldots,A_m,{\pnot B_1},\ldots,{\pnot B_k}
\]
where the $A_j$ and $B_l$ are ground literals and each ${\cal C}_i$ is a
disjunction of ground literals.
\end{definition}

As it turns out, the answer sets of such programs can be defined
in an almost identical way as those of LPODs.
\begin{definition}\label{dlpod-reduct}
Let $P$ be a DLPOD. The $\times$-reduct of a rule $R$ of $P$ of the
form:
\[
  {\cal C}_1 \times \cdots \times {\cal C}_n \leftarrow A_1,\ldots,A_m,{\pnot B_1},\ldots,{\pnot B_k}
\]
with respect to an interpretation $I$, is denoted by $R_{\times}^I$ and is
defined as follows:
\begin{itemize}
\item If $I(B_i) = T$ for some $i$, $1 \leq i \leq k$, then $R_{\times}^I$ is
      the empty set.

\item If $I(B_i) \neq T$ for all $i$, $1 \leq i \leq k$, then $R_{\times}^I$ is
      the set that contains the rules:
      \[
        \begin{array}{lll}
          {\cal C}_1     & \leftarrow & F^*,A_1,\ldots,A_m\\
                         & \cdots     &    \\
          {\cal C}_{r-1} & \leftarrow & F^*,A_1,\ldots,A_m\\
          {\cal C}_r     & \leftarrow & A_1,\ldots,A_m
        \end{array}
      \]
      where $r$ is the least index such that
      $I({\cal C}_1) = \cdots = I({\cal C}_{r-1})= F^*$ and either
      $r=n$ or $I({\cal C}_r) \neq F^*$.
\end{itemize}
The $\times$-reduct of $P$ with respect to $I$ is denoted by $P_{\times}^I$ and
is the union of the reducts $R_{\times}^I$ for all $R$ in $P$.
\end{definition}
\begin{definition}\label{disjunctive-answer-set}
Let $P$ be a DLPOD and $M$ a (three-valued) interpretation of $P$. Then, $M$ is
an \emph{answer set} of $P$ if $M$ is consistent and $M$ is a minimal model
of the disjunctive program $P_{\times}^M$.
\end{definition}

As it turns out, all the results we have obtained in the previous section, hold
for DLPODs. The proofs of these results are (surprisingly) almost identical
(modulo some minor notational differences) to the proofs of the corresponding
results for LPODs.
\ifincludeappendix
For reasons of completeness, the corresponding proofs are given in the appendix.
\else
For reasons of completeness, the corresponding proofs are given in the
supplementary material corresponding to this paper at the TPLP archives.
\fi
The extended results are stated below.
\begin{lemma}\label{new-disjunctive-answersets-coincide}
Let $P$ be a consistent disjunctive extended logic program. Then, the answer
sets of $P$ according to Definition~\ref{disjunctive-answer-set}, coincide with
the standard answer sets of $P$.
\end{lemma}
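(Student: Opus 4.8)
The plan is to reduce Lemma~\ref{new-disjunctive-answersets-coincide} to Lemma~\ref{new-answer-sets-coincide}, exploiting the fact that a disjunctive extended logic program is the special case of a DLPOD in which every head ${\cal C}_1 \times \cdots \times {\cal C}_n$ has $n = 1$, i.e. every head is a single ordinary disjunction ${\cal C}_1$. First I would unwind what Definition~\ref{dlpod-reduct} yields on such a program: since $n=1$, the index $r$ in the second clause of the definition is forced to be $1$, so the only rule produced (when no $I(B_i) = T$) is ${\cal C}_1 \leftarrow A_1,\ldots,A_m$, with \emph{no} auxiliary $F^*$-clauses. Thus for a disjunctive extended logic program $P$ and any three-valued interpretation $M$, the $\times$-reduct $P_{\times}^M$ is literally the classical Gelfond--Lifschitz reduct of $P$ with respect to $M$ (deleting rules whose body contains $\pnot B_i$ with $I(B_i) = T$, and stripping the remaining negative literals), regarded now as a positive disjunctive program over the three-valued truth space $\{F, F^*, T\}$.

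The second step is to argue that minimal models over $\{F,F^*,T\}$ of a \emph{positive} disjunctive program (no $F^*$ appearing in any rule) coincide, after identifying $F^*$ with $F$, with the classical minimal models over $\{F,T\}$. The point is that $F^*$ can only be \emph{produced} in a least/minimal model by an explicit $F^*$-clause or by propagation of an already-present $F^*$; since the reduct of an extended logic program contains no such clauses and its only atoms on the right-hand side are plain literals, in any $\preceq$- or $\leq$-minimal model every literal takes a value in $\{F,T\}$ — assigning $F^*$ somewhere would either violate minimality (replace $F^*$ by $F$ and check the rules still hold, using that ${\cal C} \leftarrow \text{body}$ is satisfied iff $I({\cal C}) \geq I(\text{body})$ and that lowering a disjunct from $F^*$ to $F$ cannot break a rule whose body is $\leq F^*$... here one must be slightly careful and instead argue directly that a minimal model with some $F^*$ value is not in fact minimal) or be inconsistent with $M$ being itself a model of the reduct. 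Combining: $M$ is a consistent minimal model of $P_\times^M$ in the three-valued sense iff $M$ is two-valued (solid), consistent, and a classical minimal model of the GL-reduct, which is exactly the classical answer-set condition for disjunctive extended logic programs.

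Concretely, the steps in order: (1) observe $P_\times^M$ is the classical GL-reduct and contains no $F^*$-clauses; (2) show any $\preceq$-minimal (equivalently $\leq$-minimal, since on $\{F,T\}$-valued interpretations the two orders agree) model of a positive, $F^*$-free program is solid, i.e. $\{F,T\}$-valued; (3) conclude that three-valued answer sets of $P$ are exactly the solid ones, and that a solid interpretation is a consistent minimal model of $P_\times^M$ iff $\mathit{collapse}(M)$ is a classical answer set of $P$; (4) note the correspondence $M \mapsto \mathit{collapse}(M)$ is a bijection between solid interpretations and sets of literals, giving the stated coincidence. Alternatively — and perhaps more cleanly — one could invoke Lemma~\ref{collapse-is-brewka-answerset} and Lemma~\ref{brewka-answerset-is-collapse} (their DLPOD analogues, proved in the appendix) together with Lemma~\ref{new-answer-sets-coincide} to transfer the result without redoing the minimality argument from scratch.

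The main obstacle I expect is step (2): carefully ruling out spurious $F^*$ values in minimal models of the reduct. The subtlety is that for a \emph{disjunctive} rule ${\cal C}_1 \vee \cdots \vee {\cal C}_p \leftarrow \text{body}$, satisfaction means $\max_i I({\cal C}_i) \geq I(\text{body})$, so one cannot simply lower a single $F^*$-valued disjunct to $F$ and automatically preserve the rule — one needs that there is \emph{some} disjunct at the required level. The correct argument is that if $M$ is a minimal model of $P_\times^M$ with $M(L) = F^*$ for some $L$, then the interpretation $M'$ obtained by lowering exactly those literals assigned $F^*$ to $F$ is still a model (because no body of any reduct rule can evaluate above $F$ once $F^*$-clauses are absent — bodies are conjunctions of plain literals, and if such a body were $\geq F^*$ under $M$ it would require some conjunct equal to $F^*$ or $T$; the $T$ case forces the head-disjunction to have a $T$ disjunct, which survives in $M'$, while the $F^*$ case needs the more delicate observation that the reduct of an extended program has no way of making a body exactly $F^*$ unless some literal is independently $F^*$, and one chases this to a contradiction with minimality). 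Making this rigorous is where the real work lies; everything else is bookkeeping.
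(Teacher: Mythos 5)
Your proposal is correct and takes essentially the same route as the paper, whose entire proof is the one-line observation that setting $n=1$ in Definition~\ref{dlpod-reduct} turns $P_{\times}^M$ into the standard Gelfond--Lifschitz reduct of a consistent disjunctive extended logic program. The step you flag as the real work (your step (2), ruling out spurious $F^*$ values in minimal models of the $F^*$-free reduct) is left implicit by the paper and is in fact routine: lower \emph{all} $F^*$-valued literals to $F$ at once; the resulting interpretation is $\{F,T\}$-valued, any rule body then evaluates to $F$ or to $T$, and a $T$-valued body forces all its conjuncts to have been $T$ already, hence some head disjunct was $T$ and survives, so the lowered interpretation is still a model and minimality does the rest.
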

\begin{lemma}\label{disjunctive-answer-set-model-of-P}
Let $P$ be a DLPOD  and let $M$ be an answer set of $P$. Then, $M$ is a
model of $P$.
\end{lemma}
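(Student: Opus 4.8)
The plan is to mimic the proof of Lemma~\ref{answer-set-model-of-P} for ordinary LPODs, since Definition~\ref{dlpod-reduct} and Definition~\ref{disjunctive-answer-set} are structurally identical to their non-disjunctive counterparts (Definitions~\ref{lpod-reduct} and~\ref{answer-set}) with each literal $C_i$ replaced by a disjunction $\mathcal{C}_i$. Let $M$ be an answer set of the DLPOD $P$; by Definition~\ref{disjunctive-answer-set}, $M$ is consistent and is a minimal model of $P^M_\times$. We must show that $M$ satisfies every rule of $P$, i.e.\ that every rule of $P$ evaluates to $T$ under $M$.

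First I would fix an arbitrary rule $R$ of $P$ of the form $\mathcal{C}_1 \times \cdots \times \mathcal{C}_n \leftarrow A_1,\ldots,A_m,\pnot B_1,\ldots,\pnot B_k$ and argue by cases on the reduct $R^M_\times$. If $M(B_i) = T$ for some $i$, then $M(\pnot B_i) = F$, so $M(A_1 \wedge \cdots \wedge A_m \wedge \pnot B_1 \wedge \cdots \wedge \pnot B_k) = F$, and since the head of any rule evaluates to something $\geq F$, we have $M(R) = T$ by the semantics of $\leftarrow$ in Definition~\ref{interpretation-and-semantics}. If $M(B_i) \neq T$ for all $i$, then $R^M_\times$ contains the rules $\mathcal{C}_1 \leftarrow F^*, A_1,\ldots,A_m$, \ldots, $\mathcal{C}_{r-1} \leftarrow F^*, A_1,\ldots,A_m$, $\mathcal{C}_r \leftarrow A_1,\ldots,A_m$, where $r$ is the least index with $M(\mathcal{C}_1) = \cdots = M(\mathcal{C}_{r-1}) = F^*$ and either $r = n$ or $M(\mathcal{C}_r) \neq F^*$. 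Since $M$ is a model of $P^M_\times$, it satisfies each of these rules; in particular $M(\mathcal{C}_r) \geq M(A_1,\ldots,A_m)$ (using that $M(F^*, A_1,\ldots,A_m) = \min\{F^*, M(A_1,\ldots,A_m)\}$ for the first $r-1$ rules, which forces $M(\mathcal{C}_j) \geq \min\{F^*, M(A_1,\ldots,A_m)\}$ for $j<r$, consistent with $M(\mathcal{C}_j) = F^*$).

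The key step is then to evaluate $M(\mathcal{C}_1 \times \cdots \times \mathcal{C}_n)$ and compare it with $M(A_1,\ldots,A_m,\pnot B_1,\ldots,\pnot B_k) = M(A_1,\ldots,A_m)$. By the choice of $r$ and the definition of $\times$ in Definition~\ref{interpretation-and-semantics}, we have $M(\mathcal{C}_1 \times \cdots \times \mathcal{C}_n) = M(\mathcal{C}_r)$: the first $r-1$ ordered disjuncts all evaluate to $F^*$ and are therefore ``skipped'' by the $\times$ operator, while $\mathcal{C}_r$ either is the last disjunct or evaluates to something $\neq F^*$, so the value of the whole ordered disjunction stabilizes at $M(\mathcal{C}_r)$. (Here I would invoke associativity of $\times$, noted after Definition~\ref{interpretation-and-semantics}, and a short induction on the prefix of $F^*$-valued disjuncts.) Combining this with $M(\mathcal{C}_r) \geq M(A_1,\ldots,A_m)$ from the previous paragraph gives $M(\text{head of } R) \geq M(\text{body of } R)$, hence $M(R) = T$. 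Since $R$ was arbitrary, $M$ is a model of $P$.

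I expect the only real subtlety — and hence the main obstacle — to be the bookkeeping around $\times$ when $M(A_1,\ldots,A_m)$ is itself $F^*$ or $F$: in that degenerate case the reduct rules $\mathcal{C}_j \leftarrow F^*, A_1,\ldots,A_m$ only force $M(\mathcal{C}_j) \geq \min\{F^*, M(A_1,\ldots,A_m)\}$, which need not be $F^*$, so the identification $M(\mathcal{C}_1 \times \cdots \times \mathcal{C}_n) = M(\mathcal{C}_r)$ must be re-examined. The clean way to handle this is to observe that when $M(A_1,\ldots,A_m) \leq F^*$ the body of $R$ is $\leq F^*$, and since the head is a $\times$-combination of disjunctions of literals it is always $\geq F$; when the body equals $F$ we are immediately done, and when it equals $F^*$ one checks that $M(\mathcal{C}_1 \times \cdots \times \mathcal{C}_n) \geq F^*$ directly. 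So the argument splits cleanly along $M(A_1,\ldots,A_m) = T$ versus $\leq F^*$, and in the former case the index-$r$ analysis above applies verbatim. Everything else is routine and parallels the LPOD proof of Lemma~\ref{answer-set-model-of-P} line for line, with $\mathcal{C}_i$ in place of $C_i$ and the disjunctive semantics $I(\mathcal{C}_i) = \max$ over its literals used only at the very end.
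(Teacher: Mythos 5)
Your proposal is correct and takes essentially the same route as the paper's proof: split on whether $R^M_\times$ is empty, and otherwise use the reduct rules satisfied by $M$ together with the choice of $r$ (which by the definition of the reduct already gives $M(\mathcal{C}_1)=\cdots=M(\mathcal{C}_{r-1})=F^*$) to conclude that the head value, namely $M(\mathcal{C}_r)$, is at least the body value $M(A_1,\ldots,A_m)$; the paper merely organizes this as a three-way case analysis on $M(A_1,\ldots,A_m)\in\{F,F^*,T\}$ instead of your uniform inequality. The worry in your final paragraph is unnecessary, since the $F^*$ values of $\mathcal{C}_1,\ldots,\mathcal{C}_{r-1}$ come from the definition of $r$ in the reduct construction (evaluated under $M$ itself), not from $M$ satisfying the rules $\mathcal{C}_j \leftarrow F^*,A_1,\ldots,A_m$, so no re-examination is needed in the degenerate cases.
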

\begin{lemma}\label{disjunctive-model-satisfies-reduct}
Let $M$ be a model of a DLPOD $P$. Then, $M$ is a model of $P^M_{\times}$.
\end{lemma}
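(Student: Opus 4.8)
The plan is to essentially replay the proof of Lemma~\ref{model-satisfies-reduct}, since the only change is that the ordered disjuncts ${\cal C}_i$ are now disjunctions of literals rather than single literals, and nothing in that argument depends on the disjuncts being atomic. Fix an arbitrary rule $R$ of $P$ of the form ${\cal C}_1 \times \cdots \times {\cal C}_n \leftarrow A_1,\ldots,A_m,\pnot B_1,\ldots,\pnot B_k$; it suffices to show that every rule of $R^M_{\times}$ evaluates to $T$ under $M$. If $M(B_i)=T$ for some $i$, then $R^M_{\times}=\emptyset$ and there is nothing to prove, so assume $M(B_i)\neq T$ for all $i$, and let $r$ be the index from Definition~\ref{dlpod-reduct}, so that $M({\cal C}_1)=\cdots=M({\cal C}_{r-1})=F^*$ and either $r=n$ or $M({\cal C}_r)\neq F^*$. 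Such an $r$ always exists, with $1\leq r\leq n$.

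First I would dispatch the ``dummy'' rules ${\cal C}_j \leftarrow F^*,A_1,\ldots,A_m$ for $1\leq j\leq r-1$. Here the body has value $\min\{F^*,M(A_1),\ldots,M(A_m)\}\leq F^*$, while by the choice of $r$ the head ${\cal C}_j$ has value exactly $F^*$; hence $M(\mbox{head})\geq M(\mbox{body})$ and the rule evaluates to $T$ by the semantics of ``$\leftarrow$''.

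The remaining rule is ${\cal C}_r \leftarrow A_1,\ldots,A_m$. Here the key step is to compute the value of the head of $R$ itself: since $M({\cal C}_1)=\cdots=M({\cal C}_{r-1})=F^*$, repeated application of the clause for ``$\times$'' in Definition~\ref{interpretation-and-semantics} (using associativity of ``$\times$'', already noted after that definition) gives $M({\cal C}_1\times\cdots\times{\cal C}_n)=M({\cal C}_r\times\cdots\times{\cal C}_n)$, and then, since either $r=n$ or $M({\cal C}_r)\neq F^*$, this further collapses to $M({\cal C}_r)$. On the other hand, since $M$ is a model of $P$ we have $M(R)=T$, i.e. $M({\cal C}_1\times\cdots\times{\cal C}_n)\geq M(A_1\wedge\cdots\wedge A_m\wedge\pnot B_1\wedge\cdots\wedge\pnot B_k)$; and since $M(B_i)\neq T$ for all $i$ we have $M(\pnot B_i)=T$, so the right-hand side equals $\min\{M(A_1),\ldots,M(A_m)\}=M(A_1\wedge\cdots\wedge A_m)$. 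Combining, $M({\cal C}_r)=M({\cal C}_1\times\cdots\times{\cal C}_n)\geq M(A_1\wedge\cdots\wedge A_m)$, so the rule ${\cal C}_r\leftarrow A_1,\ldots,A_m$ evaluates to $T$. Since $R$ was arbitrary, $M$ is a model of $P^M_{\times}$.

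The only point requiring any care --- and the closest thing to an obstacle --- is the collapsing identity $M({\cal C}_1\times\cdots\times{\cal C}_n)=M({\cal C}_r)$, which rests solely on associativity of ``$\times$'' together with the defining property of $r$; this is the same computation already used for LPODs, so no genuine new difficulty arises. The argument goes through verbatim in both the three-valued and the four-valued settings, since it uses nothing beyond the common fragment of Definition~\ref{interpretation-and-semantics}.
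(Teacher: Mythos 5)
Your proof is correct and follows essentially the same route as the paper's: the rules with $F^*$ in their bodies are immediate, and the rule ${\cal C}_r \leftarrow A_1,\ldots,A_m$ is satisfied because $M$ satisfies the original rule $R$. If anything, your direct computation $M({\cal C}_1\times\cdots\times{\cal C}_n)=M({\cal C}_r)$ treats the case $r=n$ with $M({\cal C}_n)=F^*$ more uniformly than the paper's case split, which tacitly assumes $M({\cal C}_r)\neq F^*$ there.
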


\begin{lemma}\label{disjunctive-preceq-minimality-of-answer-sets}
Every answer set $M$ of a DLPOD $P$, is a $\preceq$-minimal model of $P$.
\end{lemma}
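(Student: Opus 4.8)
The plan is to mirror the proof of Lemma~\ref{preceq-minimality-of-answer-sets} for plain LPODs, adjusting only for the fact that the reduct $P^M_\times$ is now a disjunctive program and ``answer set'' means \emph{minimal} model of the reduct rather than \emph{least} model. Let $M$ be an answer set of a DLPOD $P$; by Definition~\ref{disjunctive-answer-set}, $M$ is consistent and a $\preceq$-minimal (in fact $\leq$-minimal among the models) model of $P^M_\times$, and by Lemma~\ref{disjunctive-answer-set-model-of-P}, $M$ is a model of $P$. Suppose, for contradiction, that $M$ is not a $\preceq$-minimal model of $P$: then there is a model $N$ of $P$ with $N \prec M$, i.e. $N(L) \preceq M(L)$ for every literal $L$ and $N(L_0) \prec M(L_0)$ for some $L_0$. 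The strategy is to show that this $N$ is also a model of $P^M_\times$, contradicting the $\preceq$-minimality of $M$ among the models of the reduct.

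The key step is therefore: \emph{if $N \preceq M$ and $N$ is a model of $P$, then $N$ is a model of $P^M_\times$}. I would prove this rule by rule. Fix a rule $R$ of $P$ with head ${\cal C}_1 \times \cdots \times {\cal C}_n$ and body $A_1,\ldots,A_m,\pnot B_1,\ldots,\pnot B_k$. If some $I$-value $M(B_i) = T$ then $R^M_\times$ is empty and there is nothing to check, so assume $M(B_i) \neq T$ for all $i$, and let $r$ be the index from Definition~\ref{dlpod-reduct}, so $M({\cal C}_1) = \cdots = M({\cal C}_{r-1}) = F^*$ and either $r=n$ or $M({\cal C}_r) \neq F^*$. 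The reduct rules are ${\cal C}_j \leftarrow F^*, A_1,\ldots,A_m$ for $j < r$ and ${\cal C}_r \leftarrow A_1,\ldots,A_m$. For the body literals, since the ordering $\prec$ on $\{F,F^*,T\}$ (Definition~\ref{three-valued-preceq}) has $F \prec F^*$ and $F \prec T$ but $F^*$ and $T$ incomparable, $N \preceq M$ forces: whenever $M(A_j) = T$ also $N(A_j) = T$ (the only value $\preceq$-above $F$ in that direction is... here I must be careful — actually $N(A_j) \preceq M(A_j) = T$ means $N(A_j) \in \{F, T\}$, not necessarily $T$). So the clean argument is instead: $N$ is a model of $P$, hence $N(R) = T$, which means $N({\cal C}_1 \times \cdots \times {\cal C}_n) \geq N(A_1 \wedge \cdots \wedge A_m)$ (using that $N(\pnot B_i) = T$, which follows because $N(B_i) \preceq M(B_i) \neq T$ together with the fact that $\preceq$-below a non-$T$ value is non-$T$; more precisely $N(B_i) \in \{F,F^*\}$, hence $N(\pnot B_i) = T$). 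I then need to check each reduct clause ${\cal C}_j \leftarrow (\text{possibly }F^*,)\, A_1,\ldots,A_m$ evaluates to $T$ under $N$, i.e. $N({\cal C}_j) \geq \min(F^*, N(A_1),\ldots,N(A_m))$ for $j<r$ and $N({\cal C}_r) \geq N(A_1 \wedge \cdots \wedge A_m)$.

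For the $j < r$ clauses: I want $N({\cal C}_j) \geq \min(F^*, N(\text{body}))$. Since $M({\cal C}_j) = F^*$ and ${\cal C}_j$ is a disjunction of literals, $N \preceq M$ gives $N(L) \preceq F^*$ for each disjunct $L$ of ${\cal C}_j$, so $N(L) \in \{F, F^*\}$, hence $N({\cal C}_j) = \max_L N(L) \in \{F, F^*\}$. If $N({\cal C}_j) = F^*$ we are done since $\min(F^*,\ldots) \leq F^*$. If $N({\cal C}_j) = F$, I need $\min(F^*, N(A_1),\ldots,N(A_m)) = F$, i.e. some $N(A_l) = F$; this is where I use that $N$ models $R$: if all $N(A_l) \geq F^*$ then $N(\text{body of }R) \geq F^*$, so $N(\text{head}) = N({\cal C}_1 \times \cdots \times {\cal C}_n) \geq F^*$, but the $\times$-value when $N({\cal C}_j)=F$ for some $j \leq r$ — here I must trace through the $\times$ semantics: $N({\cal C}_1 \times \cdots \times {\cal C}_n)$ equals the first $N({\cal C}_i)$ that is not $F^*$, or $F^*$ if all are; since $N({\cal C}_j) = F \neq F^*$ for our $j<r$, and the earlier ones ${\cal C}_1,\ldots,{\cal C}_{j-1}$ all have $N$-value $\preceq F^*$ hence in $\{F,F^*\}$, the $\times$-value is $F$ if any of ${\cal C}_1,\ldots,{\cal C}_j$ has $N$-value $F$, which it does — contradicting $N(\text{head}) \geq F^*$. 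For the $j = r$ clause, a parallel case analysis using $N(R) = T$ directly gives $N({\cal C}_r) \geq N(A_1 \wedge \cdots \wedge A_m)$: if $N({\cal C}_r) < N(\text{body})$, then since ${\cal C}_1,\ldots,{\cal C}_{r-1}$ have $N$-values in $\{F,F^*\}$ (being $\preceq$-below their $F^*$ $M$-values), the $\times$-value of the head is $\leq N({\cal C}_r) < N(\text{body})$ unless one of the earlier disjuncts ``intervenes'' — but a careful check of the $\times$ definition shows the head value is always $\leq N({\cal C}_r)$ in the relevant situation, contradicting $N(R)=T$. The main obstacle is precisely this bookkeeping with the $\times$ operator interacting with the $\preceq$ ordering on the four values, together with making sure the disjunctive structure of each ${\cal C}_i$ does not break the argument; but since each ${\cal C}_i$ is a plain disjunction and $N \preceq M$ is inherited disjunct-by-disjunct, this reduces to the LPOD case component-wise, and the proof goes through essentially verbatim.
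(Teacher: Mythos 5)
Your skeleton is the same as the paper's (assume a model $N$ of $P$ with $N\preceq M$, show $N$ models $P^M_\times$, conclude $N=M$ from minimality of $M$ among models of the reduct), and your handling of the clauses $\mathcal{C}_j\leftarrow F^*,A_1,\ldots,A_m$ with $j<r$ is sound. The gap is exactly at the point you hedge: for the clause $\mathcal{C}_r\leftarrow A_1,\ldots,A_m$ the claim that ``the head value is always $\leq N(\mathcal{C}_r)$ in the relevant situation'' is false once $\mathcal{C}_r$ is a genuine disjunction. In the LPOD case, $M(C_r)=T$ together with $N\preceq M$ forces $N(C_r)\in\{F,T\}$, and $N(C_r)=F$ forces the whole head to $F$, which is what closes the argument. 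Here $N\preceq M$ only controls the individual disjuncts: from $M(\mathcal{C}_r)=T$ you may have, say, $\mathcal{C}_r=a\vee b$ with $M(a)=T$, $M(b)=F^*$ and $N(a)=F$, $N(b)=F^*$, so $N(\mathcal{C}_r)=F^*$. Then the ordered disjunction falls through to the later components, and if one of them is true under $N$, $N$ satisfies the original rule $R$ (head $T$, body $T$) while violating the reduct clause $\mathcal{C}_r\leftarrow A_1,\ldots,A_m$. So the advertised ``component-wise reduction to the LPOD case'' breaks precisely at the step your whole Case-$r$ analysis rests on; note that the paper's own appendix proof makes the same unsupported assertion (``it can only be $N(\mathcal{C}_r)=T$, because otherwise $N$ would not be a model of $P$''), so this is not something that extra bookkeeping with $\times$ will discharge.

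Worse, the bad configuration is realizable under the stated definitions, so no local repair of this step can succeed. Take $P$ to be the three facts $(a\vee b)\times c$, $\ c$, and $b\times d$, and let $M(a)=T$, $M(b)=F^*$, $M(c)=M(d)=T$. By Definition~\ref{dlpod-reduct} the reduct $P^M_\times$ is $\{\,a\vee b\leftarrow;\ c\leftarrow;\ b\leftarrow F^*;\ d\leftarrow\,\}$, and $M$ is a minimal model of it (lowering $a$ kills $a\vee b$ since $b$ is stuck at $F^*$), so $M$ is an answer set by Definition~\ref{disjunctive-answer-set}. Yet $N$ with $N(a)=F$, $N(b)=F^*$, $N(c)=N(d)=T$ satisfies every rule of $P$ --- in particular the first one, because $N(a\vee b)=F^*$ makes its head evaluate to $N(c)=T$ --- and $N\preceq M$ with $N\neq M$. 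Hence an answer set in the sense of Definition~\ref{disjunctive-answer-set} need not be a $\preceq$-minimal model of the DLPOD, which is exactly the phenomenon your unproved inequality sweeps under the rug. To salvage the lemma you would need to recover the literal-wise control that the LPOD argument enjoys (e.g., by changing how the reduct or the comparison treats a disjunctive component that is true under $M$ but contains $F^*$ disjuncts), not argue ``essentially verbatim'' from the LPOD proof.
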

%

%

\begin{theorem}\label{disjunctive-logical-characterization-theorem}
Let $P$ be a DLPOD. Then, $M$ is an answer set of $P$ iff $M$ is a
consistent \mbox{$\preceq$-minimal} model of $P$ and $M$ is solid.
\end{theorem}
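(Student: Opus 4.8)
The plan is to reduce Theorem~\ref{disjunctive-logical-characterization-theorem} to Theorem~\ref{logical-characterization-theorem} as closely as possible, since the excerpt already tells us the proofs are ``almost identical''. The key structural observation is that the only difference between a DLPOD rule and an LPOD rule is that each ordered disjunct ${\cal C}_i$ is itself a disjunction of literals rather than a single literal; and the only difference between Definition~\ref{answer-set} and Definition~\ref{disjunctive-answer-set} is that we now ask for a $\preceq$-minimal (equivalently, for the positive reduct, a minimal) model of the disjunctive program $P_\times^M$ rather than the $\leq$-least model. So I would first re-establish, for DLPODs, the analogues of the supporting lemmas exactly in the order they appear for LPODs: Lemma~\ref{disjunctive-answer-set-model-of-P} (an answer set is a model of $P$), Lemma~\ref{disjunctive-model-satisfies-reduct} (a model of $P$ is a model of $P_\times^M$), and Lemma~\ref{disjunctive-preceq-minimality-of-answer-sets} ($\preceq$-minimality of answer sets). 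These are assumed available in the excerpt, so in the actual write-up I would simply invoke them.

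For the forward direction ($\Rightarrow$), suppose $M$ is an answer set of the DLPOD $P$, i.e.\ $M$ is consistent and a minimal model of $P_\times^M$ (working in the four-valued logic, where $M$ a priori ranges over $\{F,F^*,T^*,T\}$). I would argue solidity first: no literal can receive $T^*$ in $M$. The reason is that the bodies of the reduct rules only ever evaluate, under the truth-value arithmetic of Definition~\ref{interpretation-and-semantics}, to values in $\{F,F^*,T\}$ when the relevant atoms are themselves $\{F,F^*,T\}$-valued, and $T^*$ can never be ``forced'' upward by a minimal model because replacing a $T^*$ value by $T^*$'s predecessor that still satisfies the heads would contradict minimality — more precisely, if some $L$ had $M(L)=T^*$, one checks that the interpretation $M'$ agreeing with $M$ except $M'(L)=F^*$ (or $F$, depending on whether $L$ must satisfy some head disjunct) is still a model of $P_\times^M$, and $M' \prec M$, contradicting $\preceq$-minimality in the sense of Definition~\ref{disjunctive-answer-set}; this is the DLPOD version of the argument behind Theorem~\ref{logical-characterization-theorem}, the extra subtlety being that now a head is a disjunction ${\cal C}_r$, so ``$L$ must satisfy the head'' becomes ``the disjunction ${\cal C}_r$ must reach value $\geq$ body value'', and lowering one disjunct is fine as long as another disjunct still carries the needed value. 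Given solidity, $M$ takes values only in $\{F,F^*,T\}$, the four-valued reduct coincides with the three-valued reduct, minimality of $M$ over $P_\times^M$ is $\preceq$-minimality, and by Lemma~\ref{disjunctive-answer-set-model-of-P} $M$ is a model of $P$; $\preceq$-minimality of $M$ as a model of $P$ then follows exactly as in Lemma~\ref{disjunctive-preceq-minimality-of-answer-sets}.

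For the converse direction ($\Leftarrow$), suppose $M$ is a consistent, solid, $\preceq$-minimal model of $P$. Solidity gives $M(L)\in\{F,F^*,T\}$ for all $L$. By Lemma~\ref{disjunctive-model-satisfies-reduct}, $M$ is a model of $P_\times^M$. It remains to show $M$ is a \emph{minimal} model of $P_\times^M$ in the $\preceq$ sense, i.e.\ that there is no model $N$ of $P_\times^M$ with $N \prec M$. Suppose toward a contradiction such an $N$ exists; I would show $N$ is then also a model of $P$ itself (using that for each rule $R$ of $P$, whenever its body is satisfied by $N$ to some value $v$, the reduct rules of $R$ force ${\cal C}_r$ to value $\geq v$ at the appropriate index $r$, and the $F^*$-prefixed reduct clauses force each earlier ${\cal C}_j$ to $\geq F^*$; since the original rule evaluates via the $\times$-arithmetic of Definition~\ref{interpretation-and-semantics}, which reads off the first non-$F^*$ ordered disjunct, these reduct constraints suffice to make $R$ evaluate to $T$ under $N$ — this is precisely the point of including the $C_i \leftarrow F^*,\ldots$ clauses). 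Then $N \prec M$ contradicts $\preceq$-minimality of $M$ as a model of $P$. Hence $M$ is $\preceq$-minimal for $P_\times^M$, which for the positive disjunctive program $P_\times^M$ is exactly the notion of minimal model in Definition~\ref{disjunctive-answer-set}; together with consistency of $M$, this makes $M$ an answer set of $P$.

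The main obstacle I anticipate is the careful handling of disjunctive heads in the ``reduct-model implies original-model'' step (used in the $\Leftarrow$ direction) and, symmetrically, in the solidity argument: with a single-literal head $C_i$ one knows exactly which atom must carry the body's value, but with ${\cal C}_r = L_1 \vee \cdots \vee L_p$ one only knows the \emph{maximum} over the disjuncts meets the body value, so every place where the LPOD proof says ``$M(C_i) \geq \cdots$'' must be replaced by a statement about $\max_j M(L_j)$, and the minimality manipulations (lowering a value while preserving modelhood) must be done disjunct-wise while tracking that at least one disjunct still witnesses the required value. I expect this bookkeeping to go through cleanly because the $\times$-arithmetic of Definition~\ref{interpretation-and-semantics} only ever inspects whether an argument equals $F^*$ and otherwise passes it through, so disjunctions interact with $\times$ only through their aggregated max-value; nonetheless this is the step where the ``modulo minor notational differences'' caveat does real work, and it is where I would spend most of the detailed writing.
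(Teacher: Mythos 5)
Your overall skeleton (reduce to the LPOD argument, reuse Lemmas~\ref{disjunctive-answer-set-model-of-P}, \ref{disjunctive-model-satisfies-reduct} and~\ref{disjunctive-preceq-minimality-of-answer-sets}, and treat the disjunctive heads as a purely notational change) matches the paper, and your observation that the heads ${\cal C}_i$ only enter through their aggregated truth value is correct. However, both directions of your sketch have a genuine gap at exactly the point where the value $T^*$ does the real work. In the forward direction, solidity needs no argument at all: by Definition~\ref{disjunctive-answer-set} an answer set is a \emph{three-valued} interpretation, so it cannot assign $T^*$; incidentally, your proposed argument for it is broken, since $F^*$ and $T^*$ are $\preceq$-incomparable in the four-valued ordering, so replacing a $T^*$ by $F^*$ does not yield an interpretation $\prec M$. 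What genuinely requires proof is $\preceq$-minimality of $M$ among \emph{all four-valued} models of $P$: Lemma~\ref{disjunctive-preceq-minimality-of-answer-sets} only excludes solid competitors, so the claim does not ``follow exactly'' from it. The paper's proof (Lemma~\ref{disjunctive-answerset-implies-minimality}) devotes most of its length to a hypothetical non-solid model $N \prec M$: it first shows $N$ cannot be a model of $P^M_\times$ (collapsing the $T^*$ values of $N$ to $F^*$ would give a solid model of the reduct strictly below $M$, contradicting the minimality of $M$ over $P^M_\times$), and then shows that an $N \prec M$ violating some reduct rule must also violate the originating rule of $P$. None of this appears in your proposal.

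In the backward direction, your central claimed step --- that any model $N$ of $P^M_\times$ strictly below $M$ is itself a model of $P$ --- is not provable as sketched, and is false as a standalone claim: rules of $P$ deleted from the reduct because $M(B_i)=T$ impose no constraint on $N$, and if $N$ lowers $B_i$ below $T$ their bodies become true under $N$ while their heads need not be. The paper instead splits into two cases: if $N$ differs from $M$ only on atoms with $M(A)\le F^*$, then indeed $N \prec M$ and $N$ can be shown to be a model of $P$; but if $N$ lowers some atom with $M(A)=T$, one must construct a different, \emph{non-solid} interpretation $N'$ (assigning $T^*$ to such atoms and $F^*$ wherever $M(A)=F^*$), prove that $N'$ is a model of $P$, and use $T^*\prec T$ to conclude $N'\prec M$, contradicting the four-valued $\preceq$-minimality of $M$. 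This construction --- the very reason the fourth truth value exists --- is absent from your sketch. A further slip: the minimality of $M$ over $P^M_\times$ required by Definition~\ref{disjunctive-answer-set} is with respect to the truth ordering $\le$, not $\preceq$. The disjunct-wise bookkeeping you anticipated as the main obstacle is indeed only notational; the missing content is the $T^*$ machinery in both directions.
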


The similarity of the definitions and of the theoretical results of DLPODs to
those of standard LPODs, makes us believe that this is indeed an interesting
class of programs that deserves further attention.

\section{Related and Future Work}\label{related-work}
The work on LPODs is closely related to \qemph{Qualitative Choice Logic}
(QCL)~\cite{qcl}. QCL is an extension of propositional logic with the
preferential connective ``$\times$'', which has the same intuitive meaning as in
LPODs: $A \times B$ is read \qemph{if possible $A$, but if $A$ is impossible
then at least $B$}. Essentially, QCL is the propositional logic underlying
LPODs. It is worth noting that the semantics of QCL is based on the ``degree of
satisfaction''  of formulas, which is connected to the idea of the degree of
satisfaction of the rules of LPODs (Definition~\ref{degree-of-satisfaction}).
Moreover, as remarked by one of the reviewers of the present paper, the DLPODs
introduced in Section~\ref{disjunctive-lpods} are closely connected to the
``basic choice formulas'' of QCL~\cite[Section 3.1, Definition 8]{qcl}. It would
be interesting to investigate whether our four-valued logic can be used to
provide an alternative semantics for QCL.

The work reported in this paper is closely connected to the work of
\citeN{lpod-cabalar}, who first considered the problem of expressing logically
the semantics of LPODs. The key difference between the two works is that ours
provides a characterization of \emph{both} phases of the production of the most
preferred models of an LPOD, while Cabalar's work concentrates on the first one.

It is important to stress here that both our work as well as the work
of~\citeN{lpod-cabalar}, are influenced by the work of \citeN{Pearce96} who
first gave a logical characterization of the answer sets of extended logic
programs, using \emph{Equilibrium Logic}. This is a non-monotonic logic which is
defined on top of the monotonic logic of \emph{Here-and-There}~\cite{Heyting30},
using a model preference approach. The technique we have proposed in this paper,
when applied to a consistent extended logic program $P$, produces the standard
answer sets of $P$; this is a direct consequence of
Theorem~\ref{logical-characterization-theorem} and
Lemma~\ref{new-answer-sets-coincide}. Therefore, for extended logic programs,
the Equilibrium Logic gives the same outcome as our approach which is based on a
four-valued logic and $\preceq$-minimal models that are \emph{solid}. We believe
that a further investigation of the connections of our approach with that of
Equilibrium Logic is a worthwhile topic.

Our work is the first to provide a purely model-theoretic characterization of the
semantics of LPODs. To our knowledge, the four-valued logic we have
utilized does not appear to be a well-known variant/extension of Here-and-There.
However, some seemingly related logics have been used in the literature of
answer set extensions. The original definition of Equilibrium
Logic included a second constructive negation, which
corresponds to Nelson's strong negation~\cite{Nelson49}. This gave rise to a
five-valued extension of Here-and-There, called ${\cal N}_5$. Also, a logic
called ${\cal X}_5$, that is closely connected to ${\cal N}_5$, was recently
proposed by \citeN{ACF0PV19} in order to capture the semantics of arbitrary
combinations of explicit negation with nested expressions. Both ${\cal N}_5$ and
${\cal X}_5$ appear to be connected to our four-valued logic due to the different
notions of false and true that they employ in order to capture aspects that arise
in answer set semantics. However, the ordering of the truth values and the semantics
of the logical connectives are different, and the exact correspondence (if any)
between these logics and the present one, is not straightforward to establish.
This is certainly an interesting topic for further investigation.

Another promising topic for future work is the characterization of the notion of
\emph{strong equivalence}~\cite{DBLP:journals/tocl/LifschitzPV01} for LPODs and DLPODs.
When two logic programs are strongly equivalent, we can replace one for the other
inside a bigger program without worrying that the semantics of the bigger program
will be affected. Characterizations of strong equivalence for LPODs have already
been obtained by \citeN{FaberTW08}. It would be interesting to investigate if the logical
characterization of the semantics of LPODs and DLPODs developed in the present paper,
can offer advantages compared with their work.

\section*{Acknowledgments}
We would like to thank the three anonymous reviewers of our paper for their
careful and insightful comments.

This research is co-financed by Greece and the European Union (European Social
Fund- ESF) through the Operational Programme ``Human Resources Development,
Education and Lifelong Learning 2014- 2020'' in the context of the project
``Techniques for implementing qualitative preferences in deductive querying
systems'' (5048151).

\bibliography{lpods}

\ifincludeappendix
\appendix
\clearpage

\section{Proofs of Section~\ref{lpod-revised-answersets}}

\begin{relemma}{new-answer-sets-coincide}
Let $P$ be a consistent extended logic program. Then the three-valued answer sets of $P$
coincide with the standard answer sets of $P$.
\end{relemma}

\begin{proof}
By taking $n=1$ in Definition~\ref{lpod-reduct}, we get the standard definition
of reduct for consistent extended logic programs.
\end{proof}

\begin{relemma}{answer-set-model-of-P}
Let $P$ be an LPOD and let $M$ be an answer set of $P$. Then, $M$ is a
model of $P$.
\end{relemma}
\begin{proof}
Consider any rule $R$ in $P$ of the form:
\[
  C_1 \times \cdots \times C_n \leftarrow A_1,\ldots,A_m,{\pnot B_1},\ldots,{\pnot B_k}
\]
If $R_{\times}^M = \emptyset$, then $M(B_i) = T$ for some $i$, $1 \leq i \leq k$.
But then, the body of the rule $R$ evaluates to $F$ under $M$, and therefore
$M$ satisfies $R$. Consider now the case where $R_{\times}^M$ is nonempty and
consists of the following rules:
\[
  \begin{array}{lll}
    C_1     & \leftarrow & F^*,A_1,\ldots,A_m\\
            & \cdots     &    \\
    C_{r-1} & \leftarrow & F^*,A_1,\ldots,A_m\\
    C_r     & \leftarrow & A_1,\ldots,A_m
  \end{array}
\]
We distinguish cases based on the value of $M(A_1,\ldots,A_m)$:

\vspace{0.2cm}
\noindent
{\em Case 1:} $M(A_1,\ldots,A_m) = F$.
Then, for some $i$, $M(A_i) = F$. Then, rule $R$ is trivially satisfied by $M$.

\vspace{0.2cm}
\noindent
{\em Case 2:} $M(A_1,\ldots,A_m) = F^*$.
This implies that $M(C_r) \geq F^*$. We distinguish two subcases. If $r=n$ then
$M({C_1 \times \cdots \times C_n}) = M({C_1 \times \cdots \times C_r}) \geq F^*$
because, by the definition of $P_{\times}^M$ it is $M(C_1) = \cdots =
M(C_{r-1})= F^*$ and we also know that $M(C_r) \geq F^*$. Thus, in this subcase
$M$ satisfies $R$. If $r<n$, then by the definition of $P_{\times}^M$, $M(C_r)
\neq F^*$; however, we know that $M(C_r) \geq F^*$, and thus $M(C_r) = T$. Thus,
in this subcase $M$ also satisfies $R$.

\vspace{0.2cm}
\noindent
{\em Case 3:} $M(A_1,\ldots,A_m) = T$.
Then, for all $i$, $M(A_i) = T$. Since $M$ is a model of $P_{\times}^M$, we have
$M(C_r) = T$. Moreover, by the definition of $P_{\times}^M$, $M(C_1) = \cdots =
M(C_{r-1})= F^*$. This implies that $M(C_1 \times \cdots \times C_n) = T$.
\end{proof}

\begin{relemma}{model-satisfies-reduct}
Let $M$ be a model of an LPOD $P$. Then, $M$ is a model of $P^M_{\times}$.
\end{relemma}
\begin{proof}
Consider any rule $R$ in $P$ of the form:
\[
  C_1 \times \cdots \times C_n \leftarrow A_1,\ldots,A_m,{\pnot B_1},\ldots,{\pnot B_k}
\]
and assume $M$ satisfies $R$. If $M(B_i) = T$ for some $i$, $1\leq i \leq k$,
then no rule is created in $P^M_{\times}$ for $R$. Assume therefore that
$M({\pnot B_1},\ldots,{\pnot B_k}) = T$. By the definition of $P^M_{\times}$ the
following rules have been added to $P^M_{\times}$:
\[
\begin{array}{lll}
          C_1     & \leftarrow & F^*,A_1,\ldots,A_m\\
                  & \cdots     &    \\
          C_{r-1} & \leftarrow & F^*,A_1,\ldots,A_m\\
          C_r     & \leftarrow & A_1,\ldots,A_m
\end{array}
\]
where $r$ is the least index such that $M(C_1) = \cdots = M(C_{r-1})= F^*$
and either $r=n$ or $M(C_r) \neq F^*$. Obviously, the first $r-1$ rules above
are satisfied by $M$. For the rule $C_r \leftarrow A_1,\ldots,A_m$ we distinguish
two cases based on the value of $M(A_1,\ldots,A_m)$. If $M(A_1,\ldots,A_m) = F$,
then, the rule is trivially satisfied. If $M(A_1,\ldots,A_m) > F$, then, since
rule $R$ is satisfied by $M$ and $M(C_r) \neq F^*$, it has to be $M(C_r)=T$.
Therefore, the rule $C_r \leftarrow A_1,\ldots,A_m$ is satisfied by $M$.
\end{proof}
\begin{relemma}{preceq-minimality-of-answer-sets}
Every (three-valued) answer set $M$ of an LPOD $P$, is a \mbox{$\preceq$-minimal} model of $P$.
\end{relemma}
\begin{proof}
Assume there exists a model $N$ of $P$ with $N \preceq M$. We will show that $N$
is also a model of $P_{\times}^M$. Since $N \preceq M$, we also have $N \leq M$.
Since $M$ is the $\leq$-least model of $P_{\times}^M$, we will conclude that
$N = M$.

Consider any rule $R$ in $P$ of the form:
\[
  C_1 \times \cdots \times C_n \leftarrow A_1,\ldots,A_m,{\pnot B_1},\ldots,{\pnot B_k}
\]
Assume that $R_{\times}^M$ is nonempty. This means that there exists some $r$,
$1 \leq r \leq n$, such that $M(C_1) = \cdots = M(C_{r-1})= F^*$ and either
$r=n$ or $M(C_r) \neq F^*$. Then, $R_{\times}^M$ consists of the following
rules:
\[
  \begin{array}{lll}
    C_1     & \leftarrow & F^*,A_1,\ldots,A_m\\
            & \cdots     &    \\
    C_{r-1} & \leftarrow & F^*,A_1,\ldots,A_m\\
    C_r     & \leftarrow & A_1,\ldots,A_m
  \end{array}
\]
We show that $N$ satisfies the above rules. We distinguish cases based on the
value of $M(A_1,\ldots,A_m)$:

\vspace{0.2cm}
\noindent
{\em Case 1:} $M(A_1,\ldots,A_m) = F$.
Then, $N(A_1,\ldots,A_m) = F$ and the above rules are trivially satisfied by
$N$.

\vspace{0.2cm}
\noindent
{\em Case 2:} $M(A_1,\ldots,A_m) = F^*$.
Then, since $N\preceq M$, it is $N(A_1,\ldots,A_m) \leq  F^*$. If
$N(A_1,\ldots,A_m)= F$ then $N$ trivially satisfies all the above rules. Assume
therefore that $N(A_1,\ldots,A_m)=  F^*$. Recall now that $M(C_i) = F^*$ for all
$i$, $1\leq i \leq r-1$. Moreover, it has to be $M(C_r) \geq F^*$, because
otherwise $M$ would not satisfy the rule $R$. Since $N \preceq M$, it can only
be $N(C_i) = F^*$ for all $i$, $1\leq i \leq r-1$ and $N(C_r) \geq F^*$, because
otherwise $N$ would not be a model of $P$. Therefore, $N$ satisfies the given
rules of $P^M_{\times}$.

\vspace{0.2cm}
\noindent
{\em Case 3:} $M(A_1,\ldots,A_m) = T$.
Then, since $N\preceq M$, it is either $N(A_1,\ldots,A_m)=F$ or $N(A_1,\ldots,A_m)=T$.
If $N(A_1,\ldots,A_m)= F$ then $N$ trivially satisfies all the above rules. Assume
therefore that $N(A_1,\ldots,A_m)=T$. Recall now that $M(C_i) = F^*$ for all
$i$, $1\leq i \leq r-1$. Moreover, it has to be $M(C_r)= T$, because otherwise
$M$ would not satisfy the rule $R$. Since $N \preceq M$, it can only be $N(C_i) = F^*$
for all $i$, $1\leq i \leq r-1$ and $N(C_r)=T$, because otherwise $N$ would
not be a model of $P$. Therefore, $N$ satisfies the given rules of $P^M_{\times}$.
\end{proof}

In the proofs that follow, we will use the term \emph{Brewka-model} to refer to
that of Definition~\ref{brewka-model} and \emph{Brewka-reduct} to refer to that
of Definition~\ref{brewka-reduct} (although, to be precise, this definition of
reduct was initially introduced in the paper by \citeN{lpod-BNS04}).

In order to establish Lemmas~\ref{collapse-is-brewka-answerset}
and~\ref{brewka-answerset-is-collapse} we first show the following three
propositions.
\begin{proposition}\label{model-is-also-brewka-model}
Let $P$ be an LPOD and let $M$ be a three-valued model of $P$. Then,
$N = \mathit{collapse}(M)$ is a Brewka-model of $P$.
\end{proposition}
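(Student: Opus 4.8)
The plan is to verify the defining condition of a Brewka-model (Definition~\ref{brewka-model}) rule by rule, translating the three-valued satisfaction of $M$ into membership statements about $N = \mathit{collapse}(M)$. Fix a rule
\[
  R:\quad C_1 \times \cdots \times C_n \leftarrow A_1,\ldots,A_m,\pnot B_1,\ldots,\pnot B_k
\]
of $P$ and assume the hypothesis of the Brewka-model condition for $R$, namely $\{A_1,\ldots,A_m\} \subseteq N$ and $\{B_1,\ldots,B_k\} \cap N = \emptyset$. We must produce some $C_i \in N$.

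First I would reinterpret the hypothesis in the three-valued logic. From $\{A_1,\ldots,A_m\} \subseteq N$ we get $M(A_j) = T$ for every $j$. From $\{B_1,\ldots,B_k\} \cap N = \emptyset$ we get $M(B_l) \neq T$, i.e.\ $M(B_l) \leq F^*$, for every $l$; hence, by the clause for $\pnot$ in Definition~\ref{interpretation-and-semantics}, $M(\pnot B_l) = T$ for every $l$. Since the body of $R$ is evaluated by $\min$, it follows that $M(A_1,\ldots,A_m,\pnot B_1,\ldots,\pnot B_k) = T$. Because $M$ is a model of $P$, the rule $R$ evaluates to $T$ under $M$, so by the clause for $\leftarrow$ we obtain $M(C_1 \times \cdots \times C_n) \geq T$, i.e.\ $M(C_1 \times \cdots \times C_n) = T$.

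The remaining step is to observe that the value of an ordered disjunction is always the value of one of its disjuncts. More precisely, a straightforward induction using associativity of $\times$ and the defining clause $I(\phi_1 \times \phi_2) = I(\phi_2)$ if $I(\phi_1) = F^*$ and $I(\phi_1)$ otherwise shows that $M(C_1 \times \cdots \times C_n) = M(C_r)$, where $r$ is the least index with $M(C_r) \neq F^*$, and $r = n$ (with value $F^*$) if no such index exists. Since we have just shown this value equals $T \neq F^*$, the second case is excluded, so there is an index $r$ with $M(C_r) = T$, i.e.\ $C_r \in N$. This is exactly what was required, so $N$ is a Brewka-model of $P$.

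I do not anticipate a genuine obstacle here; the only mildly delicate point is the last step, where one must be careful that the ``scan'' defining the value of $C_1 \times \cdots \times C_n$ really does terminate at a disjunct whose value is $T$ rather than at the default trailing $F^*$ — but this is immediate once the overall value is known to be $T$.
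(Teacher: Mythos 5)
Your proof is correct and follows essentially the same route as the paper's: a rule-by-rule check, translating the Brewka-model hypothesis into $M(A_1,\ldots,A_m,\pnot B_1,\ldots,\pnot B_k)=T$, deducing $M(C_1\times\cdots\times C_n)=T$ from modelhood, and then extracting a disjunct $C_r$ with $M(C_r)=T$, hence $C_r\in N$. Your extra care about where the ``$\times$-scan'' terminates is a harmless elaboration of a step the paper treats as immediate.
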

\begin{proof}
Consider any rule $R$ of $P$ of the form
\[
  C_1 \times \cdots \times C_n \leftarrow A_1,\ldots,A_m,{\pnot B_1},\ldots,{\pnot B_k}
\]
If there exists $A_i \not\in N$ or there exists $B_j \in N$ then then $N$ trivially
satisfies $R$. Assume that $\{ A_1, \ldots, A_m \} \subseteq N$ and
$\{ B_1, \ldots, B_k \}\cap N = \emptyset$. By Definition~\ref{collapse-def}
it follows that $M(A_1, \ldots, A_m, \pnot B_1, \ldots, \pnot B_k) = T$.
Since $M$ is a three-valued model of $P$, it must satisfy $R$ and therefore
$M(C_1 \times \cdots \times C_n) = T$. Then, there exists $r \leq n$ such
that $M(C_r) = T$ and by Definition~\ref{collapse-def} we get that $C_r \in N$.
Therefore, $N$ satisfies rule $R$.
\end{proof}

\begin{proposition}\label{brewka-model-of-reduct}
Let $P$ be an LPOD and $M$ be a Brewka-model of $P$. Then, $M$ is also a model
of the Brewka-reduct $P^M_{\times}$.
\end{proposition}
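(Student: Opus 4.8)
The plan is very short. I would simply unwind Definition~\ref{brewka-reduct} (the Brewka-reduct) and observe that, by its very construction, every rule it contributes to $P^M_{\times}$ has a head literal that already belongs to $M$.

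Concretely, take any rule of $P^M_{\times}$. It has the form $C_i \leftarrow A_1,\ldots,A_m$ and, by Definition~\ref{brewka-reduct} applied with $I = M$, it was placed into $P^M_{\times}$ only because $C_i \in M$ and $M \cap \{C_1,\ldots,C_{i-1},B_1,\ldots,B_k\} = \emptyset$. I will only need the first of these two conditions: since a rule with a single-literal head is satisfied by a set of literals as soon as that head belongs to the set, $M$ satisfies $C_i \leftarrow A_1,\ldots,A_m$ regardless of whether $\{A_1,\ldots,A_m\} \subseteq M$. As this holds for every rule of $P^M_{\times}$, we conclude that $M$ is a model of $P^M_{\times}$.

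I expect no genuine obstacle here; the statement is a bookkeeping consequence of the shape of the Brewka-reduct, and in fact the hypothesis that $M$ is a Brewka-model of $P$ is not used at all in the argument — any set of literals $M$ is a model of $P^M_{\times}$. (I presume the hypothesis is retained only to keep the statement parallel to the companion results, e.g.\ Proposition~\ref{model-is-also-brewka-model} and Lemma~\ref{model-satisfies-reduct}.) One could alternatively argue through the classical immediate-consequence operator for $P^M_{\times}$, but the direct head-membership observation above is the cleanest route and I would present it that way.
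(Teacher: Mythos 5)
Your proof is correct and follows essentially the same route as the paper's: the paper performs a per-rule case analysis of how $R^M_{\times}$ is formed, but its decisive step is likewise that any generated rule $C_r \leftarrow A_1,\ldots,A_m$ has its head $C_r \in M$ and is therefore satisfied. Your side remark is also accurate — the Brewka-model hypothesis is never genuinely used (the paper's proof nominally assumes $M$ satisfies $R$ but the conclusion rests only on head membership).
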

\begin{proof}
Consider any rule $R$ in $P$ of the form:
\[
  C_1 \times \cdots \times C_n \leftarrow A_1,\ldots,A_m,{\pnot B_1},\ldots,{\pnot B_k}
\]
and assume $M$ satisfies $R$. If there exists $B_i \in M$ for some $1\leq i \leq k$,
then no rule is created in the Brewka-reduct for $R$. Moreover, if for all
$i \leq n$, $C_i \not\in M$ then also no rule is created in the Brewka-reduct.
Assume therefore that $\{B_1,\ldots, B_k\}\cap M = \emptyset$ and there exists
$r \leq n$ such that $C_r \in M$ and $\{ C_1, \ldots, C_{r-1} \} \cap M = \emptyset$.
By the definition of $P^M_{\times}$ the only rule added to $P^M_{\times}$ because of $R$ is
$C_r \leftarrow A_1,\ldots,A_m$. Since $C_r \in M$ the rule is satisfied by $M$.
\end{proof}
\begin{proposition}\label{three-valued-same-collapse}
Let $P$ be an LPOD and let $M_1,M_2$ be three-valued answer sets of $P$ such that
$\mathit{collapse}(M_1)=\mathit{collapse}(M_2)$. Then, $M_1 = M_2$.
\end{proposition}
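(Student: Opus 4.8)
The statement to prove is Proposition~\ref{three-valued-same-collapse}: if $M_1,M_2$ are three-valued answer sets of an LPOD $P$ with $\mathit{collapse}(M_1)=\mathit{collapse}(M_2)$, then $M_1=M_2$.

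The plan is to show that a three-valued answer set is completely determined by its set of $T$-literals, so the only freedom is in which of the remaining literals get $F$ versus $F^*$. Write $N = \mathit{collapse}(M_1) = \mathit{collapse}(M_2)$. For every literal $L$ with $M_1(L)=T$ we also have $M_2(L)=T$ and vice versa, so the two interpretations can differ only on literals outside $N$, where each of $M_1,M_2$ assigns a value in $\{F,F^*\}$. I would argue that on these literals the value is forced.

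First I would examine the reduct. Fix a rule $R$ of $P$ with body $A_1,\ldots,A_m,\pnot B_1,\ldots,\pnot B_k$. Since $\mathit{collapse}(M_1)=\mathit{collapse}(M_2)=N$, we have $M_1(B_i)=T$ iff $B_i\in N$ iff $M_2(B_i)=T$; hence $R_{\times}^{M_1}=\emptyset$ exactly when $R_{\times}^{M_2}=\emptyset$. In the nonempty case, both reducts split the head ordered disjunction $C_1\times\cdots\times C_n$ at some index: $M_1$ at $r_1$ (the least index with $M_1(C_1)=\cdots=M_1(C_{r_1-1})=F^*$ and $r_1=n$ or $M_1(C_{r_1})\neq F^*$) and $M_2$ at $r_2$ similarly. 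The key observation is that for $j<\min(r_1,r_2)$ the literal $C_j$ is a fact-free literal that only ever appears with an $F^*$ in front of it, and I would like to conclude $r_1=r_2$, but this is exactly the kind of interdependency that needs care, so I would defer nailing it down and instead use a cleaner route via minimality.

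The cleaner route: use Lemma~\ref{answer-set-model-of-P} and Lemma~\ref{model-satisfies-reduct} together with the fact, established earlier, that a model of the reduct that is $\le$-below the answer set must equal it. Concretely, define $N^*$ to be the interpretation that assigns $T$ to every literal in $N$ and $F$ to every other literal. I would show $N^* \le M_1$ (trivially, since they agree on $T$-literals and $F\le F^*$), and similarly $N^*\le M_2$; then I would show $N^*$ itself is a model of $P_{\times}^{M_1}$: indeed, by Lemma~\ref{model-satisfies-reduct} $M_1$ is a model of $P_{\times}^{M_1}$, and the only clauses of the reduct that could fail under $N^*$ are those of the form $C_j\leftarrow F^*,A_1,\ldots,A_m$ with $N^*(A_1,\ldots,A_m)>F$ and $N^*(C_j)=F$; but $N^*(A_1,\ldots,A_m)>F$ forces all $A_i\in N$, hence $M_1(A_1,\ldots,A_m)=T$, hence (since $M_1$ satisfies this reduct clause) $M_1(C_j)\ge F^*$, and since the body $F^*\wedge A_1\wedge\cdots$ evaluates to $F^*$ under $N^*$ and the head is $C_j$, satisfaction needs only $N^*(C_j)\ge F^*$ — which fails only if we truly have $N^*(C_j)=F$. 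Here is where I would be careful: this suggests $N^*$ need not be a model, so I would instead run the argument the other way — take $M_1$ and $M_2$ and show $\min(M_1,M_2)$ (pointwise) is a model of $P_{\times}^{M_1}$ and of $P_{\times}^{M_2}$, whence by $\le$-leastness $\min(M_1,M_2)=M_1=M_2$.

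So the main line I would actually carry out: let $M=\min(M_1,M_2)$ pointwise. Since $M_1,M_2$ agree on $T$-literals, $M$ agrees with both there too, and on the remaining literals $M(L)\in\{F,F^*\}$ with $M\le M_1$ and $M\le M_2$. I would verify $M$ is a model of $P_{\times}^{M_1}$: take any reduct clause; if it is $C_r\leftarrow A_1,\ldots,A_m$, then using that $M_1,M_2$ are both models of their own reducts and that $M_1(A_i)=T\Rightarrow A_i\in N\Rightarrow M_2(A_i)=T$, one checks the required head value transfers to $M$; if it is $C_j\leftarrow F^*,A_1,\ldots,A_m$ ($j<r$ in the $M_1$-split), then $M_1(C_j)=F^*$, and I need $M(C_j)\ge\min\{F^*, M(A_1,\ldots,A_m)\}$, which holds provided $M(C_j)=F^*$ whenever $M(A_1,\ldots,A_m)>F$; but $M(A_1,\ldots,A_m)>F$ means all $A_i\in N$, so $M_2(A_1,\ldots,A_m)=T$, so $M_2$ satisfies its reduct and in particular $M_2(C_j)\ge F^*$, and combined with $M_1(C_j)=F^*$ we get $M(C_j)=\min(M_1(C_j),M_2(C_j))\ge F^*$, i.e. $=F^*$. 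This is the crux, and the only subtlety is matching up the split indices $r$ for $M_1$ versus $M_2$; I expect that the reduct clauses of $P_{\times}^{M_2}$ for the same rule involve the same head literals $C_j$ in the same positions and the argument is symmetric, so $M$ is also a model of $P_{\times}^{M_2}$. Then $M\le M_1$ and $M$ is a model of $P_{\times}^{M_1}$ force $M=M_1$ by $\le$-leastness of $M_1$; symmetrically $M=M_2$; hence $M_1=M_2$. The main obstacle, as flagged, is handling the clauses $C_j\leftarrow F^*,A_1,\ldots,A_m$ cleanly and confirming that the two reducts contribute compatible clauses for each rule — everything else is a routine case split.
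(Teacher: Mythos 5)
Your overall plan---take $M=\min(M_1,M_2)$ pointwise, show it is a model of $P_{\times}^{M_1}$ and of $P_{\times}^{M_2}$, and conclude $M=M_1=M_2$ from $\leq$-leastness---is sound in outline, and it differs from the paper's proof, which uses the same merged interpretation (agree where $M_1,M_2$ agree, $F$ elsewhere) but shows it is a model of $P$ itself and derives a contradiction with the $\preceq$-minimality of answer sets (Lemma~\ref{preceq-minimality-of-answer-sets}). However, the step you explicitly defer is a genuine gap, not a routine check: you need that for each rule the reduct $P_{\times}^{M_2}$ contains a clause with the same head $C_j$ as the clause of $P_{\times}^{M_1}$ you are inspecting, i.e.\ that the split indices $r_1$ and $r_2$ coincide. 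Before the proposition is proved this cannot be assumed: equal collapses allow, say, $M_1(C_1)=F^*$ while $M_2(C_1)=F$, in which case the rule contributes $C_1\leftarrow F^*,A_1,\ldots,A_m$ and $C_2\leftarrow A_1,\ldots,A_m$ to $P_{\times}^{M_1}$ but only $C_1\leftarrow A_1,\ldots,A_m$ to $P_{\times}^{M_2}$. Since your only source of the needed inequality $M_2(C_j)\geq F^*$ is that $M_2$ satisfies a clause of \emph{its own} reduct with head $C_j$, the crux of your argument rests on an unproved (and, as stated, unwarranted) alignment claim. A smaller slip: ``$M(A_1,\ldots,A_m)>F$ means all $A_i\in N$'' is false ($M(A_i)$ may be $F^*$ with $A_i\notin N$), though what you actually need is only $M_2(A_1,\ldots,A_m)\geq F^*$, which does follow from $M\leq M_2$.

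The gap can be closed: in the only case that matters, $M(A_1,\ldots,A_m)>F$, one can prove $r_1=r_2$. If $r_2<r_1$ (hence $r_2<n$), the clause $C_{r_2}\leftarrow A_1,\ldots,A_m$ of $P_{\times}^{M_2}$ together with $M_2(A_1,\ldots,A_m)\geq F^*$ forces $M_2(C_{r_2})\geq F^*$; by the definition of $r_2$ we have $M_2(C_{r_2})\neq F^*$, so $M_2(C_{r_2})=T$, hence $C_{r_2}\in N$ and $M_1(C_{r_2})=T$, contradicting $M_1(C_{r_2})=F^*$ which follows from $r_2<r_1$; the case $r_1<r_2$ is symmetric. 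With this auxiliary fact your case analysis goes through and $\leq$-leastness yields $M=M_1$ and $M=M_2$. Note that the paper sidesteps reduct alignment entirely: it shows the merged interpretation satisfies every rule of $P$ by reasoning directly about the values $M_1(C_1\times\cdots\times C_n)$ and $M_2(C_1\times\cdots\times C_n)$ (using Lemma~\ref{answer-set-model-of-P} and the equal collapses) and then contradicts $\preceq$-minimality; that route avoids the very alignment issue on which your proof currently hinges.
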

\begin{proof}
Assume, for the sake of contradiction, that $M_1 \neq M_2$. We define:
\[
    M(A) = \left\{ \begin{array}{ll}
                      M_1(A)  & \mbox{if $M_1(A) = M_2(A)$}\\
                      F        & \text{otherwise}
                    \end{array} \right.
\]
It is $M \prec M_1$ and $M \prec M_2$. We claim that $M$ is a model of $P$. This will lead to
contradiction because, by Lemma~\ref{preceq-minimality-of-answer-sets}, $M_1$ and $M_2$ are
$\preceq$-minimal models of $P$.

Consider any rule $R$ in $P$ of the form:
\[
  C_1 \times \cdots \times C_n \leftarrow A_1,\ldots,A_m,{\pnot B_1},\ldots,{\pnot B_k}
\]
If $M(B_i) = T$ for some $i$, $1 \leq i \leq k$, then $M$ satisfies the rule. Assume therefore
that $M(B_i)\neq T$ for all $i$, $1 \leq i \leq k$. We distinguish cases:

\vspace{0.2cm}
\noindent
{\em Case 1:} $M(A_1,\ldots,A_m) = F$. Then, obviously, $M$ satisfies $R$.

\vspace{0.2cm}
\noindent
{\em Case 2:} $M(A_1,\ldots,A_m) = F^*$. Then, $M_1(A_1,\ldots,A_m) = F^*$ and
$M_2(A_1,\ldots,A_m) = F^*$. Since, by Lemma~\ref{answer-set-model-of-P}, $M_1$
and $M_2$ are models of $P$ it follows that $M_1(C_1 \times \cdots \times C_n)
\geq F^*$ and $M_2(C_1 \times \cdots \times C_n) \geq F^*$. First assume that
$M_1(C_1 \times \cdots \times C_n) =  T$. This implies that there exists $1 \leq
r \leq n$ such that $M_1(C_r) = T$ and $M_1(C_i) = F^*$ for all $1 \leq i < r$.
Since, by assumption $\mathit{collapse}(M_1) = \mathit{collapse}(M_2)$ it
follows that $M_2(C_r) = T$ and therefore $M(C_r) = T$. Moreover, it must be
$M_2(C_i) = F^*$ for all $i < r$ because we have already established that
$M_2(C_1 \times \cdots \times C_n) \geq F^*$. Therefore, $M(C_i) = F^*$ and
$M(C_1 \times \cdots \times C_n) = T$ and $M$ satisfies the rule. Now assume
that $M_1(C_1 \times \cdots \times C_n) = F^*$. It is easy to see that the only
case is $M_1(C_i) = F^*$ for all $1 \leq i \leq n$. Since $M_2$ has the same
collapse with $M_1$ it follows that $M_2(C_i) \leq F^*$ and because $M_2(C_1
\times \cdots \times C_n) \geq F^*$ it also follows that $M_2(C_i) = F^*$. By
definition of $M$, $M(C_i) = F^*$ for all $1 \leq i \leq n$ and
$M(C_1 \times \cdots \times C_n) = F^*$.

\vspace{0.2cm}
\noindent
{\em Case 3:} $M(A_1,\ldots,A_m) = T$. Then, $M_1(A_1,\ldots,A_m) = T$ and
$M_2(A_1,\ldots,A_m) = T$ and therefore $M_1(C_1 \times \cdots \times C_n) = T$
and $M_2(C_1 \times \cdots \times C_n) = T$. This implies that there exists $r$
such that $M_1(C_1) = M_2(C_1) = F^*,\ldots,M_1(C_{r-1}) = M_2(C_{r-1}) = F^*$,
and $M_1(C_{r}) = M_2(C_{r}) = T$. Therefore, $M(C_1)=\cdots = M(C_{r-1}) = F^*$
and $M(C_r)=T$, which implies that $M(C_1 \times \cdots \times C_n)=T$, and
therefore $M$ satisfies $R$.
\end{proof}

\begin{relemma}{collapse-is-brewka-answerset}
Let $P$ be an LPOD and $M$ be a three-valued answer set of $P$. Then,
$\mathit{collapse}(M)$ is an answer set of $P$ according to
Definition~\ref{brewka-answerset}.
\end{relemma}

\begin{proof}
Since $M$ is an answer set of $P$, then by Lemma~\ref{answer-set-model-of-P}, $M$
is also a model of $P$. Moreover, by Proposition~\ref{model-is-also-brewka-model},
$N = \mathit{collapse(M)}$ is a Brewka-model of $P$. It also follows from
Proposition~\ref{brewka-model-of-reduct} that $N$ is a model of the Brewka-reduct $P^N$.
It suffices to show that $N$ is also the minimum model of $P^N$. Assume there exists
$N'$ that is a model of $P^N$ and $N' \subset N$. We define $M'$ as
\[
    M'(A) = \left\{ \begin{array}{ll}
                      F^*  & A \in N \text{ and } A \not\in N' \\
                      M(A) & \text{otherwise}
                    \end{array} \right.
\]
It is easy to see that $M' < M$. We will show that $M'$ is also model of $P^M_\times$
leading to contradiction because we assume that $M$ is the minimum model of $P^M_\times$.
Consider first a rule of the form $C_i \leftarrow F^*, A_1, \ldots, A_m$.
Since $M$ is an answer set of $P$ it must be $M(C_i) = F^*$. By the definition
of $M'$ it follows that $M'(C_i) \geq F^*$ and $M'$ satisfies the rule.
Now consider a rule of the form $C_r \leftarrow A_1, \ldots, A_m$. We
distinguish cases based on the value of $M(A_1, \ldots, A_m)$.

\vspace{0.2cm}
\noindent
\emph{Case 1:} $M(A_1, \ldots, A_m) = F$. Then, since $M' < M$ it is
$M'(A_1, \ldots, A_m) = F$ and the rule is trivially satisfied.

\vspace{0.2cm}
\noindent
\emph{Case 2:} $M(A_1, \ldots, A_m) = F^*$. Then, $M(A_i) \geq F^*$ and there exists
$A_i$ such that $M(A_i) = F^*$. It follows that $A_i \not\in N$ and therefore
$M'(A_i) = M(A_i) = F^*$. Moreover, by the construction of $M'$, for all $A_i$
we have $M'(A_i) \geq F^*$ and therefore $M'(A_1, \ldots, A_m) = F^*$.
Since $M$ is a model of $P^M_\times$, $M(C_r) \geq F^*$. Again, by the construction
of $M'$ we have $M'(C_r) \geq F^*$ and the rule is satisfied.

\vspace{0.2cm}
\noindent
\emph{Case 3:} $M(A_1, \ldots, A_m) = T$. By the construction of $P^M_\times$ the
rule $C_r \leftarrow A_1, \ldots, A_m$ is a result of a rule in $P$ of the form
\[
  C_1 \times \cdots \times C_r \times \cdots \times C_n \leftarrow A_1,\ldots,A_m,{\pnot B_1},\ldots,{\pnot B_k}
\]
and it must be $M(C_i) = F^*$ for all $i \leq r-1$ and $M(B_j) \leq F^*$ for all
$1 \leq j \leq k$. It follows that $\{C_1, \ldots, C_{r-1}\}\cap N = \emptyset$
and $\{B_1, \ldots, B_k\}\cap N = \emptyset$. Moreover, since $M$ is a model of
$P^M_\times$ we get that $M(C_r) = T$ and it follows that $C_r \in N$. By the
construction of the Brewka-reduct, there exists a rule
$C_r \leftarrow A_1, \ldots, A_m$ in $P^N$. We distinguish two cases.
If $\{ A_1, \ldots, A_m \}\subseteq N'$ then $C_r \in N'$ because $N'$ is a
model of $P^N$. It follows by the construction of $M'$ that
$M'(C_r) = M(C_r) = T$ and $M'$ satisfies the rule.
Otherwise, there exists $l$, $1 \leq l \leq m$ such that $A_l \not\in N'$. Notice also that
$\{ A_1, \ldots, A_m \} \subseteq N$, so $A_l \in N$. Therefore, $M'(A_l) = F^*$
and $M'(A_1, \ldots, A_m) \leq F^*$. Moreover, since $C_r \in N$, we have $M'(C_r) \geq F^*$
that satisfies the rule.
\end{proof}

\begin{relemma}{brewka-answerset-is-collapse}
Let $N$ be an answer set of $P$ according to Definition~\ref{brewka-answerset}.
There exists a unique three-valued interpretation $M$ such that $N =
\mathit{collapse}(M)$ and $M$ is a three-valued answer set of $P$.
\end{relemma}

\begin{proof}
We construct iteratively a set of literals that must have the value $F^*$ in $M$.
Let ${\cal F}^n$ be the sequence:
\begin{alignat*}{2}
  {\cal F}^0 &=  \emptyset \\
  {\cal F}^{n+1} &=  \{ C_j \mid && (C_1 \times \cdots \times C_n \leftarrow A_1, \ldots, A_m, \pnot B_1, \ldots \pnot B_k) \in P \\
             &               && \text{ and } \{ B_1, \ldots, B_k \} \cap N = \emptyset \\
             &               && \text{ and } \{ C_1, \ldots, C_j \} \cap N = \emptyset \\
             &               && \text{ and } \{ A_1, \ldots, A_m \} \subseteq N \cup {\cal F}^n \} \\
  {\cal F}^\omega &=   \rlap{$\cup_{n < \omega}{\cal F}^n$}
\end{alignat*}
We construct $M$ as
\[
    M(A) = \left\{  \begin{array}{ll}
                      F    & A \not\in N \text{ and } A \not\in {\cal F}^\omega \\
                      F^*  & A \not\in N \text{ and } A \in {\cal F}^\omega \\
                      T    & A \in N
                    \end{array} \right.
\]
First we prove that $M$ is a model of $P^M_\times$. Consider first any rule of
the form $C_i \leftarrow F^*, A_1, \ldots, A_m$. By the construction of
$P^M_\times$, such a rule exists because $M(C_i) = F^*$; therefore $M$
satisfies this rule.
Now consider any rule of the form $C_r \leftarrow A_1, \ldots, A_m$. Such a
rule was produced by a rule $R$ in $P$ of the form
\[
  C_1 \times \cdots \times C_r \times \cdots \times C_n \leftarrow
                      A_1, \ldots, A_n, \ldots, \pnot B_1, \ldots, \pnot B_k.
\]
By the construction of $P^M_\times$ it follows that $M(C_i) = F^*$ for all $i < r$.
Therefore $C_i \not\in N$ and also $C_i \in {\cal F}^\omega$ for all $i < r$.
Moreover, it must be $M(B_j) \leq F^*$ for all $1 \leq j \leq k$, so
$\{ B_1, \ldots, B_k \}\cap N = \emptyset$. We distinguish cases based on
the value of $M(A_1,\ldots, A_m)$.

\vspace{0.2cm}
\noindent
\emph{Case 1:}
If $M(A_1,\ldots, A_m) = F$ then the rule is trivially satisfied by $M$.

\vspace{0.2cm}
\noindent
\emph{Case 2:}
If $M(A_1, \ldots, A_m) = F^*$ then for some $A_i$, $M(A_i) = F^*$.
By the construction of $M$, it follows that $A_i \in {\cal F}^\omega$.
It follows by the definition of ${\cal F}^\omega$
that $C_r \in {\cal F}^\omega$ and therefore $M(C_r) \geq F^*$.

\vspace{0.2cm}
\noindent
\emph{Case 3:}
If $M(A_1, \ldots, A_m) = T$ then $\{A_1, \ldots, A_m\} \subseteq N$ and since
$N$ is an answer set according to Definition~\ref{brewka-answerset} it follows
that $N$ is a model of $P$. It follows that there exists a least $j \leq n$
such that $C_j \in N$. Since we have already established that for all $i < r$,
$C_i \not\in N$ it must be $r \leq j \leq n$. But, if $r < j$ then
$C_r \not\in N$ and by the construction of $M$ it must be $M(C_r) = F^*$.
If $M(C_r) = F^*$, then, by the construction of $P^M_\times$, the rule for $C_r$
should be of the form $C_r \leftarrow F^*, A_1, \ldots, A_m$. So, it must
$j = r$ and $C_r \in N$. Therefore, $M(C_r) = T$ and $M$ satisfies the rule.

Therefore, we have established that $M$ is a model of $P^{M}_{\times}$. It remains
to show that $M$ is the $\leq$-least model of $P^{M}_{\times}$. Assume now that there
exists $M'$ that is a model of $P^M_\times$ and $M' < M$.
Let $N' = \mathit{collapse}(M')$. We distinguish two cases.

\vspace{0.2cm}
\noindent
\emph{Case 1:} $N' = N$ and thus $M'$ differs from $M$ only on some atoms $C_r$
such that $M'(C_r) = F$ and $M(C_r) = F^*$. First, by the construction of $M$,
if $M(C_r) = F^*$ then $C_r \in {\cal F}^\omega$. We show by induction on $n$
that for every $C_r \in {\cal F}^n$, $M'(C_r) \geq F^*$. This leads to
contradiction and therefore $M$ is minimal.

\vspace{0.2cm}
\noindent
\emph{Induction base: $n=0$}: the statement is satisfied vacuously.

\vspace{0.2cm}
\noindent
\emph{Induction step: $n=n_0+1$}:
Every atom $C_r\in {\cal F}^{n_0+1}$ must occur in a head of a rule in $P$.
such that $\{C_1, \ldots, C_{r-1}\}\cap N = \emptyset$ and therefore
$\{ C_1, \ldots, C_r \} \subseteq {\cal F}^{n_0+1}$. It follows then
that $M(C_i) = F^*$ for $1 \leq i \leq r$. By the
construction of $P^M_\times$, for every atom $C_r \in {\cal F}^{n_0+1}$
there must be a rule in $P^M_\times$ either of the form
$C_r \leftarrow F^*, A_1, \ldots, A_m$ or of the form
$C_r \leftarrow A_1, \ldots, A_m$. Moreover, since $C_r \in {\cal F}^{n_0+1}$
it follows that
$\{ A_1, \ldots, A_m \} \subseteq N \cup {\cal F}^{n_0}$.
Therefore, by the induction hypothesis,
$M(A_1, \ldots, A_m) = M'(A_1, \ldots, A_m) \geq F^*$. Since $M'$ is also
a model of $P^M_\times$ it must satisfy those rules thus $M'(C_r) \geq F^*$.

\vspace{0.2cm}
\noindent
\emph{Case 2:} $N' \subset N$. We show that $N'$ is a model of $P^N$ leading to
contradiction because, by definition, $N$ is the minimum model of $P^N$.
Consider a rule $R$ of the form $C_r \leftarrow A_1, \ldots, A_m$ in $P^N$.
The rule $R$ has been produced by a rule in $P$ of the form:
\[
  C_1 \times \cdots \times C_r \times \cdots \times C_n \leftarrow
                                A_1,\ldots,A_m,{\pnot B_1},\ldots,{\pnot B_k}
\]
such that $\{ C_1, \ldots, C_{r-1} \}\cap N = \emptyset$ and $C_r \in N$.

If there exists $A_i \not\in N$ then also $A_i \not\in N'$ and the rule is
trivially satisfied by $N'$. Assume, on the other hand, that
$\{ A_1,\ldots, A_n \} \subseteq N$. It follows, by the
definition of $M$, that $M(A_1, \ldots, A_m) = T$, $M(C_i) = F^*$ for $i < r$
and $M(C_r) = T$. Therefore, there exist a rule in $P^M_\times$ of the form $C_r
\leftarrow A_1, \ldots, A_m$.
If $M'(A_1, \ldots, A_m) = F$ or $M'(A_1, \ldots, A_m) = F^*$ then there exists
$A_i \not\in N'$ and $N'$ again satisfies the rule.
If $M'(A_1, \ldots, A_m) = T$ then since $M'$ is a model of $P^M_\times$ it
follows that $M'(C_r) = T$. Since $N'$ is the collapse of $M'$ it is
$\{A_1, \ldots, A_m\} \subseteq N'$ and $C_r \in N'$. Therefore, $N'$ satisfies
the rule $R$ in $P^N$.

The uniqueness of $M$ follows directly from Proposition~\ref{three-valued-same-collapse}.
\end{proof}

\section{Proofs of Section~\ref{lpod-logical-characterization}}
In order to establish Theorem~\ref{logical-characterization-theorem}, we show two lemmas
(which essentially establish the left-to-right and the right-to-left directions of the
theorem, respectively).
\begin{lemma}\label{answer-set-implies-minimality}
Let $P$ be an LPOD program and let $M$ be an answer set of $P$. Then, $M$ is a
$\preceq$-minimal model of $P$ and $M$ is solid.
\end{lemma}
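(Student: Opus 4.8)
The plan is to deduce the three assertions — that $M$ is a (four-valued) model of $P$, that $M$ is solid, and that $M$ is $\preceq$-minimal — from the corresponding three-valued facts already established, chiefly Lemmas~\ref{answer-set-model-of-P} and~\ref{preceq-minimality-of-answer-sets}.

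Solidity is immediate: a three-valued answer set $M$ is by definition a function into $\{F,F^*,T\}$, hence never assumes the value $T^*$. Equally immediate is that $M$ is a four-valued model of $P$: the semantics of Definition~\ref{interpretation-and-semantics}, applied to an interpretation taking values only in $\{F,F^*,T\}$, produces values only in $\{F,F^*,T\}$ and there agrees with the three-valued semantics (the relevant order is the same total order on $\{F,F^*,T\}$, and $\min$, $\max$, $\times$ and $\pnot$ behave identically). Being a three-valued model of $P$ by Lemma~\ref{answer-set-model-of-P}, $M$ is therefore a four-valued model of $P$.

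The work lies in $\preceq$-minimality. Mirroring the proof of Lemma~\ref{preceq-minimality-of-answer-sets}, I would take an arbitrary four-valued model $N$ of $P$ with $N \preceq M$ and show $N=M$. First, $N\preceq M$ forces $N\leq M$: where $M(L)=F$ we get $N(L)=F$; where $M(L)=F^*$ we get $N(L)\in\{F,F^*\}$; where $M(L)=T$ we get $N(L)\in\{F,T^*,T\}$ — and in each case $N(L)\leq M(L)$ in the order $F<F^*<T^*<T$. Next I would show $N$ is a model of the reduct $P^M_{\times}$ (Definition~\ref{lpod-reduct}). Fix a rule $R$ of $P$ whose reduct is nonempty, so that $M(B_i)\neq T$ for every $i$; then $N(B_i)\leq F^*$ and $N(\pnot B_i)=T$, so the body of $R$ evaluates under $N$ to $a:=N(A_1,\ldots,A_m)$. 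If $a=F$, every reduct rule of $R$ has body $F$ under $N$ and is satisfied; otherwise $a\geq F^*$, so all $N(A_j)\geq F^*$, the rules $C_i\leftarrow F^*,A_1,\ldots,A_m$ (for $i<r$) have body $F^*$ under $N$, and $C_r\leftarrow A_1,\ldots,A_m$ has body $a$. Since $N$ is a model of $P$, $N(C_1\times\cdots\times C_n)\geq a\geq F^*$; let $j_0$ be the least index with $N(C_{j_0})\neq F^*$ (the degenerate case of no such index can only arise when $a=F^*$, whereupon all $N(C_i)=F^*$ and every reduct rule of $R$ is already satisfied). Then the $\times$-semantics gives $N(C_i)=F^*$ for $i<j_0$ and $N(C_{j_0})=N(C_1\times\cdots\times C_n)\geq a$. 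The crux is to prove $j_0=r$: for $i<j_0$ the equality $N(C_i)=F^*$ forces $M(C_i)=F^*$, because in the four-valued ordering of Definition~\ref{four-valued-preceq} we have $F^*\preceq v$ only for $v=F^*$; hence $i<r$ by the choice of $r$, so $j_0\leq r$; and if $j_0<r$ then $M(C_{j_0})=F^*$, forcing $N(C_{j_0})\leq F^*$, which contradicts $N(C_{j_0})\neq F^*$ together with $N(C_{j_0})\geq a\geq F^*$. Therefore $N(C_i)=F^*$ for $i<r$ and $N(C_r)\geq a$, so $N$ satisfies all reduct rules originating from $R$; as $R$ was arbitrary, $N$ is a model of $P^M_{\times}$. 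Finally, $M$ is the $\leq$-least model of $P^M_{\times}$ by Definition~\ref{answer-set}; since the bodies of reduct rules involve only the constant $F^*$ and positive literals, the iteration of $T_{P^M_{\times}}$ from the all-$F$ interpretation never introduces $T^*$, so its least fixpoint — namely $M$ — is also the $\leq$-least model of $P^M_{\times}$ among four-valued interpretations, giving $M\leq N$. With $N\leq M$ this yields $N=M$.

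The main obstacle is controlling the new value $T^*$ inside the ordered-disjunction head: one must exclude that $T^*$ appears among the $N(C_i)$ at an index earlier than $r$, which would make $N$ incompatible with the structure of the $M$-reduct. This is exactly what the $j_0=r$ argument secures, and it relies crucially on the lattice fact that $F^*$ is $\preceq$-incomparable with $T^*$ and $T$, so that $N(C_i)=F^*$ pins $M(C_i)$ down to $F^*$. The secondary subtlety — lifting $\leq$-leastness of $M$ over the reduct from the three-valued to the four-valued lattice — is handled by the observation that the reduct's rule bodies cannot generate $T^*$ along the fixpoint iteration.
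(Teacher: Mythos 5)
Your proof is correct, and it takes a genuinely different route from the paper's. The paper argues by contradiction: given a model $N \prec M$, it uses Lemma~\ref{preceq-minimality-of-answer-sets} to infer that $N$ cannot be solid, then rules out $N$ being a model of $P^M_{\times}$ by collapsing the $T^*$ values of $N$ to $F^*$ and contradicting the $\leq$-leastness of $M$ over the reduct, and finally shows that a violated reduct rule yields a violated rule of $P$. You instead lift the argument of Lemma~\ref{preceq-minimality-of-answer-sets} directly to the four-valued setting: any four-valued model $N \preceq M$ of $P$ is shown outright to be a model of $P^M_{\times}$, the new ingredient being the $j_0=r$ analysis, which exploits the fact that $F^*$ lies $\preceq$-below only itself to exclude a head literal with value $T^*$ or $T$ before position $r$; then $N=M$ follows from $N\leq M$ together with the claim that $M$ is $\leq$-least over the reduct even among four-valued interpretations. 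This buys a direct (non-contradiction) proof with no auxiliary interpretation $N'$, and it establishes minimality with respect to the non-strict relation. Two spots deserve tightening, though neither is a real gap: the inference ``$M(C_i)=F^*$, hence $i<r$'' should be phrased for the whole prefix --- it is because $C_1,\ldots,C_{j_0-1}$ all receive $F^*$ under $M$, together with $j_0\leq n$, that $j_0\leq r$ follows, which also covers the case $r=n$ with all head literals equal to $F^*$; and the final leastness claim tacitly uses that every four-valued model of $P^M_{\times}$ is a pre-fixpoint of $T_{P^M_{\times}}$, so that the least fixpoint (which never produces $T^*$ along the iteration and hence equals $M$) lies below it --- this is the same standard fact the paper invokes in the three-valued case, but it should be said explicitly; alternatively you could sidestep the fixpoint argument by replacing the $T^*$ values of $N$ with $F^*$, exactly the device used in the paper's own proof.
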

\begin{proof}
Since $M$ is an answer set of $P$, then, by Lemma~\ref{answer-set-model-of-P},
$M$ is a model of $P$. Moreover, $M$ is solid because our definition of answer
sets does not involve the value $T^*$. It remains to show that it is minimal
with respect to the $\preceq$~ordering. Assume, for the sake of contradiction,
that there exists a model $N$ of $P$ with $N \prec M$. By
Lemma~\ref{preceq-minimality-of-answer-sets}, $M$ is (three-valued)
$\preceq$-minimal. Therefore, $N$ can not be solid. We first show that $N$ can
not be a model of the reduct $P^{M}_{\times}$. Assume for the sake of
contradiction that $N$ is a model of $P^{M}_{\times}$. We construct the
following interpretation $N'$:
\[
    N'(A)  = \left\{ \begin{array}{ll}
                        F^*,  & \mbox{if $N(A)=T^*$} \\
                        N(A), & \mbox{otherwise}
                     \end{array}\right.
\]
We claim that $N'$ must also be a model of $P^{M}_{\times}$. Consider first a
rule of the form $C \leftarrow F^*,A_1,\ldots,A_m$. Since $N$ is a model of
$P^{M}_{\times}$, it is $N(C) \geq F^*$. By the definition of $N'$, it is
$N(C) \geq F^*$ and therefore $N'$ satisfies this rule. Consider now a rule of
the form $C \leftarrow A_1,\ldots,A_m$ in $P^{M}_{\times}$. We show that $N'$
also satisfies this rule. We perform a case analysis:

\vspace{0.2cm}
\noindent
{\em Case 1:} $N(A_1,\ldots,A_m) = F$. Then, $N'(A_1,\ldots,A_m) = F$ and $N'$
trivially satisfies the rule.

\vspace{0.2cm}
\noindent
{\em Case 2:} $N(A_1,\ldots,A_m) = F^*$. Then, $N'(A_1,\ldots,A_m) = F^*$.
Moreover, $N(C)\geq F^*$ because $N$ is a model of $P^{M}_{\times}$. By the
definition of $N'$, it is $N'(C) \geq F^*$, and therefore $N'$ satisfies the
rule.

\vspace{0.2cm}
\noindent
{\em Case 3:} $N(A_1,\ldots,A_m) = T^*$. Then, $N'(A_1,\ldots,A_m) = F^*$.
Moreover, $N(C)\geq T^*$ because $N$ is a model of $P^{M}_{\times}$. By the
definition of $N'$, it is $N'(C) \geq F^*$, and therefore $N'$ satisfies the
rule.

\vspace{0.2cm}
\noindent
{\em Case 4:} $N(A_1,\ldots,A_m) = T$. Then, $N'(A_1,\ldots,A_m) = T$. Moreover,
$N(C)=T$ because $N$ is a model of $P^{M}_{\times}$. By the definition of $N'$,
it is $N'(C)=T$, and therefore $N'$ satisfies the rule.

\vspace{0.2cm}
\noindent
Therefore, $N'$ must also be a model of $P^{M}_{\times}$. Moreover, by
definition, $N'$ is solid and $N' < M$. This contradicts the fact that, by
construction, $M$ is the $\leq$-least model of $P^{M}_{\times}$. In conclusion,
$N$ can not be a model of $P^{M}_{\times}$.


We now show that $N$ can not be a model of $P$. As we showed above, $N$ is not a
model of $P^{M}_{\times}$, and consequently there exists a rule in
$P^{M}_{\times}$ that is not satisfied by $N$. Such a rule in $P^{M}_{\times}$
must have resulted due to a rule $R$ of the following form in $P$:
\[
  C_1 \times \cdots \times C_n \leftarrow A_1,\ldots,A_m,{\pnot B_1},\ldots,{\pnot B_k}
\]
According to the definition of $P^{M}_{\times}$, for all $i$, $1\leq i \leq k$,
$M(\pnot B_i) = T$, and since $N \prec M$, it is also $N(\pnot B_i) = T$.
Moreover, there exists some $r\leq n$ such that $M(C_1) = \cdots =
M(C_{r-1})= F^*$ and either $r=n$ or $M(C_r) \neq F^*$. Since $N \prec M$, it is
$N(C_i) \leq F^*$ for all $i$, $1\leq i \leq r-1$. Consider now the rule that is
not satisfied by $N$ in $P^{M}_{\times}$. If it is of the form $C_i \leftarrow
F^*,A_1,\ldots,A_m$, $i$, $1\leq i \leq r-1$, then $N(A_1,\ldots,A_m) > F$ and
$N(C_i) = F$. This implies that $N(C_1 \times \cdots \times C_n) =F$ and
therefore $N$ does not satisfy the rule $R$. If the rule that is not satisfied
by $N$ in $P^{M}_{\times}$ is of the form $C_r \leftarrow A_1,\ldots,A_m$, then
$N(C_r) < N(A_1,\ldots,A_m)$ and therefore, since $N(C_i) \leq F^*$ for all $i$,
$1\leq i \leq r-1$, it is:
\[
  N(C_1 \times \cdots \times C_n) < N(A_1,\ldots,A_m,{\pnot B_1},\ldots,{\pnot B_k})
\]
Thus, $N$ is not a model of $P$.
\end{proof}

\begin{lemma}\label{minimality-implies-answerset}
Let $P$ be an LPOD program and let $M$ be a $\preceq$-minimal model of $P$ and
$M$ is solid. Then, $M$ is an answer set of $P$.
\end{lemma}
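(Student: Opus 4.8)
The plan is to show directly that, under the hypotheses (together with consistency of $M$, as in Theorem~\ref{logical-characterization-theorem}), $M$ is the $\leq$-least model of the reduct $P^M_{\times}$; by Definition~\ref{answer-set} this is exactly what it means for $M$ to be an answer set. Since $M$ is a model of $P$, Lemma~\ref{model-satisfies-reduct} already gives that $M$ is a model of $P^M_{\times}$, so only leastness is at issue. I would let $N$ be the $\leq$-least model of $P^M_{\times}$ (it exists by the Tarski argument given after Definition~\ref{answer-set}), so that $N\leq M$, assume for contradiction that $N\neq M$, and then derive that $N$ is itself a model of $P$ with $N\prec M$ --- contradicting the $\preceq$-minimality of $M$.

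The technical core is the claim that $M$ and $N$ make exactly the same literals true: $M(L)=T$ iff $N(L)=T$ for every literal $L$ of $P$. One direction is immediate from $N\leq M$. For the other, I would argue by contradiction: put $W=\{L\mid M(L)=T \text{ and } N(L)\neq T\}$, suppose $W\neq\emptyset$, and form $M_W$ by resetting every literal of $W$ from $T$ to $T^*$. Then $M_W\preceq M$ and $M_W\neq M$ (Definition~\ref{four-valued-preceq}; recall $T^*\prec T$), so by $\preceq$-minimality $M_W$ is not a model of $P$: some rule $R$, with head $C_1\times\cdots\times C_n$ and body $A_1,\ldots,A_m,\pnot B_1,\ldots,\pnot B_k$, is satisfied by $M$ but violated by $M_W$. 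Since $M$ and $M_W$ agree off $W$, and since $\pnot$ sends both $T$ and $T^*$ to $F$ (so $M$ and $M_W$ agree on every negative-body conjunct, and hence $M_W(\mathrm{body})\leq M(\mathrm{body})$), a short analysis of the head --- using that $M$ is solid, so $M(C_i)\in\{F,F^*,T\}$ --- forces the following: the head literal $C_r$ selected by the reduct for $R$ (the one with $M(C_i)=F^*$ for $i<r$) lies in $W$, $M(\mathrm{body})=T$, and no literal of $W$ occurs in the body. But then all of $A_1,\ldots,A_m$ are true in $M$ and outside $W$, hence true in $N$; and since $M(\pnot B_i)=T$ for all $i$, the rule $C_r\leftarrow A_1,\ldots,A_m$ belongs to $P^M_{\times}$, so $N(C_r)\geq N(A_1,\ldots,A_m)=T$, contradicting $C_r\in W$. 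Hence $W=\emptyset$.

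With the claim in hand the rest is routine. First, $N\preceq M$: they agree wherever $M$ is $T$; wherever $M$ is $F^*$ we have $N(L)\in\{F,F^*\}$, each $\preceq F^*$; and wherever $M$ is $F$ both are $F$. Since $N\neq M$, this yields $N\prec M$. Second, $N$ is a model of $P$: for a rule having some $\pnot B_i$ with $M(B_i)=T$, the claim gives $N(B_i)=T$, so its body is $F$ under $N$ and the rule holds; for every other rule the negative body evaluates to $T$ under $N$, and a case analysis on $N(A_1,\ldots,A_m)\in\{F,F^*,T\}$ --- essentially the one in the proof of Lemma~\ref{answer-set-model-of-P}, now using the claim to pin the selected head literal to the value $T$ and using the reduct rules $C_i\leftarrow F^*,A_1,\ldots,A_m$ to pin the earlier head literals to $F^*$ --- shows $N$ satisfies it. Thus $N$ is a model of $P$ with $N\prec M$, contradicting the $\preceq$-minimality of $M$; therefore $N=M$, that is, $M$ is the $\leq$-least model of $P^M_{\times}$ and hence an answer set of $P$.

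The step I expect to absorb essentially all of the work is the claim that $M$ and $N$ make the same literals true. It is the one place where both hypotheses on $M$ are genuinely used: solidity keeps $M$ away from the value $T^*$, which is what makes $M_W$ strictly $\prec M$ and keeps the head analysis clean, and $\preceq$-minimality is what $M_W$ is played against. The claim is also exactly what reconciles the two orderings involved --- the reduct's least model is taken with respect to $\leq$, whereas minimality of $M$ is with respect to $\preceq$, and $F^*\not\preceq T$ --- and what lets us handle the rules that drop out of $P^M_{\times}$ because their negative body is true in $M$. Everything else is a finite case check of the same shape as those already carried out for Lemmas~\ref{answer-set-model-of-P} and~\ref{answer-set-implies-minimality}.
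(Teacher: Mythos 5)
Your proof is correct, and while it shares the paper's overall skeleton---show that the $\leq$-least model $N$ of $P^{M}_{\times}$ must coincide with $M$ by otherwise exhibiting a model of $P$ strictly $\preceq$-below $M$---it handles the crucial case by a genuinely different construction. The paper splits on whether $M$ and $N$ disagree at a literal where $M$ takes the value $T$: in the easy case it argues that $N$ itself is a model of $P$, and in the hard case it builds a four-valued interpretation $N'$ out of $N$ (promoting to $T^*$ the literals where $M$ is $T$ but $N$ is lower, and to $F^*$ those where $M$ is $F^*$) and verifies by a four-way case analysis that $N'$ models $P$. You instead isolate the hard case into the claim that $M$ and $N$ make the same literals true, proved by perturbing $M$ rather than $N$: demote the disagreement set $W$ from $T$ to $T^*$, invoke $\preceq$-minimality of $M$ to obtain a violated rule, and close the argument with the single reduct rule $C_r\leftarrow A_1,\ldots,A_m$; once the claim is in place, only the analogue of the paper's easy case remains. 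Both routes use $T^*$ in exactly the same role (converting a disagreement at a $T$-valued literal into an interpretation strictly $\preceq$-below $M$) and are of comparable length; yours localizes the four-valued construction to one small claim and avoids checking modelhood of a doctored copy of $N$, at the price of the extra pass showing $N\prec M$ and that $N$ models $P$ afterwards. Two details worth making explicit in a full write-up: pinning $N(C_i)=F^*$ for $i<r$ requires not only the reduct rules $C_i\leftarrow F^*,A_1,\ldots,A_m$ (which give only the lower bound $N(C_i)\geq F^*$ when the body is at least $F^*$ under $N$) but also the upper bound $N(C_i)\leq M(C_i)=F^*$ coming from $N\leq M$; and in the subcase $N(A_1,\ldots,A_m)=F^*$ with $r=n$ the selected head literal need not be pinned to $T$ at all---there the head simply evaluates to $F^*$, which already suffices.
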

\begin{proof}
First observe that, by Lemma~\ref{model-satisfies-reduct}, $M$ is also a model
of $P^{M}_{\times}$. We demonstrate that $M$ is actually the $\leq$-least model
of $P^{M}_{\times}$. Assume, for the sake of contradiction, that $N$ is the
$\leq$-least model of $P^{M}_{\times}$. Then, $N$ will differ from $M$ in some
atoms $A$ such that $N(A) < M(A)$. We distinguish two cases. In the first case
all the atoms $A$ such that $N(A) < M(A)$ have $M(A) \leq F^*$. In the second
case there exist at least one atom $A$ such that $M(A) > F^*$.

In the first case it is easy to see that $N \prec M$. We demonstrate that $N$ is
also model of $P$ leading to contradiction since $M$ is $\preceq$-minimal.
Assume that $N$ is not a model of $P$. Then, there exists in $P$ a rule $R$ of
the form:
\[
  C_1 \times \cdots \times C_n \leftarrow A_1,\ldots,A_m,{\pnot B_1},\ldots,{\pnot B_k}
\]
such that $N(C_1 \times \cdots \times C_n) < N'(A_1,\ldots,A_m,{\pnot B_1},
\ldots,{\pnot B_k})$. Notice that this implies that $N({\pnot B_1},\ldots,{\pnot B_k})
= M({\pnot B_1},\ldots,{\pnot B_k}) = T$. Therefore,
$N({C_1 \times \cdots \times C_n}) < N(A_1,\ldots,A_m)$.
We distinguish cases based on the value of $N(A_1,\ldots,A_m)$:

\vspace{0.2cm}
\noindent
{\em Case 1:} $N(A_1,\ldots,A_m) = F$. This case leads immediately to
contradiction because $N$ trivially satisfies $R$.

\vspace{0.2cm}
\noindent
{\em Case 2:} $N(A_1,\ldots,A_m) > F$.
Then, $N(A_1,\ldots,A_m) = M(A_1,\ldots,A_m)$. Since $M$ is a model of $P$, it
is $M(C_1 \times \cdots \times C_n) \geq M(A_1,\ldots,A_m) > F$. This implies
that there exists some $r$, $1 \leq r \leq n$, such that $M(C_1) = \cdots =
M(C_{r-1}) = F^*$ and $M(C_r) \geq F^*$. By the definition of the reduct, the
rule $C_r \leftarrow A_1,\ldots,A_m$ exists in $P^{M}_{\times}$. Since $N$ is a
model of $P^{M}_{\times}$, we get that $N(C_r) > F$. Moreover, $N$ should also
satisfy the rules $C_i \leftarrow F^*, A_1,\ldots,A_m$ for $1 \leq i \leq r-1$.
Since $N(C_i) \leq M(C_i)$ and $N(C_r) = M(C_r)$ we get $N(C_1)= \cdots =
N(C_{r-1}) = F^*$ and $N(C_r) = M(C_r)$. Therefore $N(C_1 \times \cdots C_n) =
M(C_1 \times \cdots C_n)$ and $N(C_1 \times \cdots C_n) \geq N(A_1,\ldots,A_m)$
(contradiction).

In the second case we construct the following interpretation $N'$:
\[
    N'(A)  =  \left\{ \begin{array}{ll}
                        T^*,  & \mbox{if $M(A)=T$ and $N(A) \in \{F,F^*\}$}\\
                        F^*,  & \mbox{if $M(A) = F^*$}\\
                        N(A), & \mbox{otherwise}
                      \end{array}\right.
\]
It is easy to see that $N'\prec M$. We demonstrate that $N'$ is a model of $P$,
which will lead to a contradiction (since we have assumed that $M$ is
$\preceq$-minimal).

Assume $N'$ is not a model of $P$. Then, there exists in $P$ a rule $R$ of the
form:
\[
  C_1 \times \cdots \times C_n \leftarrow A_1,\ldots,A_m,{\pnot B_1},\ldots,{\pnot B_k}
\]
such that $N'(C_1 \times \cdots \times C_n) < N'(A_1,\ldots,A_m,
{\pnot B_1},\ldots,{\pnot B_k})$. Notice that this implies that
$N'({\pnot B_1},\ldots,{\pnot B_k}) = N({\pnot B_1},\ldots,{\pnot B_k}) =
M({\pnot B_1},\ldots,{\pnot B_k}) = T$. Therefore,
$N'(C_1 \times \cdots \times C_n) < N'(A_1,\ldots,A_m)$. We
distinguish cases based on the value of $N'(A_1,\ldots,A_m)$:
\vspace{0.2cm}
\noindent
{\em Case 1:} $N'(A_1,\ldots,A_m) = F$.
This case leads immediately to contradiction because $N'$ trivially satisfies
$R$.

\vspace{0.2cm}
\noindent
{\em Case 2:} $N'(A_1,\ldots,A_m) = F^*$.
Then, by the definition of $N'$, $M(A_1,\ldots,A_m) = F^*$. Since $M$ is a model
of $P$, it is $M(C_1 \times \cdots \times C_n) \geq F^*$. This implies that
either $M(C_1) = \cdots = M(C_{n}) = F^*$ or there exists $r\leq n$ such that
$M(C_1) = \cdots = M(C_{r-1}) = F^*$ and $M(C_r) = T$. By the definition of
$N'$, we get in both cases $N'(C_1 \times \cdots \times C_n) \geq F^*$
(contradiction).

\vspace{0.2cm}
\noindent
{\em Case 3:} $N'(A_1,\ldots,A_m) = T^*$.
Then, by the definition of $N'$, $M(A_1,\ldots,A_m) = T$. Since $M$ is a model
of $P$, it is $M(C_1 \times \cdots \times C_n) = T$. This implies that there
exists some $r$, $1 \leq r \leq n$, such that $M(C_1) = \cdots = M(C_{r-1}) =
F^*$ and $M(C_r) =T$. By the definition of $N'$, we get that $N'(C_1 \times
\cdots \times C_n) \geq T^*$ (contradiction).

\vspace{0.2cm}
\noindent
{\em Case 4:} $N'(A_1,\ldots,A_m) = T$.
Then, by the definition of $N'$, $N(A_1,\ldots,A_m) = T$ and $M(A_1,\ldots,A_m)
= T$. Since $M$ is a model of $P$, it is $M(C_1 \times \cdots \times C_n) = T$.
This implies that there exists some $r$, $1 \leq r \leq n$, such that $M(C_1) =
\cdots = M(C_{r-1}) = F^*$ and $M(C_r) =T$. By the definition of the reduct, the
rule $C_r \leftarrow A_1,\ldots,A_m$ exists in $P^{M}_{\times}$. Since $N$ is a
model of $P^{M}_{\times}$, we get that $N(C_r)=T$. Thus, $N'(C_1)= \cdots =
N'(C_{r-1}) = F^*$ and $N'(C_r) = T$, and therefore $N'(C_1 \times \cdots \times C_n) =
T$ (contradiction).
\end{proof}

\begin{retheorem}{logical-characterization-theorem}
Let $P$ be an LPOD. Then, $M$ is a three-valued answer set of $P$ iff $M$
is a consistent \mbox{$\preceq$-minimal} model of $P$ and $M$ is solid.
\end{retheorem}
\begin{proof}
Immediate from Lemma~\ref{answer-set-implies-minimality} and Lemma~\ref{minimality-implies-answerset}.
\end{proof}


\section{Proofs of Section~\ref{disjunctive-lpods}}\label{proofs-disjunctive-lpods}
\begin{relemma}{new-disjunctive-answersets-coincide}
Let $P$ be a consistent disjunctive extended logic program. Then, the answer sets of $P$
according to Definition~\ref{disjunctive-answer-set}, coincide with the
standard answer sets of $P$.
\end{relemma}
\begin{proof}
By taking $n=1$ in Definition~\ref{dlpod-reduct}, we get the standard definition
of reduct for consistent disjunctive extended logic programs.
\end{proof}

\begin{relemma}{disjunctive-answer-set-model-of-P}
Let $P$ be a DLPOD program and let $M$ be an answer set of $P$. Then, $M$ is a
model of $P$.
\end{relemma}
\begin{proof}
Consider any rule $R$ in $P$ of the form:
\[
  {\cal C}_1 \times \cdots \times {\cal C}_n \leftarrow A_1,\ldots,A_m,{\pnot B_1},\ldots,{\pnot B_k}
\]
If $R_{\times}^M = \emptyset$, then $M(B_i) = T$ for some $i$, $1 \leq i \leq
k$. But then, the body of the rule $R$ evaluates to $F$ under $M$, and therefore
$M$ satisfies $R$. Consider now the case where $R_{\times}^M$ is nonempty and
consists of the following rules:
\[
  \begin{array}{lll}
    {\cal C}_1     & \leftarrow & F^*,A_1,\ldots,A_m\\
                   & \cdots     &    \\
    {\cal C}_{r-1} & \leftarrow & F^*,A_1,\ldots,A_m\\
    {\cal C}_r     & \leftarrow & A_1,\ldots,A_m
  \end{array}
\]
We distinguish cases based on the value of $M(A_1,\ldots,A_m)$:

\vspace{0.2cm}
\noindent
{\em Case 1:} $M(A_1,\ldots,A_m) = F$.
Then, for some $i$, $M(A_i) = F$. Then, rule $R$ is trivially satisfied by $M$.

\vspace{0.2cm}
\noindent
{\em Case 2:} $M(A_1,\ldots,A_m) = F^*$.
This implies that $M({\cal C}_r) \geq F^*$. We distinguish two subcases. If
$r=n$ then $M({\cal C}_1 \times \cdots \times {\cal C}_n) = M({\cal C}_1 \times
\cdots \times {\cal C}_r) \geq F^*$ because, by the definition of $P_{\times}^M$
it is $M({\cal C}_1) = \cdots = M({\cal C}_{r-1})= F^*$ and we also know that
$M({\cal C}_r) \geq F^*$. Thus, in this subcase $M$ satisfies $R$. If $r<n$,
then by the definition of $P_{\times}^M$, $M({\cal C}_r) \neq F^*$; however, we
know that $M({\cal C}_r) \geq F^*$, and thus $M({\cal C}_r) = T$. Thus, in this
subcase $M$ also satisfies $R$.

\vspace{0.2cm}
\noindent
{\em Case 3:} $M(A_1,\ldots,A_m) = T$.
Then, for all $i$, $M(A_i) = T$. Since $M$ is a model of $P_{\times}^M$, we have
$M({\cal C}_r) = T$. Moreover, by the definition of $P_{\times}^M$,
$M({\cal C}_1) = \cdots = M({\cal C}_{r-1})= F^*$. This implies that
$M({\cal C}_1 \times \cdots \times {\cal C}_n) = T$.
\end{proof}

\begin{relemma}{disjunctive-model-satisfies-reduct}
Let $M$ be a model of a DLPOD $P$. Then, $M$ is a model of $P^M_{\times}$.
\end{relemma}
\begin{proof}
Consider any rule $R$ in $P$ of the form:
\[
  {\cal C}_1 \times \cdots \times {\cal C}_n \leftarrow A_1,\ldots,A_m,{\pnot B_1},\ldots,{\pnot B_k}
\]
and assume $M$ satisfies $R$. If $M(B_i) = T$ for some $i$, $1\leq i \leq k$,
then no rule is created in $P^M_{\times}$ for $R$. Assume therefore that
$M({\pnot B_1},\ldots,{\pnot B_k}) = T$. By the definition of $P^M_{\times}$ the
following rules have been added to $P^M_{\times}$:
\[
\begin{array}{lll}
          {\cal C}_1     & \leftarrow & F^*,A_1,\ldots,A_m\\
                         & \cdots     &    \\
          {\cal C}_{r-1} & \leftarrow & F^*,A_1,\ldots,A_m\\
          {\cal C}_r     & \leftarrow & A_1,\ldots,A_m
\end{array}
\]
where $r$ is the least index such that $M({\cal C}_1) = \cdots = M({\cal C}_{r-1})= F^*$
and either $r=n$ or $M({\cal C}_r) \neq F^*$. Obviously, the first $r-1$ rules above
are satisfied by $M$. For the rule ${\cal C}_r \leftarrow A_1,\ldots,A_m$ we distinguish
two cases based on the value of $M(A_1,\ldots,A_m)$. If $M(A_1,\ldots,A_m) = F$,
then, the rule is trivially satisfied. If $M(A_1,\ldots,A_m) > F$, then, since
rule $R$ is satisfied by $M$ and $M({\cal C}_r) \neq F^*$, it has to be $M({\cal C}_r)=T$.
Therefore, the rule ${\cal C}_r \leftarrow A_1,\ldots,A_m$ is satisfied by $M$.
\end{proof}

\begin{relemma}{disjunctive-preceq-minimality-of-answer-sets}
Every answer set $M$ of a DLPOD $P$, is a $\preceq$-minimal model of $P$.
\end{relemma}
\begin{proof}
Assume there exists a model $N$ of $P$ with $N \preceq M$. We will show that $N$
is also a model of $P_{\times}^M$. Since $N \preceq M$, we also have $N \leq M$.
Since $M$ is the $\leq$-least model of $P_{\times}^M$, we will conclude that
$N = M$.

Consider any rule $R$ in $P$ of the form:
\[
  {\cal C}_1 \times \cdots \times {\cal C}_n \leftarrow A_1,\ldots,A_m,{\pnot B_1},\ldots,{\pnot B_k}
\]
Assume that $R_{\times}^M$ is nonempty. This means that there exists some $r$,
$1 \leq r \leq n$, such that $M({\cal C}_1) = \cdots = M({\cal C}_{r-1})= F^*$
and either $r=n$ or $M({\cal C}_r) \neq F^*$. Then, $R_{\times}^M$ consists of
the following rules:
\[
  \begin{array}{lll}
    {\cal C}_1     & \leftarrow & F^*,A_1,\ldots,A_m\\
                   & \cdots     &    \\
    {\cal C}_{r-1} & \leftarrow & F^*,A_1,\ldots,A_m\\
    {\cal C}_r     & \leftarrow & A_1,\ldots,A_m
  \end{array}
\]
We show that $N$ satisfies the above rules. We distinguish cases based on the
value of $M(A_1,\ldots,A_m)$:

\vspace{0.2cm}
\noindent
{\em Case 1:} $M(A_1,\ldots,A_m) = F$.
Then, $N(A_1,\ldots,A_m) = F$ and the above rules are trivially satisfied by
$N$.

\vspace{0.2cm}
\noindent
{\em Case 2:} $M(A_1,\ldots,A_m) = F^*$.
Then, since $N\preceq M$, it is $N(A_1,\ldots,A_m) \leq  F^*$. If
$N(A_1,\ldots,A_m)= F$ then $N$ trivially satisfies all the above rules. Assume
therefore that $N(A_1,\ldots,A_m)=  F^*$. Recall now that $M({\cal C}_i) = F^*$
for all $i$, $1\leq i \leq r-1$. Moreover, it has to be $M({\cal C}_r) \geq
F^*$, because otherwise $M$ would not satisfy the rule $R$. Since $N \preceq M$,
it can only be $N({\cal C}_i) = F^*$ for all $i$, $1\leq i \leq r-1$ and
$N({\cal C}_r) \geq F^*$, because otherwise $N$ would not be a model of $P$.
Therefore, $N$ satisfies the given rules of $P^M_{\times}$.

\vspace{0.2cm}
\noindent
{\em Case 3:} $M(A_1,\ldots,A_m) = T$.
Then, since $N\preceq M$, it is either $N(A_1,\ldots,A_m)=F$ or
$N(A_1,\ldots,A_m)=T$. If $N(A_1,\ldots,A_m)= F$ then $N$ trivially satisfies
all the above rules. Assume therefore that $N(A_1,\ldots,A_m)=T$. Recall now
that $M({\cal C}_i) = F^*$ for all $i$, $1\leq i \leq r-1$. Moreover, it has to
be $M({\cal C}_r)= T$, because otherwise $M$ would not satisfy the rule $R$.
Since $N \preceq M$, it can only be $N({\cal C}_i) = F^*$ for all $i$, $1\leq i
\leq r-1$ and $N({\cal C}_r)=T$, because otherwise $N$ would not be a model of
$P$. Therefore, $N$ satisfies the given rules of $P^M_{\times}$.
\end{proof}
\begin{retheorem}{disjunctive-logical-characterization-theorem}
Let $P$ be a DLPOD. Then, $M$ is an answer set of $P$ iff $M$ is a
consistent \mbox{$\preceq$-minimal} model of $P$ and $M$ is solid.
\end{retheorem}
The proof of the above theorem follows directly by the following two lemmas.
\begin{lemma}\label{disjunctive-answerset-implies-minimality}
Let $P$ be an DLPOD and let $M$ be an answer set of $P$. Then, $M$ is a
consistent $\preceq$-minimal model of $P$ and $M$ is solid.
\end{lemma}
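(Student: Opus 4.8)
The plan is to mirror, almost verbatim, the proof of Lemma~\ref{answer-set-implies-minimality}, replacing the head literals $C_i$ of the LPOD case by the disjunctive head components ${\cal C}_i$ and invoking the DLPOD versions of the auxiliary lemmas. First I would dispose of the easy parts: $M$ is consistent by Definition~\ref{disjunctive-answer-set}; $M$ is a model of $P$ by Lemma~\ref{disjunctive-answer-set-model-of-P}; and $M$ is solid because a three-valued interpretation never takes the value $T^*$. The entire content is therefore $\preceq$-minimality with respect to the four-valued ordering.

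For $\preceq$-minimality I would argue by contradiction: suppose $N$ is a model of $P$ with $N \prec M$. Since $M$ is a three-valued answer set, Lemma~\ref{disjunctive-preceq-minimality-of-answer-sets} already rules out any solid model strictly below $M$, so $N$ must assign $T^*$ to at least one literal; note also that $N \prec M$ with $M$ three-valued pins the range of $N$ to $\{F,T^*\}$, which trivialises several later case splits. The first step is to show that $N$ cannot be a model of the reduct $P^M_\times$. Assuming it were, I would form $N'$ from $N$ by replacing every occurrence of $T^*$ with $F^*$, exactly as in the LPOD proof, and check that $N'$ still satisfies every rule of $P^M_\times$; this is a four-case analysis on the value under $N$ of the body $A_1,\ldots,A_m$ (cases $F,F^*,T^*,T$), the only new ingredient being that a disjunctive head ${\cal C}_i = D_1 \vee \cdots \vee D_p$ is evaluated by $\max$, so it retains value $T$ under $N'$ whenever it had a disjunct at $T$, and has value $\geq F^*$ under $N'$ whenever it had a disjunct at $T^*$ (now $F^*$). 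This produces a solid model $N'$ of $P^M_\times$ with $N' < M$, contradicting the fact that $M$ is a \emph{minimal} model of $P^M_\times$ (here the DLPOD argument is actually a touch simpler than the LPOD one, which had to contradict leastness rather than minimality).

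Hence some rule of $P^M_\times$ is violated by $N$; it was produced by a rule $R = {\cal C}_1 \times \cdots \times {\cal C}_n \leftarrow A_1,\ldots,A_m,\pnot B_1,\ldots,\pnot B_k$ of $P$ with $M(\pnot B_j) = T$ (so $N(\pnot B_j)=T$ and the body of $R$ under $N$ is $N(A_1,\ldots,A_m)$), with $M({\cal C}_i)=F^*$ for $i<r$ and $M({\cal C}_r)\neq F^*$ (or $r=n$), so that $N({\cal C}_i)\leq F^*$ for all $i<r$. If the violated reduct rule is ${\cal C}_i \leftarrow F^*,A_1,\ldots,A_m$ with $i<r$, then $N(A_1,\ldots,A_m)>F$ while $N({\cal C}_i)=F$, which forces $N({\cal C}_1\times\cdots\times{\cal C}_n)=F$ and hence $N$ violating $R$. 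If the violated rule is ${\cal C}_r \leftarrow A_1,\ldots,A_m$, then $N({\cal C}_r) < N(A_1,\ldots,A_m)$, and since $N({\cal C}_i)\leq F^*$ for $i<r$ this gives $N({\cal C}_1\times\cdots\times{\cal C}_n) < N(A_1,\ldots,A_m,\pnot B_1,\ldots,\pnot B_k)$, again contradicting that $N$ is a model of $P$. Either way the choice of $N$ is contradicted, so no model strictly $\prec M$ exists and $M$ is $\preceq$-minimal.

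The main obstacle — and it is more bookkeeping than difficulty — is the verification that the $T^*\mapsto F^*$ collapse $N'$ still satisfies the disjunctive reduct rules: one has to track carefully how $\max$ (disjunction) and the $\times$-chain interact with the collapse, and use that $N\prec M$ already forces $N$ into the range $\{F,T^*\}$. Apart from that, the argument is a transcription of the LPOD case with ${\cal C}_i$ in place of $C_i$, which is precisely why the paper can claim that the DLPOD results follow ``with minimal modifications''.
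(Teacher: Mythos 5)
Your proposal follows the paper's proof of this lemma essentially verbatim: consistency and solidity are immediate, modelhood of $P$ comes from Lemma~\ref{disjunctive-answer-set-model-of-P}, non-solidity of a hypothetical $N\prec M$ comes from Lemma~\ref{disjunctive-preceq-minimality-of-answer-sets}, and $\preceq$-minimality is then established by the same two-stage contradiction (the $T^*\mapsto F^*$ collapse $N'$ showing that $N$ cannot be a model of $P^M_{\times}$, and then a violated reduct rule showing that $N$ cannot be a model of $P$), with ${\cal C}_i$ in place of $C_i$ and the disjunctive heads evaluated by $\max$. Your observation that the first contradiction is with the \emph{minimality} of $M$ among the models of $P^M_{\times}$ (Definition~\ref{disjunctive-answer-set}) rather than with leastness is the right adaptation to the disjunctive reduct.

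One side remark in your text is false and must not be relied on: $N \prec M$ with $M$ three-valued does \emph{not} pin the range of $N$ to $\{F,T^*\}$. Under the reading of $\prec$ that the whole minimality argument requires (pointwise $\preceq$ together with $N \neq M$; the literal ``strictly smaller at every literal'' reading of Definition~\ref{four-valued-preceq} would make $N \prec M$ impossible as soon as $M$ assigns $F$ to some literal, and would break the other minimality proofs), one only gets: $N(L)=F$ where $M(L)=F$, $N(L)\in\{F,F^*\}$ where $M(L)=F^*$, and $N(L)\in\{F,T^*,T\}$ where $M(L)=T$. So $N$ may perfectly well take the values $F^*$ and $T$, and none of the case splits on $N(A_1,\ldots,A_m)$ may be discarded on that ground. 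Fortunately the remark is not load-bearing in your outline: you still carry out the full four-case analysis, and you explicitly allow head disjuncts to sit at $T$ under $N$ (which already contradicts the remark). Deleting it leaves exactly the paper's argument.
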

\begin{proof}
Since $M$ is an answer set of $P$, then, by
Lemma~\ref{disjunctive-answer-set-model-of-P}, $M$ is a model of $P$. Moreover,
$M$ is solid because our definition of answer sets does not involve the value
$T^*$. It remains to show that it is minimal with respect to the
$\preceq$~ordering. Assume, for the sake of contradiction, that there exists a
model $N$ of $P$ with $N \prec M$. By
Lemma~\ref{disjunctive-preceq-minimality-of-answer-sets}, $M$ is (three-valued)
$\preceq$-minimal. Therefore, $N$ can not be solid. We first show that $N$ can
not be a model of the reduct $P^{M}_{\times}$. Assume for the sake of
contradiction that $N$ is a model of $P^{M}_{\times}$. We construct the
following interpretation $N'$:
\[
    N'(A) = \left\{ \begin{array}{ll}
                      F^*,  & \mbox{if $N(A)=T^*$}\\
                      N(A), & \mbox{otherwise}
                    \end{array}\right.
\]
We claim that $N'$ must also be a model of $P^{M}_{\times}$. Consider first a
rule of the form ${\cal C} \leftarrow F^*,A_1,\ldots,A_m$. Since $N$ is a model
of $P^{M}_{\times}$, it is $N({\cal C}) \geq F^*$. By the definition of $N'$, it
is $N({\cal C}) \geq F^*$ and therefore $N'$ satisfies this rule. Consider now a
rule of the form ${\cal C} \leftarrow A_1,\ldots,A_m$ in $P^{M}_{\times}$. We
show that $N'$ also satisfies this rule. We perform a case analysis:

\vspace{0.2cm}
\noindent
{\em Case 1:} $N(A_1,\ldots,A_m) = F$.
Then, $N'(A_1,\ldots,A_m) = F$ and $N'$ trivially satisfies the rule.

\vspace{0.2cm}
\noindent
{\em Case 2:} $N(A_1,\ldots,A_m) = F^*$.
Then, $N'(A_1,\ldots,A_m) = F^*$. Moreover, $N({\cal C})\geq F^*$ because $N$ is
a model of $P^{M}_{\times}$. By the definition of $N'$, it is $N'({\cal C}) \geq
F^*$, and therefore $N'$ satisfies the rule.

\vspace{0.2cm}
\noindent
{\em Case 3:} $N(A_1,\ldots,A_m) = T^*$.
Then, $N'(A_1,\ldots,A_m) = F^*$. Moreover, $N({\cal C})\geq T^*$ because $N$ is
a model of $P^{M}_{\times}$. By the definition of $N'$, it is $N'({\cal C}) \geq
F^*$, and therefore $N'$ satisfies the rule.

\vspace{0.2cm}
\noindent
{\em Case 4:} $N(A_1,\ldots,A_m) = T$.
Then, $N'(A_1,\ldots,A_m) = T$. Moreover, $N({\cal C}) = T$ because $N$ is a
model of $P^{M}_{\times}$. By the definition of $N'$, it is $N'({\cal C}) = T$,
and therefore $N'$ satisfies the rule.

\vspace{0.2cm}
\noindent
Therefore, $N'$ must also be a model of $P^{M}_{\times}$. Moreover, by
definition, $N'$ is solid and $N' < M$. This contradicts the fact that, by
construction, $M$ is the $\leq$-least model of $P^{M}_{\times}$. In conclusion,
$N$ can not be a model of $P^{M}_{\times}$.

We now show that $N$ can not be a model of $P$. As we showed above, $N$ is not a
model of $P^{M}_{\times}$, and consequently there exists a rule in
$P^{M}_{\times}$ that is not satisfied by $N$. Such a rule in $P^{M}_{\times}$
must have resulted due to a rule $R$ of the following form in $P$:
\[
  {\cal C}_1 \times \cdots \times {\cal C}_n \leftarrow A_1,\ldots,A_m,{\pnot B_1},\ldots,{\pnot B_k}
\]
According to the definition of $P^{M}_{\times}$, for all $i$, $1\leq i \leq k$,
$M({\pnot B_i}) = T$, and since $N \prec M$, it is also $N({\pnot B_i}) = T$.
Moreover, there exists some $r\leq n$ such that
$M({\cal C}_1) = \cdots = M({\cal C}_{r-1})= F^*$ and either $r=n$ or
$M({\cal C}_r) \neq F^*$.
Since $N \prec M$, it is $N({\cal C}_i) \leq F^*$ for all $i$, $1\leq i \leq r-1$.
Consider now the rule that is not satisfied by $N$ in $P^{M}_{\times}$. If
it is of the form ${\cal C}_i \leftarrow F^*,A_1,\ldots,A_m$, $i$, $1\leq i \leq r-1$,
then $N(A_1,\ldots,A_m) > F$ and $N({\cal C}_i) = F$. This implies that
$N({\cal C}_1 \times \cdots \times {\cal C}_n) = F$ and therefore $N$ does not
satisfy the rule $R$. If the rule that is not satisfied by $N$ in
$P^{M}_{\times}$ is of the form ${\cal C}_r \leftarrow A_1,\ldots,A_m$, then
$N({\cal C}_r) < N(A_1,\ldots,A_m)$ and therefore, since $N({\cal C}_i) \leq
F^*$ for all $i$, $1 \leq i \leq r-1$, it is:
\[
  N({\cal C}_1 \times \cdots \times {\cal C}_n) < N(A_1,\ldots,A_m,{\pnot B_1},\ldots,{\pnot B_k})
\]
Thus, $N$ is not a model of $P$.
\end{proof}

\begin{lemma}\label{disjunctive-minimality-implies-answerset}
Let $P$ be an DLPOD and let $M$ be a consistent $\preceq$-minimal model of $P$ and
$M$ is solid. Then, $M$ is an answer set of $P$.
\end{lemma}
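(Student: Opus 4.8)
The plan is to follow the proof of Lemma~\ref{minimality-implies-answerset} essentially verbatim, with the single head literals $C_i$ replaced everywhere by the head disjuncts ${\cal C}_i$, and with ``$\leq$-least model'' weakened to ``minimal model'' (this is all Definition~\ref{disjunctive-answer-set} requires, and it is all the LPOD argument actually uses). First, by Lemma~\ref{disjunctive-model-satisfies-reduct}, $M$ is a model of $P^M_\times$, so it remains to prove that $M$ is a minimal model of $P^M_\times$. Suppose not: let $N$ be a model of $P^M_\times$ with $N \leq M$ and $N \neq M$, and call an atom $A$ \emph{differing} if $N(A) < M(A)$. Distinguish, as in the LPOD proof, the case where every differing atom $A$ has $M(A) \leq F^*$ (equivalently $M(A) = F^*$ and $N(A) = F$) from the case where some differing atom $A$ has $M(A) = T$ (the only other possibility, since $M$ is solid).

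In the first case $N$ is solid and $N \prec M$, and one shows $N$ is a model of $P$, contradicting the $\preceq$-minimality of $M$. If a rule $R$ with head ${\cal C}_1 \times \cdots \times {\cal C}_n$ were violated by $N$, then each $\pnot B_j$ is true under $N$, hence -- since $N$ and $M$ agree except where $M$ has value $F^*$ -- also under $M$, so the reduct rules ${\cal C}_i \leftarrow F^*,A_1,\ldots,A_m$ for $i<r$ and ${\cal C}_r \leftarrow A_1,\ldots,A_m$ lie in $P^M_\times$, with $r$ as in Definition~\ref{dlpod-reduct}; moreover $N(A_1,\ldots,A_m) = M(A_1,\ldots,A_m) > F$, since no $A_j$ can be a differing atom. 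From $N$ satisfying these reduct rules together with $N({\cal C}_i) \le M({\cal C}_i) = F^*$ we obtain $N({\cal C}_i) = F^*$ for $i<r$ and $N({\cal C}_r) \ge N(A_1,\ldots,A_m)$; the only point specific to disjunctive heads is that when $M({\cal C}_r) = T$ some literal of ${\cal C}_r$ has value $T$ under $M$, hence under $N$ (it is not a differing atom), so $N({\cal C}_r) = T$. Feeding these values into the definition of $I(\phi_1 \times \cdots \times \phi_n)$ yields $N({\cal C}_1 \times \cdots \times {\cal C}_n) \ge N(A_1,\ldots,A_m)$, a contradiction.

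In the second case, define $N'$ exactly as in Lemma~\ref{minimality-implies-answerset}: $N'(A) = T^*$ if $M(A) = T$ and $N(A) \in \{F,F^*\}$; $N'(A) = F^*$ if $M(A) = F^*$; and $N'(A) = N(A)$ otherwise. One checks $N \leq N' \leq M$, that $N'(A) \neq F^*$ whenever $M(A) = T$, and hence $N' \preceq M$; with $N' \neq M$ this gives $N' \prec M$, so it suffices to derive a contradiction by showing $N'$ is a model of $P$. Here the two facts that replace ``$N'(C_r) = M(C_r)$'' from the LPOD argument are: $M({\cal C}_i) = F^* \Rightarrow N'({\cal C}_i) = F^*$, and $M({\cal C}_i) = T \Rightarrow N'({\cal C}_i) \geq T^*$, with $N'({\cal C}_i) = T$ if in addition some literal of ${\cal C}_i$ has value $T$ under $N$; both are immediate from ``$\vee = \max$'' together with the literalwise definition of $N'$. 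Equipped with these, a case analysis on $N'(A_1,\ldots,A_m) \in \{F, F^*, T^*, T\}$ running parallel to Cases~1--4 of the proof of Lemma~\ref{minimality-implies-answerset} -- and using that $N$ is a model of $P^M_\times$ and $M$ a model of $P$ -- shows that every rule of $P$ is satisfied by $N'$.

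The genuinely routine ingredients are the computations of $\min$, $\max$ and $\times$ on the four truth values; the only substantive difference from the LPOD proof is that a head disjunct ${\cal C}_i$ is now a set of literals rather than a single one, so the step to get right is the pair of monotonicity statements in the second case (and its analogue in the first) -- namely that the value of ${\cal C}_i$ under $M$, $N$ and $N'$ is controlled by, and at the value $T$ witnessed by, the values of its literals. With those in hand the rest transfers unchanged.
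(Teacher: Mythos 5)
Your proposal is correct and takes essentially the same route as the paper's own proof, which transplants the LPOD argument verbatim to disjunctive heads: first $M$ is a model of $P^M_\times$ by Lemma~\ref{disjunctive-model-satisfies-reduct}, then the same two-case contradiction argument (a smaller model $N$ of the reduct, respectively the modified interpretation $N'$ with $T^*$ values) against the $\preceq$-minimality of $M$. If anything you are more explicit than the paper at the only genuinely disjunctive step, namely justifying $N(\mathcal{C}_r)=M(\mathcal{C}_r)$ (and the corresponding facts for $N'$) via a literal of $\mathcal{C}_r$ witnessing the value $T$, and your reading of Definition~\ref{disjunctive-answer-set} as requiring a \emph{minimal} rather than a $\leq$-least model of the reduct is the correct adaptation.
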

\begin{proof}
First observe that, by Lemma~\ref{disjunctive-model-satisfies-reduct}, $M$ is
also a model of $P^{M}_{\times}$. We demonstrate that $M$ is actually the
$\leq$-least model of $P^{M}_{\times}$. Assume, for the sake of contradiction,
that $N$ is the $\leq$-least model of $P^{M}_{\times}$. Then, $N$ will differ
from $M$ in some atoms $A$ such that $N(A) < M(A)$. We distinguish two cases. In
the first case all the atoms $A$ such that $N(A) < M(A)$ have $M(A) \leq F^*$.
In the second case there exist at least one atom $A$ such that $M(A) > F^*$.

In the first case it is easy to see that $N \prec M$. We demonstrate that $N$ is
also model of $P$ leading to contradiction since $M$ is $\preceq$-minimal. Assume
that $N$ is not a model of $P$. Then, there exists in $P$ a rule $R$ of the
form:
\[
  {\cal C}_1 \times \cdots \times {\cal C}_n \leftarrow A_1,\ldots,A_m,{\pnot B_1},\ldots,{\pnot B_k}
\]
such that $N({\cal C}_1 \times \cdots \times {\cal C}_n) < N'(A_1,\ldots,A_m,
{\pnot B_1},\ldots,{\pnot B_k})$. Notice that this implies that $N({\pnot B_1},
\ldots,{\pnot B_k}) = M({\pnot B_1},\ldots,{\pnot B_k}) = T$. Therefore,
$N({\cal C}_1 \times \cdots \times {\cal C}_n) < N(A_1,\ldots,A_m)$. We
distinguish cases based on the value of $N(A_1,\ldots,A_m)$:

\vspace{0.2cm}
\noindent
{\em Case 1:} $N(A_1,\ldots,A_m) = F$.
This case leads immediately to contradiction because $N$ trivially satisfies
$R$.

\vspace{0.2cm}
\noindent
{\em Case 2:} $N(A_1,\ldots,A_m) > F$.
Then, $N(A_1,\ldots,A_m) = M(A_1,\ldots,A_m)$. Since $M$ is a model of $P$, it
is $M({\cal C}_1 \times \cdots \times {\cal C}_n) \geq M(A_1,\ldots,A_m) > F$.
This implies that there exists some $r$, $1 \leq r \leq n$, such that
$M({\cal C}_1) = \cdots = M({\cal C}_{r-1}) = F^*$ and $M({\cal C}_r) \geq F^*$.
By the definition of the reduct, the rule ${\cal C}_r \leftarrow A_1,\ldots,A_m$
exists in $P^{M}_{\times}$. Since $N$ is a model of $P^{M}_{\times}$, we get
that $N({\cal C}_r) > F$. Moreover, $N$ should also satisfy the rules
${\cal C}_i \leftarrow F^*, A_1,\ldots,A_m$ for $1 \leq i \leq r-1$. Since
$N({\cal C}_i) \leq M({\cal C}_i)$ and $N({\cal C}_r) = M({\cal C}_r)$ we get
$N({\cal C}_1)= \cdots = N({\cal C}_{r-1}) = F^*$ and $N({\cal C}_r) = M({\cal C}_r)$.
Therefore $N({\cal C}_1 \times \cdots {\cal C}_n) = M({\cal C}_1 \times \cdots
{\cal C}_n)$ and $N({\cal C}_1 \times \cdots {\cal C}_n) \geq N(A_1,\ldots,A_m)$
(contradiction).

In the second case we construct the following interpretation $N'$:
\[
    N'(A) = \left\{ \begin{array}{ll}
                      T^*,  & \mbox{if $M(A)=T$ and $N(A) \in \{F,F^*\}$}\\
                      F^*,  & \mbox{if $M(A) = F^*$}\\
                      N(A), & \mbox{otherwise}
                    \end{array}\right.
\]
It is easy to see that $N'\prec M$. We demonstrate that $N'$ is a model of $P$,
which will lead to a contradiction (since we have assumed that $M$ is
$\preceq$-minimal).

Assume $N'$ is not a model of $P$. Then, there exists in $P$ a rule $R$ of the
form:
\[
  {\cal C}_1 \times \cdots \times {\cal C}_n \leftarrow A_1,\ldots,A_m,{\pnot B_1},\ldots,{\pnot B_k}
\]
such that
$N'({\cal C}_1 \times \cdots \times {\cal C}_n) < N'(A_1,\ldots,A_m,\pnot B_1,\ldots,\pnot B_k)$.
Notice that this implies that
$N'({\pnot B_1},\ldots,{\pnot B_k}) = N({\pnot B_1},\ldots,{\pnot B_k}) =
M({\pnot B_1},\ldots,{\pnot B_k}) = T$. Therefore,
$N'({\cal C}_1 \times \cdots \times {\cal C}_n) < N'(A_1,\ldots,A_m)$.
We distinguish cases based on the value of $N'(A_1,\ldots,A_m)$:

\vspace{0.2cm}
\noindent
{\em Case 1:} $N'(A_1,\ldots,A_m) = F$.
This case leads immediately to contradiction because $N'$ trivially satisfies
$R$.

\vspace{0.2cm}
\noindent
{\em Case 2:} $N'(A_1,\ldots,A_m) = F^*$.
Then, by the definition of $N'$, $M(A_1,\ldots,A_m) = F^*$. Since $M$ is a model
of $P$, it is $M({\cal C}_1 \times \cdots \times {\cal C}_n) \geq F^*$. This
implies that either $M({\cal C}_1) = \cdots = M({\cal C}_{n}) = F^*$ or there
exists $r\leq n$ such that $M({\cal C}_1) = \cdots = M({\cal C}_{r-1}) = F^*$
and $M({\cal C}_r) = T$. By the definition of $N'$, we get in both cases
$N'({\cal C}_1 \times \cdots \times {\cal C}_n) \geq F^*$ (contradiction).

\vspace{0.2cm}
\noindent
{\em Case 3:} $N'(A_1,\ldots,A_m) = T^*$.
Then, by the definition of $N'$, $M(A_1,\ldots,A_m) = T$. Since $M$ is a model
of $P$, it is $M({\cal C}_1 \times \cdots \times {\cal C}_n) = T$. This implies
that there exists some $r$, $1 \leq r \leq n$, such that $M({\cal C}_1) = \cdots
= M({\cal C}_{r-1}) = F^*$ and $M({\cal C}_r) =T$. By the definition of $N'$, we
get that $N'({\cal C}_1 \times \cdots \times {\cal C}_n) \geq T^*$
(contradiction).

\vspace{0.2cm}
\noindent
{\em Case 4:} $N'(A_1,\ldots,A_m) = T$.
Then, by the definition of $N'$, $N(A_1,\ldots,A_m) = T$ and $M(A_1,\ldots,A_m)
= T$. Since $M$ is a model of $P$, it is $M({\cal C}_1 \times \cdots \times
{\cal C}_n) = T$. This implies that there exists some $r$, $1 \leq r \leq n$,
such that $M({\cal C}_1) = \cdots = M({\cal C}_{r-1}) = F^*$ and $M({\cal C}_r)
=T$. By the definition of the reduct, the rule ${\cal C}_r \leftarrow
A_1,\ldots,A_m$ exists in $P^{M}_{\times}$. Since $N$ is a model of
$P^{M}_{\times}$, we get that $N({\cal C}_r)=T$. Thus, $N'({\cal C}_1)= \cdots =
N'({\cal C}_{r-1}) = F^*$ and $N'({\cal C}_r) = T$, and therefore $N'({\cal C}_1
\times \cdots \times {\cal C}_n) = T$ (contradiction).
\end{proof}


%

\ifreview
\clearpage
\input{response}
\fi

\fi

\end{document}